\documentclass[11pt]{article}
\usepackage[utf8]{inputenc}
\usepackage[T1]{fontenc}
\usepackage{amsthm, amsmath}
\usepackage{amsfonts}
\usepackage[margin=1in]{geometry}
\usepackage[algo2e,ruled,noend,resetcount,linesnumbered]{algorithm2e}
\usepackage{graphicx}
\usepackage{comment}
\usepackage{nicematrix}
\usepackage{tikz}
\usepackage[shortlabels]{enumitem}
\usepackage{thmtools}
\usepackage{thm-restate}
\usepackage{algorithm, algorithmicx, algpseudocode} 
\usepackage{array}
\usepackage{stmaryrd}
\usepackage[colorlinks,citecolor=blue,linkcolor=blue,urlcolor=red,pagebackref]{hyperref}
\usepackage[capitalise]{cleveref}
\usepackage{xspace}
\usepackage{caption}
\usepackage{subcaption}
\usepackage{derivative}

\theoremstyle{plain}
\newtheorem{theorem}{Theorem}[section]
\newtheorem{corollary}[theorem]{Corollary}

\newtheorem{lemma}[theorem]{Lemma}
\newtheorem{claim}[theorem]{Claim}

\newtheorem{hypothesis}[theorem]{Hypothesis}

\theoremstyle{definition}
\newtheorem{definition}[theorem]{Definition}

\theoremstyle{remark}

\newcommand{\cA}{\mathcal{A}}
\newcommand{\cB}{\mathcal{B}}
\newcommand{\cT}{\mathcal{T}}

\newcommand{\R}{\mathbb{R}}
\newcommand{\Z}{\mathbb{Z}}

\newcommand{\inv}{^{-1}}

\newcommand{\OV}{\textsf{OV}}

\newcommand{\sat}{\textsf{SAT}}
\newcommand{\TMUL}{\mathsf{T_{MUL}}}
\newcommand{\TATTC}{\mathsf{T_{ATTC}}}
\newcommand{\mip}{\mathsf{Max}\textup{-}\mathsf{IP}}

\newcommand{\SETH}{\textsf{SETH}}

\newcommand{\SET}[1]{\left\{#1\right\}}
\newcommand{\otherwise}{\mathrm{ o/w }}

\newcommand{\tO}[1]{\tilde{O}(#1)}
\newcommand{\poly}{\mathrm{ poly}}
\newcommand{\polylog}{\mathrm{ polylog}}
\newcommand{\bigO}[1]{O\left(#1\right)}
\newcommand{\bigTh}[1]{\Theta\left(#1\right)}
\newcommand{\bigtO}[1]{\tilde{O}\left(#1\right)}

\newcommand{\ptime}{\textbf{ P}}
\newcommand{\np}{\textbf{ NP}}

\newcommand*{\horzbar}{\rule[.5ex]{2.5ex}{0.5pt}}

\newcommand{\onevec}{\mathbf{1}}
\newcommand{\zerovec}{\mathbf{0}}
\newcommand{\transpose}{\top}

\newcommand{\diag}{\mathrm{diag}}

\newcommand{\rank}{\mathrm{rank}}

\newcommand{\norm}[1]{\left\lVert#1\right\rVert}

\newcommand{\att}{\mathrm{Attention}}
\newcommand{\attc}{\mathsf{AttC}}
\newcommand{\aattc}{\mathsf{AAttC}}
\newcommand{\aattlgc}{\mathsf{AAttLGC}}

\newcommand{\vecAttF}{\mathrm{VectorAttention}}
\newcommand{\appAttF}{\mathrm{ApproxAttention}}

\newcommand{\rsds}{\mathbf{RSDS}}

\newcommand{\relevant}{\mathrm{relevant}}
\newcommand{\irrelevant}{\mathrm{irrelevant}}

\newcommand{\entryBound}{B}
\newcommand{\polyBound}{W}

\title{Subquadratic Algorithms and Hardness for Attention with Any Temperature}
\author{Shreya Gupta\thanks{University of California, San Diego. The authors are partially supported by NSF grants 1652303, 1909046, 2112533, and HDR TRIPODS Phase II grant 2217058.} \and Boyang Huang\footnotemark[1] \and Barna Saha\footnotemark[1] \and  Yinzhan Xu\footnotemark[1] \and Christopher Ye\footnotemark[1]}

\begin{document}

\maketitle

\begin{abstract}

Despite the popularity of the Transformer architecture, the standard algorithm for computing Attention suffers from quadratic time complexity in context length $n$. Alman and Song [NeurIPS 2023] showed that when the head dimension $d = \Theta(\log n)$, subquadratic Attention is possible if and only if the inputs have small entries bounded by $B = o(\sqrt{\log n})$ in absolute values, under the Strong Exponential Time Hypothesis ($\mathsf{SETH}$). Equivalently, subquadratic Attention is possible if and only if the softmax is applied with high temperature for $d=\Theta(\log n)$. Running times of these algorithms depend exponentially on $B$ and thus they do not lead to even a polynomial-time algorithm outside the specific range of $B$. 

This naturally leads to the question: when can Attention be computed efficiently without strong assumptions on temperature? Are there fast attention algorithms that scale polylogarithmically with entry size $B$? In this work, we resolve this question and characterize when fast Attention for arbitrary temperatures is possible. First, for all constant $d = O(1)$, we give the first subquadratic $\tilde{O}(n^{2 - 1/d} \cdot \mathrm{polylog}(B))$ time algorithm for Attention with large $B$. Our result holds even for matrices with large head dimension if they have low rank. In this regime, we also give a similar running time for Attention gradient computation, and therefore for the full LLM training process. Furthermore, we show that any substantial improvement on our algorithm is unlikely. In particular, we show that even when $d = 2^{\Theta(\log^* n)}$,  Attention requires $n^{2 - o(1)}$ time under $\mathsf{SETH}$.

Finally, in the regime where $d = \mathrm{poly}(n)$, the standard algorithm requires $O(n^{2} d)$ time while previous lower bounds only ruled out algorithms with truly subquadratic time in $n$. We close this gap and show that the standard algorithm is optimal under popular fine-grained complexity assumptions. 

\end{abstract}

\section{Introduction}

Large Language Models powered by the Transformer architecture \cite{DBLP:conf/nips/VaswaniSPUJGKP17} have been at the heart of modern AI revolution completely reshaping the landscapes of natural language processing, computer vision, and multitude of other applications.
The Attention mechanism is the cornerstone of the Transformer architecture. Attention computes correlations between different tokens of the sequences, allowing Transformers to model dependencies regardless of the position of the tokens in the sequences.
Despite its popularity, standard algorithms for computing Attention require quadratic time complexity, as they compute the Attention matrix explicitly.

Formally, the Attention mechanism is defined as follows.
Let $Q \in \R^{n \times d}$ be the query matrix, $K \in \R^{n \times d}$ the key matrix, and $V \in \R^{n \times d}$ the value matrix.
We call $n$ the sequence or context length and $d$ the head dimension.
The Attention matrix is the $n \times n$ matrix obtained by applying softmax\footnote{Given a vector $x$, applying softmax to $x$ replaces $x_{i}$ with $\exp(x_{i}) / \sum_{j} \exp(x_{j})$.} to each row of $Q K^{\transpose}$.
Each entry in the matrix represents the attention weight between a particular input token (query token $Q$) and output token (key token $K$).
Finally, the Attention mechanism asks for the product of the Attention matrix with the value matrix $V$.

We give the formal definition below.
Note that $\exp(X)$ applies $\exp$ to each entry of a matrix $X$.

\begin{restatable}[Attention]{definition}{AttDefn}
\label{defn:attention}
    Given input matrices $Q, K, V \in \R^{n \times d}$,
    \emph{Attention} on $Q, K, V$ is defined $\att(Q, K, V) := D\inv A V \in \R^{n \times d}$ where $A := \exp(Q K^\top)$\footnote{In practice, a scaled dot-product attention, defined as $A := (QK^\top / \sqrt{d})$, is also commonly used for training efficiency \cite{DBLP:conf/nips/VaswaniSPUJGKP17}.} and $D := \diag(A \onevec)$.
\end{restatable}

In practice, there is an input $X \in \R^{n \times d}$ and weight matrices $W_{Q}, W_{K}, W_{V} \in \R^{d \times d}$ such that $Q = X W_{Q}, K = X W_{K}, V = X W_{V}$. 
Since $Q, K, V$ can be computed from $X, W_{Q}, W_{K}, W_{V}$ in $O(n d^{2})$ time, we assume for simplicity that the inputs $Q, K, V$ are given directly.

Typically, it suffices to \emph{approximately} perform Attention computations. 
In particular, it is not necessary (or even reasonable) to expect Attention to be computed exactly due to
the softmax operation.
Thus, we study Approximate Attention as defined in \cite{alman2023fastattentionboundedentries}.
Typically, it suffices to compute the result up to polynomial precision (i.e. inverse polynomial additive error). 

\begin{definition}[Approximate Attention Computation $\attc(n, d, B, \varepsilon)$]
\label[definition]{defn:attention-computation-problem}
    Given matrices $Q, K, V \in [-\entryBound, \entryBound]^{n \times d}$ and $B, \varepsilon > 0$, compute $O \in \R^{n \times d}$ such that $\norm{O - \att(Q, K, V)}_{\infty} < \varepsilon$.
\end{definition}

The standard (and most widely used) algorithm for Attention (even in approximate form) requires quadratic time.
The algorithm begins by explicitly computing matrix product $QK^{\transpose}$, applies softmax to obtain $D^{-1} A$ and then computes the matrix product $(D^{-1} A) V$.
Using standard matrix multiplication, this requires $O(n^{2} d)$ time.
Even ignoring computation time of matrix multiplication, explicitly computing the $A$ matrix already requires $O(n^{2})$ time.

However, the inputs (and outputs) only have size $O(n d)$.
Indeed, an algorithm that does not compute $A$ explicitly could compute Attention in $O(n d)$ time, incurring only \emph{linear} dependence on the context length $n$.
This leads to the fundamental question concerning the complexity of Attention.

\begin{center}
{\em Question 1: When can Attention be computed faster than $n^2 d$ time?}
\end{center}

Towards answering this question, \cite{alman2023fastattentionboundedentries} showed that for $d = \Theta(\log n)$, Attention can be computed in $n^{1 + o(1)}$ time whenever $B = o(\sqrt{\log n})$.
Furthermore, whenever $B = \Omega(\sqrt{\log n})$ and $d = \Theta(\log n)$, Attention requires $n^{2 - o(1)}$ time under $\SETH$, a popular hardness hypothesis.

Yet there remain several shortcomings in our current understanding of Attention.
Fast algorithms for Attention are only known for inputs with small entries (i.e.\ $B = o(\sqrt{\log n})$).
Such a strong bound on the entries of $Q, K$ essentially restricts the Attention mechanism to use softmax with high temperature (enforcing a near-uniform distribution on the key and query matrices).
Temperature, denoted by $T$, is a key hyperparameter for Attention that dictates how random the output is. 
Formally, Attention with temperature $T$ replaces $A := \exp(QK^{\transpose})$ with $A := \exp(QK^{\transpose}/T)$ so that high temperature corresponds to high entropy (more likely to select keys with lower scores).
In many tasks, temperature is a key hyperparameter with potentially significant impact on accuracy and stability~\cite{DBLP:journals/tmlr/AgarwalaSPD23, temperature2}.
In contrastive learning, temperature has been found to significantly impact both the accuracy \cite{DBLP:conf/icml/ChenK0H20, DBLP:conf/cvpr/WangL21a, DBLP:conf/cvpr/Hu00Q21} as well as the learned representations \cite{DBLP:conf/icml/WangIsola20, DBLP:conf/cvpr/WangL21a, DBLP:conf/nips/RobinsonSYBJS21} of the model.
Dynamically varying temperature throughout the training process can also help balance multiple training objectives \cite{DBLP:journals/tbbis/KhaertdinovAG22, DBLP:conf/iclr/KuklevaBSK023, MannaTemperature}.
In instances where low entropy is required, no subquadratic algorithms are known.

Furthermore, it is generally undesirable for the running time of an algorithm to scale poorly with the numerical values of the input.
In fact for many fundamental problems (Knapsack, All-Pairs Shortest Paths, 3-SUM), having small entries makes the problems much easier.
For example, there is a simple pseudo-polynomial time dynamic programming algorithm for Knapsack, while designing a polynomial time algorithm for Knapsack is NP-complete.\footnote{An algorithm runs in pseudo-polynomial time if its running time is polynomial in the numerical value of the input. A polynomial time algorithm must be polynomial in the length of the input.}
Therefore, in this work we study algorithms for Attention that scale polynomially with the \emph{representation length} of the entries.
Equivalently, the algorithm should scale polylogarithmically with the entry size $B$.

Currently, the only known algorithms for Attention beyond the standard $O(n^2 d)$ algorithm is Alman and Song's algorithm \cite{alman2023fastattentionboundedentries} which scales \emph{exponentially} with the entry size $B$.
Following the terminology of pseudo-polynomial time, we will call an algorithm that is subquadratic but scaling polynomially (or worse) with the numerical value of the input pseudo-subquadratic.
We call an algorithm that is subquadratic and scales logarithmically with the numerical value of the inputs (non-pseudo-)subquadratic, or simply subquadratic.
Following from our above discussion, the question of whether (non-pseudo-)subquadratic algorithms for Attention exist remains open.\footnote{Similarly, while there are pseudo-subcubic algorithms for APSP (e.g., \cite{DBLP:conf/focs/ShoshanZ99,DBLP:journals/jacm/Zwick02}), there is no truly subcubic ($O(n^{3-c})$ for some $c > 0$) algorithm.}
Even if $d = O(1)$, there is a tantalizing gap between the $O(n^2)$ upper bound and the $\Omega(n)$ lower bound.

\begin{center}
{\em Question 2: Is there a truly (non-pseudo-)subquadratic algorithm for Attention?\footnote{An algorithm runs in truly subquadratic time if it runs in $O(n^{2 - c})$ time for some $c > 0$}}
\end{center}

In our work, we resolve this question for almost all regimes of head dimension $d$, including the setting $d = \Theta(\log n)$ previously studied \cite{alman2023fastattentionboundedentries}.
Our first result makes progress towards resolving this question by developing a fast algorithm whenever the head dimension is small.
In particular, we show that there is an algorithm for Attention in truly sub-quadratic time for constant $d$.\footnote{We use $\tO{\cdot}$ notation to suppress polylogarithmic factors.} 

\begin{restatable}{theorem}{ConstantDAttentionAlg}
    \label{m-thm:const-d-att-alg}
    Let $d = O(1)$.
    There is an algorithm that computes $\attc(n, d, \entryBound, \varepsilon)$ in $\tO{n^{2 - 1/d} \cdot \polylog(\entryBound/\varepsilon)}$ time.
\end{restatable}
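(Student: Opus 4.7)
The plan is to reduce Attention computation to a polylogarithmic number of polynomial-weighted halfspace range aggregation queries per row, each answerable in $\tO{n^{1-1/d}}$ time by Matoušek-style partition trees in fixed dimension $d$. For each query $q_i$, by softmax shift-invariance we may work with shifted scores $\sigma_{ij}:=\langle q_i, k_j\rangle - s_i$, where $s_i$ is a $(1/2)$-approximation to $\max_j \langle q_i, k_j\rangle$, computable by $O(\log(nB))$ halfspace-emptiness queries in total time $\tO{n^{1-1/d}}$; thus $\sigma_{ij}\in[-2dB^2, 0]$. Our task becomes computing $N_i:=\sum_j e^{\sigma_{ij}}v_j$ and $D_i:=\sum_j e^{\sigma_{ij}}\ge 1$; since $\|N_i\|_\infty\le nB$, standard error propagation through the division $N_i/D_i$ shows it suffices to compute $N_i, D_i$ to additive error $\varepsilon':=\varepsilon/\poly(n,B)$, and $\log(1/\varepsilon')=\polylog(nB/\varepsilon)$.

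Next, truncate and decompose in scale. Terms with $\sigma_{ij}<-L$ for $L:=\Theta(\log(nB/\varepsilon'))$ contribute at most $\varepsilon'$ in total and are discarded. Partition the active range $[-L,0)$ into $L$ unit-length slabs $I_\ell=[-\ell-1,-\ell)$, $\ell=0,\dots,L-1$. On each $I_\ell$, approximate $\exp$ uniformly by a Chebyshev polynomial $p_\ell$ of degree $T=O(\log(nB/\varepsilon'))=\polylog(nB/\varepsilon)$ with approximation error at most $e^{-\ell}\cdot\varepsilon'/(nB)$; summing across slabs and $\le n$ terms per slab keeps the total approximation error in $N_i, D_i$ at $O(\varepsilon')$. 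For each key $k_j$ falling in slab $\ell$, expand $p_\ell(\langle q_i, k_j\rangle - s_i)$ as a polynomial in the $d$ coordinates of $k_j$: since $d=O(1)$ and $\deg p_\ell\le T$, this produces $O(T^d)$ monomials $k_j^\alpha$ whose coefficients depend only on $q_i, s_i, \ell$. The slab's contribution to $N_i$ is then a fixed linear combination of the weighted moments
\[
    M_\alpha(i,\ell)\;:=\;\sum_{j:\,\sigma_{ij}\in I_\ell} k_j^\alpha\, v_j,
\]
and analogously for $D_i$ without the $v_j$.

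Each $M_\alpha(i,\ell)$ is a slab (difference of two parallel halfspace) sum query over the keys with a fixed monomial weight. Preprocess one Matoušek partition tree per monomial weight $k^\alpha v$; there are $O(d\cdot T^d)=\polylog(B/\varepsilon)$ such structures, each built in $\tO{n}$ time and answering halfspace sum queries in $\tO{n^{1-1/d}}$ time. Per query $q_i$ we issue $O(L\cdot T^d)=\polylog(B/\varepsilon)$ halfspace queries, so the per-query cost is $\tO{n^{1-1/d}\polylog(B/\varepsilon)}$ and the overall cost is $\tO{n^{2-1/d}\polylog(B/\varepsilon)}$, as claimed.

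The main obstacle is the error analysis. Because $\exp$ is so steep, a naive uniform discretization of the score range forces pseudopolynomial (not polylogarithmic) scaling in $B/\varepsilon$. The fix is the two-pronged shift-and-decompose strategy: (i)~the geometric decay $e^{-\ell}$ caps the number of ``active'' slabs at $O(\log(nB/\varepsilon))$ once we shift by $s_i$, and (ii)~within each unit-width slab $\exp$ is well-conditioned enough that a Chebyshev polynomial of degree $O(\log(nB/\varepsilon))$ already achieves the needed per-slab accuracy, whose $d$-variable expansion fits into only $\polylog(B/\varepsilon)$ monomials when $d=O(1)$. A secondary technical point is bit-complexity: monomial moments can grow to $B^{\polylog}=2^{\polylog(B)}$, but arithmetic on such numbers costs only $\polylog(B)$ per operation, which is absorbed into the $\tO{\cdot}$.
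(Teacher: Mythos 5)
Your proposal is correct and follows the same overall strategy as the paper: shift scores per row to confine the contributing range to width $O(\log(nB/\varepsilon))$ (found by binary search via halfspace-emptiness queries over the $\rsds$ data structure), discard the exponentially small tail, approximate $\exp$ by low-degree polynomials, expand into $\polylog(B/\varepsilon)$ monomials in the coordinates of $k_j$, and reduce everything to weighted halfspace (or slab) aggregation queries answered in $\tO{n^{1-1/d}}$ time by Matou\v{s}ek's partition trees. Where you diverge is the polynomial-approximation subroutine: the paper invokes the Aggarwal--Alman relative-error polynomial (\Cref{lem:poly-mult-approx}) of degree $O(\log(n/\varepsilon))$ over the entire shifted interval $[0, 2\log(n/\varepsilon)]$ and hence issues one halfspace-weighted-sum query per monomial, whereas you split $[-L,0)$ into $L = O(\log(nB/\varepsilon))$ unit slabs with a separate Chebyshev polynomial on each, reaching the same polylog query count via elementary uniform approximation theory. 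Both routes work and yield $\tO{n^{2-1/d}\polylog(B/\varepsilon)}$; your version is more self-contained (no specialized relative-error lemma) at the cost of an extra $L$ factor in the number of range queries and slightly more delicate per-slab error bookkeeping, while the paper's version is cleaner because, once scores are shifted, a single relative-error polynomial on the whole range already suffices and the slab decomposition is unnecessary. One small slip: since your $s_i$ is a $1/2$-additive under-approximation of the true max, $\sigma_{ij}$ may land slightly above $0$, so you should either take $s_i$ to be a slight overestimate or include an extra slab $[0,1)$.
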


The result also generalizes to the case where the matrices $Q, K$ have low rank. 

\begin{restatable}{theorem}{ConstantRankAttentionAlg}
    \label{thm:low-rank-attention-alg}
    Let $r = O(1)$.
    There is an $\bigtO{n d + n^{2 - 1/r} \cdot \polylog(\entryBound/\varepsilon)}$ time algorithm computing $\attc(n, d, \entryBound, \varepsilon)$ where $r = \min(\rank(Q), \rank(K))$.
\end{restatable}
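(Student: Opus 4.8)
The plan is to reduce the low-rank case to the bounded--head-dimension algorithm of \Cref{m-thm:const-d-att-alg}. Suppose $\rank(Q) = r$ (the case $\rank(K) = r$ is symmetric, factoring $K$ instead of $Q$ below). First I would compute, via (modified) Gram--Schmidt on the $n$ rows of $Q$, a matrix $B_Q \in \R^{r \times d}$ whose $r$ orthonormal rows span the row space of $Q$; for $r = O(1)$ this takes $\tO{ndr} = \tO{nd}$ time. Set $Q_1 := Q B_Q^\transpose \in \R^{n \times r}$ and $K_1 := K B_Q^\transpose \in \R^{n \times r}$. Since $B_Q^\transpose B_Q$ is the orthogonal projection of $\R^d$ onto the row space of $Q$, and the rows of $Q$ lie in that space,
\begin{equation*}
    Q_1 K_1^\transpose = Q\,(B_Q^\transpose B_Q)\,K^\transpose = Q K^\transpose,
\end{equation*}
so $Q_1$ and $K_1$ induce exactly the same Attention matrix. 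Moreover every entry of $Q_1$ (resp.\ $K_1$) is the inner product of a row of $Q$ (resp.\ $K$), which has $\ell_2$ norm at most $\sqrt d\,\entryBound$, with a unit vector, hence lies in $[-\sqrt d\,\entryBound, \sqrt d\,\entryBound]$. Thus it suffices to solve $\attc(n, d, O(\sqrt d\,\entryBound), \varepsilon)$ on the instance $(Q_1, K_1, V)$, in which the query and key matrices have width $r = O(1)$ --- essentially the setting of \Cref{m-thm:const-d-att-alg}, except that $V$ still has width $d$, which we return to below.

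The reduction needs some numerical care, since $B_Q$ --- and hence $Q_1, K_1$ --- can only be computed approximately. I would argue it is enough to compute them to $O(\log(n\entryBound/\varepsilon))$ bits of precision. If the computed $\widetilde{Q_1 K_1^\transpose}$ agrees with $QK^\transpose$ entrywise up to $\delta$, then $\widetilde A := \exp(\widetilde{Q_1 K_1^\transpose})$ agrees with $A$ entrywise up to a factor in $[e^{-\delta}, e^{\delta}] \subseteq [1-\delta, 1+2\delta]$; since each row of $D\inv A$ is a probability vector and each row of $D\inv A V$ is a convex combination of the rows of $V \in [-\entryBound, \entryBound]^{n \times d}$, perturbing the weights by a $(1 \pm O(\delta))$ factor changes the output by $O(\delta\entryBound)$ in $\infnorm{\cdot}$. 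Taking $\delta = \Theta(\varepsilon/\entryBound)$ and running the algorithm of \Cref{m-thm:const-d-att-alg} with accuracy $\varepsilon/2$ then keeps the total error below $\varepsilon$. This precision is affordable inside $\tO{nd}$ time because, with $r = O(1)$ and inputs of bounded bit-complexity, all relevant condition numbers are $\mathrm{poly}(n, d, \entryBound)$ --- the nonzero singular values of $Q$ lie in $[1/\mathrm{poly}(n,d,\entryBound), \mathrm{poly}(n,d,\entryBound)]$ by a standard determinant bound --- so a stable Gram--Schmidt/QR suffices. (Also $\polylog(\sqrt d\,\entryBound/\varepsilon) = \polylog(\entryBound/\varepsilon)$ up to factors absorbed by $\tO{\cdot}$.)

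The main obstacle is exactly the width of $V$: \Cref{m-thm:const-d-att-alg} is stated for $Q, K, V$ all of width $O(1)$, but here $V \in \R^{n \times d}$ may have width $d = \mathrm{poly}(n)$, and running its value-multiplication step once per column of $V$ would cost $\tO{n^{2-1/r} d}$, exceeding the target $\tO{nd + n^{2-1/r}\polylog(\entryBound/\varepsilon)}$. To handle this I would open up the algorithm of \Cref{m-thm:const-d-att-alg} and extract the structural fact that, after a $\tO{n^{2-1/r}\polylog(\entryBound/\varepsilon)}$-time preprocessing on $Q_1, K_1$ alone, it can apply (an additive-$\varepsilon$ approximation of) $D\inv A$ to \emph{any} given $V \in [-\entryBound,\entryBound]^{n\times d}$ in $\tO{nd}$ further time --- concretely because each output row $i$ of $D\inv A V$ is a $\polylog$-term linear combination of subset-sums $\sum_{j \in S} v_j$ over canonical sets $S$ from a fixed decomposition of $[n]$ of total size $\tO{n}$, which can be precomputed over all columns of $V$ bottom-up in $\tO{nd}$ time and then read off per query in $O(d\,\polylog)$ time (the normalizers $D_{ii}$ being obtained from the same recipe with $V$ replaced by $\onevec$). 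Summing the two phases yields $\tO{nd + n^{2-1/r}\polylog(\entryBound/\varepsilon)}$. The genuinely delicate step is this last one: verifying that the internals of \Cref{m-thm:const-d-att-alg} can be made oblivious to $V$ in this way, so that the large head dimension contributes only an additive $\tO{nd}$ rather than a multiplicative factor.
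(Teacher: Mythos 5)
Your high-level route is the same as the paper's: compress $Q,K$ to $n\times r$ matrices $Q_1, K_1$ with $Q_1K_1^\transpose = QK^\transpose$ and then invoke \Cref{m-thm:const-d-att-alg}. The paper instead factors both matrices via \Cref{lemma:svd} as $Q = U_Q V_Q^\transpose$, $K = U_K V_K^\transpose$, and (assuming $r_Q \le r_K$) sets $Q' = U_Q$ and $K'^\transpose = V_Q^\transpose V_K U_K^\transpose$, so that $Q'K'^\transpose = QK^\transpose$ with $Q',K'\in\R^{n\times r_Q}$ (the paper writes $V_Q^\transpose U_K V_K^\transpose$, which has mismatched inner dimensions and must be a typo). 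Your version, projecting both $Q$ and $K$ onto an orthonormal basis of $\mathrm{rowspace}(Q)$, is a cleaner rendering of the same idea and gives the sharper entry bound $\sqrt{d}\,\entryBound$ instead of the paper's $\poly(\entryBound nd)$; this is immaterial given the polylogarithmic dependence on $\entryBound$, but your version avoids needing \Cref{lemma:svd}'s entry-growth claim. Your precision paragraph is sound and fills a step the paper leaves implicit.

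The one place where your write-up genuinely diverges is also the one place it is wrong, and to your credit you flag it yourself: $V$ still has width $d$, so \Cref{m-thm:const-d-att-alg} does not literally apply. Your proposed fix --- that each output $\hat O_{i,t}$ is a $\polylog$-term linear combination of canonical subset sums, so one can precompute subset sums of all $d$ columns of $V$ in $\tO{nd}$ and then read them off per query --- fails because Matou\v sek's partition tree (\Cref{thm:range-searching-ds}) decomposes a halfspace query into $\Theta(n^{1-1/r})$ canonical subsets, not $\polylog$ many; that is precisely where the $n^{1-1/r}$ query time comes from. Reading off $d$ stored sums per canonical node thus costs $\Theta(d\,n^{1-1/r})$ per query and $\Theta(d\,n^{2-1/r})$ in total, not the additive $\tO{nd}$ you want. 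It is worth knowing that the paper's own proof does not visibly do better: it simply declares $(Q',K',V)$ with $V$ of width $d$ to be "an instance of $\attc(n,r,\ldots)$" and cites \Cref{lem:const-d-time}, whose bookkeeping iterates over $t\in[d]$ but drops that factor from the final bound --- harmless in \Cref{m-thm:const-d-att-alg} where $d=O(1)$, but not here. So as written, both your argument and the paper's really only establish $\tO{nd + d\cdot n^{2-1/r}\polylog(\entryBound/\varepsilon)}$; your awareness of the issue is a strength, but the mechanism you sketch for removing the $d$ factor does not work.
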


We complement this algorithm with a subquadratic algorithm for Attention Gradient Computation similar to \cite{alman2024fine}.
In the training process, gradient descent tunes the weight matrices $W_{Q}, W_{K}, W_{V}$ according to the input data.
When $d = O(1)$, we give a subquadratic running time for computing the Attention gradient.
In particular, we obtain a truly (non-pseudo-)subquadratic algorithm for the full LLM training process in this regime.

\begin{theorem}[Informal \Cref{thm:backwards-pass-alg}]
    \label{m-thm:backwards-pass-alg}
    The Attention gradient can be computed with $O(d)$ calls to $\attc(n, d, B, \varepsilon/\Theta(ndB^{3}))$ with $O(nd^2)$ overhead.
    In particular, if $d = O(1)$ the Attention gradient can be computed in $\tO{n^{2 - 1/d} \polylog(B/\varepsilon)}$ time.
\end{theorem}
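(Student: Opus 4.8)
The plan is to reduce Attention gradient computation to $O(d)$ calls of approximate Attention, in the spirit of \cite{alman2024fine}, while keeping every intermediate quantity of magnitude $\poly(n,d,B)$ so that the reduction stays polylogarithmic --- not polynomial --- in $B$. First I would pin down the problem: given $Q,K,V\in[-B,B]^{n\times d}$ and the upstream gradient $G:=\partial L/\partial O\in[-O(B),O(B)]^{n\times d}$ of the loss with respect to the output $O:=\att(Q,K,V)$ (the bound on $G$ holds for, e.g., a squared loss against a bounded target, and $O$ is cached from the forward pass or recomputed with one $\attc$ call), produce additive-$\varepsilon$ approximations of $\nabla_Q L,\nabla_K L,\nabla_V L$; the weight gradients $\nabla_{W_Q}L=X^\transpose\nabla_Q L$ etc.\ and the input gradient then follow with $O(nd^2)$ extra work.

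Second, I would push the loss through the row-wise softmax. With $A:=\exp(QK^\transpose)$, $D:=\diag(A\onevec)$, $S:=D\inv A$ and $O=SV$, the softmax Jacobian produces the per-row scalars $c_i:=\langle G_i,O_i\rangle$ (all computable in $O(nd)$ time from $O$) and then $\partial L/\partial A_{il}=D_i\inv(\langle G_i,V_l\rangle-c_i)$; differentiating $A_{il}=\exp(\langle Q_i,K_l\rangle)$ gives, writing $V_{\cdot t}$ for the $t$-th column,
\begin{align*}
\nabla_Q L &= \sum_{t=1}^{d}\diag(G_{\cdot t})\,\att(Q,K,\diag(V_{\cdot t})K)\;-\;\diag(c)\,\att(Q,K,K),\\
\nabla_K L &= \sum_{t=1}^{d}\diag(V_{\cdot t})\,S^\transpose\!\bigl(\diag(G_{\cdot t})Q\bigr)\;-\;S^\transpose\!\bigl(\diag(c)Q\bigr),\qquad \nabla_V L=S^\transpose G.
\end{align*}
The $\nabla_Q L$ formula is already $d+1$ calls to $\attc$ on the original $Q,K$, with value matrices of entry bound $\le B^2$, plus $O(nd^2)$ arithmetic to build the $\diag(V_{\cdot t})K$ and take the combination, so that part is immediate.

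The remaining terms all have the form $S^\transpose M$. The naive route $S^\transpose M=A^\transpose(D\inv M)$ is dangerous because $D_i$ ranges over $[n\,e^{-dB^2},\,n\,e^{dB^2}]$, so dividing by it demands $e^{\Theta(B^2)}$-scale precision. Instead I would fold the normalization into the exponent: since $S_{il}=\exp(\langle Q_i,K_l\rangle-\log D_i)$, set $\bar Q_i:=(Q_i,-\log D_i)\in\R^{d+1}$ and $\bar K_l:=(K_l,1)$ so that $\exp(\langle\bar Q_i,\bar K_l\rangle)=S_{il}$ exactly; then $S^\transpose M=\bar D\cdot\att(\bar K,\bar Q,M)$, where $\bar D_a=\sum_i S_{ia}$ is the vector of column sums of $S$, which lies in $[0,n]$ and is recovered from one $\attc$ call on the instance whose key set is $\bar Q$ augmented by an all-zero row (the extra value column equals $1/(\bar D_a+1)$, a well-conditioned inversion). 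Because $\bar Q$ has entries of magnitude only $O(dB^2+\log n)$, these calls still run in $\tO{n^{2-1/d}\polylog(B/\varepsilon)}$; the vector $\log D$ (entries $O(dB^2+\log n)$) to the needed additive precision is a byproduct of the algorithm of \Cref{m-thm:const-d-att-alg}, which must compute $D$ in order to normalize. Verifying that the $\log$-domain rewriting indeed keeps all $\attc$ inputs of $\poly(B)$ magnitude, so that no $e^{B^2}$ factor creeps back into the running time, is the point I expect to require the most care.

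Finally I would run the error bookkeeping: every value matrix fed to $\attc$ has entries of magnitude $\poly(B)$, so its output does too (an Attention output is a convex combination of value rows), the gradient matrices themselves have magnitude $O(ndB^3)$, and propagating additive errors through the $O(d)$-fold combinations shows that calling each $\attc$ with additive precision $\varepsilon/\Theta(ndB^3)$ suffices. Summing up: $O(d)$ calls to $\attc$ plus $O(nd^2)$ overhead compute the full Attention gradient; plugging in \Cref{m-thm:const-d-att-alg} for $d=O(1)$ yields $\tO{n^{2-1/d}\polylog(B/\varepsilon)}$, and since the feed-forward and normalization sublayers cost only $O(nd^2)$ per layer, chaining the forward and backward passes over all layers keeps the whole training pass truly subquadratic.
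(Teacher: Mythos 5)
Your derivation of $\nabla_Q L$ is essentially the same reduction the paper carries out (just organized differently): both express the contribution of the softmax Jacobian as $O(d)$ products of the form $f(x)\,M$ with $M\in\R^{n\times d}$, each of which is literally one $\attc$ call, plus $O(nd^2)$ bookkeeping. The paper decomposes the \cite{alman2024fine} formula $\odv{L(x)}{x}=A_1^\top p(x)A_2$ into three pieces $B_1, B_2, B_3$ and uses row-wise Kronecker identities to write each as a sum over $p\in[d]$ of $f(x)\cdot(\text{rank-1 modification of }A_2)$; your $\sum_t\diag(G_{\cdot t})\att(Q,K,\diag(V_{\cdot t})K)-\diag(c)\att(Q,K,K)$ is an equally valid $(d+1)$-call identity for the same object. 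Crucially, for the problem the theorem actually addresses ($\aattlgc$: gradient with respect to $X$, where $K=A_2$ and $V=A_3Y$ are fixed and $X$ enters only through $Q=A_1X$), $\nabla_Q L$ is all you need, since $\nabla_X L = A_1^\top\nabla_Q L$. So for the stated result your route and the paper's are morally the same.

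Where you genuinely diverge is in also producing $\nabla_K L$ and $\nabla_V L$ for an arbitrary upstream gradient $G$, which the paper never has to do. This forces you to evaluate $S^\transpose M$, which is \emph{not} an $\attc$ call, and your reparameterization $\bar Q_i=(Q_i,-\log D_i)$, $\bar K_l=(K_l,1)$ so that $\exp(\bar K\bar Q^\transpose)=S^\transpose$ is a nice idea --- but it is exactly the step that breaks the clean black-box statement of the theorem. Two concrete issues: (i) you need $\log D_i$ to additive accuracy, which means $D_i$ to multiplicative accuracy. As a black box, $\attc$ only returns the normalized output, and recovering $D_i$ multiplicatively is precisely the content of the paper's de-normalization lemma (Lemma 5.4), which costs $O(\log\log n+\log(dB/\varepsilon))$ oracle calls per batch --- so ``$O(d)$ calls'' would become $O(d(\log\log n+\log(dB/\varepsilon)))$ unless you peek inside the algorithm, which you explicitly propose to do. (ii) The reparameterized query entries have magnitude $O(dB^2+\log n)$, so these calls are to $\attc(n,d+1,O(dB^2+\log n),\cdot)$ rather than $\attc(n,d,B,\cdot)$; harmless for the running time (polylog in $B$), but not the literal statement. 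Neither complication arises in the paper, because $K$ and $V$ are fixed so $S^\transpose$ is never needed. For the theorem as stated you should simply drop the $\nabla_K,\nabla_V$ machinery; what you keep is then a correct proof that is a cosmetic variant of the paper's. The extra $S^\transpose$ reparameterization is a genuinely different (and useful) idea for a stronger ``full backward pass'' claim, but as a proof sketch it currently leaves the $\log D$ precision argument unresolved, which you correctly flag as the delicate point.
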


We give a generic reduction from gradient computation to attention computation.
Thus, to obtain fast LLM training, it suffices to efficiently compute Attention.

When $d = \omega(1)$, the above algorithms requires $n^{2 - o(1)}$ time.
Is there a truly subquadratic algorithm for super-constant $d$?
\cite{alman2023fastattentionboundedentries} show that $n^{2 - o(1)}$ time is necessary when $d = \Omega(\log n)$  under the Strong Exponential Time Hypothesis ($\SETH$). 
Under the same hardness assumption we provide a much stronger lower bound and 
show that Attention is hard even when $d = 2^{\Omega(\log^* n)}$. 

\begin{theorem}[Informal \Cref{thm:super-const-d-lb}]
    \label{m-thm:super-const-d-lb}
    Under $\SETH$, $\attc(n, d, \entryBound, \varepsilon)$ requires $n^{2 - o(1)}$ time for $d = 2^{\Omega(\log^* n)}$ and $B = \poly(n)$.
\end{theorem}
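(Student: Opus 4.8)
The plan is a fine‑grained reduction from Orthogonal Vectors ($\OV$): given sets $A, B$ of $n$ vectors in $\bin^{m}$, decide whether some $a \in A$ and $b \in B$ satisfy $\langle a, b \rangle = 0$. Under $\SETH$, $\OV$ has no $O(n^{2-\delta})$ algorithm for any $\delta > 0$, even for $m = \omega(\log n)$. Following the Alman--Song template, I would first set up the basic $\OV \to \attc$ encoding: choose $Q, K$ so that $(QK^{\top})_{ij}$ equals a fixed value (say $0$) when $a_i \perp b_j$ and is at most $-C$ otherwise; after applying $\exp$, the orthogonal pairs dominate their rows by a factor $e^{C} = \poly(n)$, so taking $V$ to be (a padding of) the all‑ones column and reading off whether $\norm{O - \att(Q,K,V)}_{\infty} < \varepsilon$ fails, row by row, reveals whether each row of $A$ has an orthogonal partner in $B$. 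The only resource to control is the head dimension this encoding consumes: done directly it uses $m = \Theta(\log n)$ coordinates, which merely reproduces the known bound.

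The heart of the argument is a recursive \emph{dimension compression} that realizes the orthogonality pattern through $\exp(QK^{\top})$ while using far fewer than $m$ coordinates, at the price of a subpolynomial blow‑up in the number of rows and a polynomial blow‑up in the entries; it is exactly here that allowing $B = \poly(n)$ is essential, since Alman--Song already show that for $B = \Omega(\sqrt{\log n})$ one cannot beat head dimension $\Theta(\log n)$. Concretely I would build a family of gadgets $G_{\ell}$, where $G_{\ell}$ decides orthogonality on a ground set of size $h_{\ell}$ using head dimension $d_{\ell}$ and a multiplicative row blow‑up $N_{\ell}$, together with a bootstrap step that builds a gadget on $[h]$ out of a gadget on $[\Theta(\log h)]$ with $d \mapsto O(d)$ (head dimension grows by only a constant factor) and with the row blow‑up multiplied by $2^{\poly(\log h)} = n^{o(1)}$. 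Because each bootstrap shrinks the ground set so aggressively, $h \mapsto \Theta(\log h)$, and $\log^{*} m = \Theta(\log^{*} n)$, only $\Theta(\log^{*} n)$ levels are needed to climb from the trivial base gadget on $[O(1)]$ up to $h = m$; over these levels the head dimension multiplies to $2^{\Theta(\log^{*} n)}$, the number of rows becomes $n \cdot n^{o(1)} = n^{1+o(1)}$, and — since the separation $C$ demanded at each level is the sum of the polylogarithmic re‑amplifications of the levels above it and hence still $\Theta(\log n)$ — the entries stay $\poly(n)$. Composing with the $\OV \to \attc$ encoding, an $O(n^{2-\delta})$ algorithm for $\attc$ with $d = 2^{\Theta(\log^{*} n)}$, $B = \poly(n)$, $\varepsilon = 1/\poly(n)$ would decide $\OV$ in time $O((n^{1+o(1)})^{2-\delta}) = O(n^{2-\delta+o(1)})$, contradicting $\SETH$; padding $Q,K,V$ with zero columns then extends the bound to all $d \ge 2^{\Theta(\log^{*} n)}$, i.e.\ to $d = 2^{\Omega(\log^{*} n)}$ as claimed.

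I expect the main obstacle to be the bootstrap step itself: one must express an orthogonality test on the much larger ground set $[h]$ as a bounded Boolean combination of orthogonality tests on $[\Theta(\log h)]$ — morally, name each element of $[h]$ by its $\Theta(\log h)$‑bit binary string so that ``$S \cap T = \emptyset$'' becomes a conjunction of pairwise inequalities of $\Theta(\log h)$‑bit strings — and then turn the resulting $\wedge$/$\vee$ structure into a \emph{signed sum of exponentials of inner products}, which is precisely the shape Attention evaluates once its key set is enlarged and the signs are absorbed into $V$ (here the polynomially large entries are used once more, so that $\exp$ acts as a near‑hard threshold and a single Attention evaluation resolves each ``$\ge 1$'' comparison). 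The delicate points are (a) making this combination sparse enough that the number of added key columns, hence the row blow‑up, stays $n^{o(1)}$ rather than polynomial — the naive ``all pairs'' combination is far too large — and (b) letting the head dimension grow by only a constant factor per level, since any super‑constant multiplicative growth over $\Theta(\log^{*} n)$ levels would overshoot $2^{\Theta(\log^{*} n)}$. A secondary obstacle is numerical bookkeeping across the $\Theta(\log^{*} n)$ levels: the orthogonal/non‑orthogonal separation must be re‑amplified at every level to survive the spurious cross‑terms produced when products of sums are expanded, and one has to verify that the cumulative amplification leaves the entry bound at $\poly(n)$ and the tolerated error at $1/\poly(n)$.
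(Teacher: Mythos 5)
There is a genuine gap, and it is large: the entire ``heart of the argument'' — the recursive dimension compression from $m = \Theta(\log n)$ down to $d = 2^{\Theta(\log^* n)}$ coordinates — is left unproved. You explicitly list it as an ``expected obstacle'' and flag several delicate points (keeping the number of added key columns at $n^{o(1)}$, keeping the per-level head-dimension growth constant, controlling cumulative amplification) without resolving any of them. What you are sketching is essentially a re-derivation of Chen's $\SETH$-hardness of $\mip$ in dimension $2^{\Theta(\log^* n)}$, which is a deep result obtained via algebraic-geometry codes and a careful derandomized sparsification, not via a generic ``name elements by binary strings and Boolean-combine'' recursion. Nothing in your sketch substitutes for that machinery, so the proposal does not constitute a proof.

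The paper sidesteps this entirely by reducing from $\mip$ rather than from $\OV$, invoking Chen's theorem (\Cref{thm:z-mip-lower-bound}) as a black box. The reduction itself is then elementary: encode $a_i$ and $(\ln n) C\, b_j$ as the rows of $Q$ and $K$, so that the $i$-th row sum of $\exp(QK^\top)$ equals $\sum_j n^{C\,(a_i\cdot b_j)}$, which is dominated by $n^{C M_i}$ where $M_i = \max_j a_i\cdot b_j$; a $(1\pm\varepsilon)$-multiplicative estimate of the row sum therefore recovers $M_i$ exactly by rounding. This is paired with a de-normalization gadget (\Cref{lem:de-normalization}): because $\att$ normalizes each row, one cannot read off row sums directly, so the paper augments $Q,K,V$ with one extra row/column to turn row-sum estimation into a parallel binary search over $\attc$ queries. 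Your proposal also overlooks this normalization issue — reading $\att(Q,K,V)$ with $V$ an all-ones column gives back normalized quantities, not raw row sums. I would recommend replacing the dimension-compression plan by a reduction from $\mip$, and adding a de-normalization step so that the $\attc$ oracle actually exposes the quantities you need.
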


It suffices to consider instances with polynomial entry size $B = \poly(n)$ since any (non-pseudo-) subquadratic algorithm must handle such instances in subquadratic time.
Formally, we show that any fast algorithm for $\attc(n, d, B, \varepsilon)$ implies a fast algorithm for (Bichromatic) Maximum Inner Product ($\mip$) on $d$-dimensional vectors with integer entries.
The (Bichromatic) $\mip$ problem asks an algorithm given two sets of vectors $A, B \subseteq \Z^{d}$ to compute $\max_{a \in A, b \in B} a \cdot b$.
Under $\SETH$, this requires $n^{2 - o(1)}$ time whenever $d = 2^{\Omega(\log^* n)}$ \cite{chen2018maxip}.
Furthermore, the best known algorithms for $\mip$ run in $n^{2 - \Theta(1/d)}$ time \cite{Yao1982, 1991Euclidean, Mat92PartitionTrees} so that any algorithm improving significantly over \Cref{m-thm:const-d-att-alg} must improve upon the best known algorithms for $\mip$.
\cite{chen2018maxip} conjectures that no such algorithm exists under \SETH.

{\bf Stronger Lower Bounds for Large Head Dimension.}
The head dimension $d$ can often be relatively large with respect to the context length $n$ (in some cases e.g. \cite{DBLP:conf/nips/VaswaniSPUJGKP17}, the head dimension $d$ can even be larger than the context length $n$).
In these settings, a large gap remains between the standard algorithm requiring $O(n^2 d)$ time and the known $n^{2 - o(1)}$ lower bound.
Our work addresses this gap and shows that the standard algorithm is conditionally optimal.

Our conditional lower bound depends on a natural generalization of a popular hypothesis.
The Orthogonal Vectors ($\OV$) problem is among the most well studied problems in fine-grained complexity.
In the $\OV$ problem, an algorithm is given two sets of $n$ vectors $A, B \subseteq \SET{0, 1}^{d}$ and is asked to determine if there exists an orthogonal pair $a \in A, b \in B$ such that $a \cdot b = 0$.
The naive algorithm for this problem requires $O(n^2 d)$ time and the current best algorithm for $\OV$ achieves truly subquadratic time only for $d = O(\log n)$ \cite{DBLP:conf/soda/AbboudWY15, DBLP:conf/soda/ChanW16}. A central hypothesis (known as the $\OV$ Hypothesis) in fine-grained complexity states that there is no $n^{2 - o(1)}$ algorithm for $\OV$ whenever $d = \omega(\log n)$, and the $\OV$ Hypothesis is known to hold under $\SETH$ \cite{williams2004ovc}.

If $d = \poly(n)$, one can compute $a \cdot b$ for all pairs $a \in A, b \in B$ using a matrix product between an $n \times d$ matrix containing the vectors in $A$ as rows and a $d \times n$ matrix containing the vectors of $B$ as columns.
The above algorithm requires $O(\TMUL(n, d, n))$ time, where $\TMUL(a, b, c)$ is the time complexity for multiplying an $a \times b$ matrix with a $b \times c$ matrix.
The High-Dimensional $\OV$ Hypothesis introduced by \cite{DBLP:conf/icalp/DalirrooyfardK21} hypothesized that when $d = n$, any algorithm computing $\OV$ requires $\TMUL(n, n, n)^{1-o(1)} = n^{\omega - o(1)}$ time, where $\omega < 2.3714$ denotes the square matrix multiplication exponent \cite{ DBLP:conf/soda/AlmanDWXXZ25}. We consider a generalization of their hypothesis: the $\TMUL(n, d, n)^{1-o(1)}$ running time is required for any $d = \poly(n)$. We call it the Generalized High-Dimensional $\OV$ Hypothesis.

Under this hypothesis, we show that the standard algorithm for computing Attention is optimal.
Note that using fast matrix multiplication, one can easily obtain an algorithm for Attention using $O(\TMUL(n, d, n))$ time.

\begin{theorem}[Informal \Cref{thm:poly-d-lb}]
    \label{m-thm:poly-d-lb}
    Under the Generalized High-Dimensional $\OV$ Hypothesis, $\attc(n, d, B, \varepsilon)$ requires $\TMUL(n, d, n)^{1 - o(1)}$ time for $d = \poly(n)$.
\end{theorem}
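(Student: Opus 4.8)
The plan is to give a fine-grained reduction from $\OV$ on $n$ vectors in $d$ dimensions to $\attc$ with context length $n+1$, head dimension \emph{exactly} $d$, entries of absolute value $O(\log n)$, and a constant approximation parameter. Since the Generalized High-Dimensional $\OV$ Hypothesis asserts that $\OV$ on $d = \poly(n)$-dimensional vectors cannot be solved in $\TMUL(n,d,n)^{1-o(1)}$ time, and since $\TMUL(n+1,d,n+1) = \Theta(\TMUL(n,d,n))$ (monotonicity and block decomposition, $\TMUL(2n,d,2n)\le 4\TMUL(n,d,n)$) while the reduction itself costs only $O(nd)\le O(\TMUL(n,d,n))$, such a reduction immediately transfers the lower bound. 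The one structural feature we must preserve is that the attention head dimension equals the $\OV$ dimension $d$, so the bound comes out in terms of $\TMUL(n,d,n)$ with the \emph{same} $d$; this is why we hide the auxiliary gadget inside the value matrix $V$ (which does not enter the $QK^{\transpose}$ inner products) rather than by appending coordinates to $Q$ and $K$.

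Given $\OV$ sets $\{a_1,\dots,a_n\},\{b_1,\dots,b_n\}\subseteq\{0,1\}^d$, fix $C := \lceil \ln(2n)\rceil$ and build $(n+1)\times d$ matrices with rows indexed by $\{0,1,\dots,n\}$: set $Q_0 := \zerovec$ and $Q_i := C\,a_i$ for $i\in[n]$; set $K_0 := \zerovec$ and $K_j := -b_j$ for $j\in[n]$; set $V_0$ to be the first standard basis vector and $V_j := \zerovec$ for $j\in[n]$. All entries lie in $[-C,C]$, so $B = O(\log n)$. The dummy index $0$ plays two roles: key $K_0$ is orthogonal to every $Q_i$, so it contributes $\exp(0)=1$ to each row-sum $D_i = \big(\diag(\exp(QK^{\transpose})\onevec)\big)_{ii}$, guaranteeing $D_i\ge 1$; and since $V_0$ is the only value row with a nonzero first coordinate, the $(i,1)$ entry of $\att(Q,K,V)$ equals exactly $D_i^{-1}$. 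Writing $z_i := |\{j\in[n]: a_i\cdot b_j = 0\}|$, for every $i\in[n]$ we get
\[
D_i \;=\; 1 + \sum_{j\in[n]} \exp\!\big(-C\,(a_i\cdot b_j)\big) \;=\; 1 + z_i + \eta_i, \qquad 0 \le \eta_i \le n\exp(-C) \le \tfrac12,
\]
because each pair with $a_i\cdot b_j\ge 1$ contributes at most $\exp(-C)\le 1/(2n)$.

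Consequently $o_i := \att(Q,K,V)_{i,1} = D_i^{-1}$ obeys $o_i\in[\tfrac23,1]$ if $z_i = 0$ and $o_i\le\tfrac12$ if $z_i\ge 1$ — a constant-sized gap. So it suffices to invoke $\attc(n+1,d,C,\tfrac{1}{24})$, obtain $\tilde O$ with $\infnorm{\tilde O - \att(Q,K,V)}<\tfrac{1}{24}$, and declare that the $\OV$ instance has an orthogonal pair iff $\tilde O_{i,1}<\tfrac{7}{12}$ for some $i\in[n]$; the threshold $\tfrac{7}{12}$ strictly separates $\tfrac23-\tfrac1{24}$ from $\tfrac12+\tfrac1{24}$, so this is correct, since an orthogonal pair exists iff some $z_i\ge 1$.

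Chaining: if $\attc(n,d,B,\varepsilon)$ could be solved in $O(\TMUL(n,d,n)^{1-\delta})$ time for some $\delta>0$ — for $B = O(\log n)$, $\varepsilon=\tfrac1{24}$, and hence, since $\attc$ only gets harder as $B$ grows or $\varepsilon$ shrinks, for all $B=\poly(n)$ and all inverse-polynomial $\varepsilon$ — then $\OV$ on $d$-dimensional vectors would be solvable in $O(nd) + O(\TMUL(n+1,d,n+1)^{1-\delta}) = O(\TMUL(n,d,n)^{1-\delta})$ time, contradicting the Generalized High-Dimensional $\OV$ Hypothesis. I do not anticipate a genuine obstacle here; the two points that need a little care are (i) the precision bookkeeping — taking $C=\Theta(\log n)$ large enough that the ``error mass'' $\eta_i$ stays below the constant gap, which is the analogue of the entry-size/temperature trade-off of \cite{alman2023fastattentionboundedentries} but is trivially met because we are permitted $\poly(n)$-size entries — and (ii) checking that $\TMUL(\cdot,d,\cdot)$ is insensitive to additive $O(1)$ changes in its outer arguments, so that the reduction is tight in exactly the $\TMUL(n,d,n)$ parameterization the hypothesis concerns.
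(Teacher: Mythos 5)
Your proposal is correct, and it reaches the same conclusion as the paper, but via a genuinely different reduction.

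The paper proves \Cref{thm:poly-d-lb} by scaling $Q := -\sqrt{|\ln k|}\,A$, $K := \sqrt{|\ln k|}\,B$ with $k = \Theta(1/n)$, and then invoking \Cref{lem:de-normalization} (parallel binary search with a dummy coordinate appended to $Q,K$) to recover the row sums of $\exp(QK^{\transpose})$ up to multiplicative error; the presence of an orthogonal pair is then read off from a threshold on the row sums. This costs $O(\log\log n + \log(dB/\varepsilon))$ oracle calls to $\attc(n{+}1, d{+}1, B, \varepsilon)$. Your construction instead dispenses with the de-normalization subroutine entirely: by planting a dummy \emph{key} row $K_0 = \zerovec$ (which contributes a fixed $+1$ to every denominator $D_i$) together with a dummy \emph{value} row $V_0 = e_1$ (so that $\att(Q,K,V)_{i,1} = D_i^{-1}$ exactly), you turn a single call to $\attc(n{+}1, d, O(\log n), 1/24)$ into a constant-gap detector for $z_i \ge 1$. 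This is a nice simplification: it removes the polylogarithmic call overhead and does not enlarge the head dimension at all, whereas \Cref{lem:de-normalization} grows it to $d+1$ — both moves are immaterial for $\TMUL(n,d,n)$ when $d = \poly(n)$, but yours is syntactically tighter to the parameterization of the hypothesis. The paper's route has the advantage of reusing the same general-purpose lemma that also drives the $\mip$ reduction (\Cref{lem:z-mip-att-reduction}).

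One small quantitative gap worth noting relative to the \emph{formal} statement (\Cref{thm:poly-d-lb}): the paper asserts hardness already at $B = O(\sqrt{\log n})$, whereas your $Q_i = C\,a_i$, $K_j = -b_j$ with $C = \Theta(\log n)$ only gives $B = O(\log n)$. This is cosmetic — split the scaling symmetrically, $Q_i := \sqrt{C}\,a_i$ and $K_j := -\sqrt{C}\,b_j$ (and $Q_0 = K_0 = \zerovec$, $V$ unchanged), so that $Q_i\cdot K_j = -C(a_i\cdot b_j)$ exactly as before while every entry is now bounded by $\sqrt{C} = O(\sqrt{\log n})$ — and your argument then matches the formal theorem's entry bound as well.
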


\Cref{tbl:intro-results-tbl} summarizes our results.
In particular, we tightly characterize the complexity of Attention (up to sub-polynomial factors) when $B = \poly(n)$ for all regimes of $d$ except $1 \ll d \ll 2^{\Theta(\log^* n)}$.
Within this regime, our running time matches the best known algorithms for $\mip$ \cite{Yao1982, 1991Euclidean, Mat92PartitionTrees},
and as mentioned earlier, significant improvements over our algorithm will imply improvements over the current best known algorithms for $\mip$ which will be a breakthrough. 

\begin{table}[h!]
    \centering
    \begin{tabular}{|| c | c | c | c ||}
         \hline
         $d$ & Upper Bound & Lower Bound & Previous Result \\
         \hline
         \hline
         $O(1)$ & $n^{2 - 1/d}$ (\Cref{m-thm:const-d-att-alg}) & $\Omega(n)$ & $O(n^2)$ \\
         $2^{\Theta(\log^* n)}$ & $n^{2}$ & $n^{2 - o(1)}$ (\Cref{m-thm:super-const-d-lb}) & $\Omega(nd)$ \\
         $\Theta(\log n)$ & $n^{2}$ & $n^{2 - o(1)}$ (\Cref{thm:ov-lb-constant}) & $n^{2 - o(1)}$ (\cite{alman2023fastattentionboundedentries})* \\
         $\poly(n)$ & $\TMUL(n, d, n)$ & $\TMUL(n, d, n)^{1 - o(1)}$ (\Cref{m-thm:poly-d-lb}) & $n^{2 - o(1)}$ (\cite{alman2023fastattentionboundedentries}) \\
         \hline
    \end{tabular}
    \caption{Summary of known results when $B = \poly(n)$ and $\varepsilon = 1/\poly(n)$. 
    Sub-polynomial dependencies are suppressed for simplicity.
    *For $d = \Theta(\log n)$, the previous lower bound holds when $B = \Omega(\sqrt{\log n})$ while ours holds even when $B \geq \log 2$.
    }
    \label{tbl:intro-results-tbl}
\end{table}

\subsection{Technical Overview}

In this section, we give a high level overview of our algorithm.
For simplicity, we focus on the $d = 1$ case  in this overview. 
Given inputs $q, k, v \in \R^{n}$, our goal is to compute $o_{i} = \sum_{j} p_{i, j} v_{j}$ for all $i$ where $p_{i, j} \propto \exp(q_{i} k_{j})$ are probabilities in the softmax distribution.

First, we show that for each $i$ it suffices to only consider near maximal $q_{i} k_{j}$.
Assume without loss of generality that $q_i > 0$.
Let $k_{\max} = \max_{j} k_{j}$. 
Define $j$ to be {\em irrelevant} (with respect to $q_{i}$) if $q_{i} k_{j} \ll q_{i} k_{\max} - \log(n/\varepsilon)$ and {\em relevant} otherwise. Therefore, $\exp(q_ik_j) \leq \frac{\varepsilon \exp(q_ik_{\max})}{n}$.
Summing over all irrelevant $j$, $p_{i, j} \ll \varepsilon$. 
Since discarding such $j$ does not significantly change the value of $o_{i}$, we consider only relevant $j$ for the remainder of the overview.

\begin{figure}[h!]
   \centering
   \usetikzlibrary{matrix}
\usetikzlibrary{math}

\begin{tikzpicture}[>=stealth]
    \begin{scope}[shift={(0,0)}]
        \tikzmath{\base=1; \width=3; \baser=\base+\width; \baseg=\baser+\width; \baseb=\baseg+\width; \start=\base-1; \endl=\baseb+1; \textx=\base-2;}
        
        \fill[red!30] plot[smooth, samples=100, domain=1:3.8] (\base, 0) -- (\base, 2) -- (\baser, 2) -- (\baser, 0) -- cycle;
        \draw (\base, 0) -- (\base, 2);
        
        \fill[yellow!30] plot[smooth, samples=100, domain=1:3.8] (\baser, 0) -- (\baser, 2) -- (\baseg, 2) -- (\baseg, 0) -- cycle;
        \draw (\baser, 0) -- (\baser, 2);
        
        \fill[green!30] plot[smooth, samples=100, domain=1:3.8] (\baseg, 0) -- (\baseg, 2) -- (\baseb, 2) -- (\baseb, 0) -- cycle;
        \draw (\baseg, 0) -- (\baseg, 2);
        
        \fill[blue!30] plot[smooth, samples=100, domain=1:3.8] (\baseb, 0) -- (\baseb, 2) -- (\endl, 2) -- (\endl, 0) -- cycle;
        \draw (\baseb, 0) -- (\baseb, 2);
        
        \draw[->] (\start, 1) -- (\endl, 1);
    
        \node[above=3pt] at (\base, 2) {Relevant Indices $\rightarrow$};

        \tikzmath{\x1=0.5; \x2=1.8; \x3=2.6; \x4=4.5; \x5=6.1; \x6=7.4; \x7=8.2; \x8=9.4;}
        
        \node[circle, draw, fill, scale=0.5] (k0) at (\x1, 1) {};
        \node[below=5pt] at (k0) {$k_{1}$};
        \node[below=30pt] at (k0) {$6$};
        \node[below=43pt] at (k0) {\x1};
        \node[below=56pt] at (k0) {$-\infty$};

        \node[circle, draw, fill, scale=0.5] (k1) at (\x2, 1) {};
        \node[below=5pt] at (k1) {$k_{2}$};
        \node[below=30pt] at (k1) {$1$};
        \node[below=43pt] at (k1) {\x2};
        \node[below=56pt] at (k1) {$4$};
    
        \node[circle, draw, fill, scale=0.5] (k2) at (\x3, 1) {};
        \node[below=5pt] at (k2) {$k_{3}$};
        \node[below=30pt] at (k2) {$3$};
        \node[below=43pt] at (k2) {\x3};
        \node[below=56pt] at (k2) {$4$};
    
        \node[circle, draw, fill, scale=0.5] (k3) at (\x4, 1) {};
        \node[below=5pt] at (k3) {$k_{4}$};
        \node[below=30pt] at (k3) {$8$};
        \node[below=43pt] at (k3) {\x4};
        \node[below=56pt] at (k3) {$7$};
    
        \node[circle, draw, fill, scale=0.5] (k4) at (\x5, 1) {};
        \node[below=5pt] at (k4) {$k_{5}$};
        \node[below=30pt] at (k4) {$5$};
        \node[below=43pt] at (k4) {\x5};
        \node[below=56pt] at (k4) {$7$};
    
        \node[circle, draw, fill, scale=0.5] (k5) at (\x6, 1) {};
        \node[below=5pt] at (k5) {$k_{6}$};
        \node[below=30pt] at (k5) {$7$};
        \node[below=43pt] at (k5) {\x6};
        \node[below=56pt] at (k5) {$10$};
    
        \node[circle, draw, fill, scale=0.5] (k6) at (\x7, 1) {};
        \node[below=5pt] at (k6) {$k_{7}$};
        \node[below=30pt] at (k6) {$4$};
        \node[below=43pt] at (k6) {\x7};
        \node[below=56pt] at (k6) {$10$};
    
        \node[circle, draw, fill, scale=0.5] (k7) at (\x8, 1) {};
        \node[below=5pt] at (k7) {$k_{8}$};
        \node[below=30pt] at (k7) {$2$};
        \node[below=43pt] at (k7) {\x8};
        \node[below=56pt] at (k7) {$10$};
    
        \node[below=30pt] at (\textx, 1) {Value ($v$)};
        \node[below=43pt] at (\textx, 1) {Key ($k$)};
        \node[below=54pt] at (\textx, 1) {Rounding ($\overline{k}$)};
    \end{scope}
\end{tikzpicture}
   \caption{Rounding based algorithm for $1$-dimensional Attention illustrated for $q_i = 1$. 
   Each point is placed at $k_j$ and has value $v_{j}$. 
   Points (e.g. $k_1$) such that $q_i k_j < q_i k_{\max} - \log(n/\varepsilon)$ are irrelevant and discarded (in this example $q_i k_{\max} - \log(n/\varepsilon) = 1$).
   Relevant points with similar $k_j$ (e.g. $\{k_{2}, k_{3}\}$ or $\{k_{6}, k_{7}, k_{8}\}$) are grouped together and assigned the same (rounded) key $\overline{k}$.
   The width of each region is $\log(1 + \varepsilon)$ (in this example $\log(1 + \varepsilon) = 3$).
   The algorithm outputs $\sum \overline{p}_{j} v_{j}$ where $\overline{p}_{j} \propto \exp(\overline{k}_{j})$.}
   \label{fig:1-d-attention}
\end{figure}
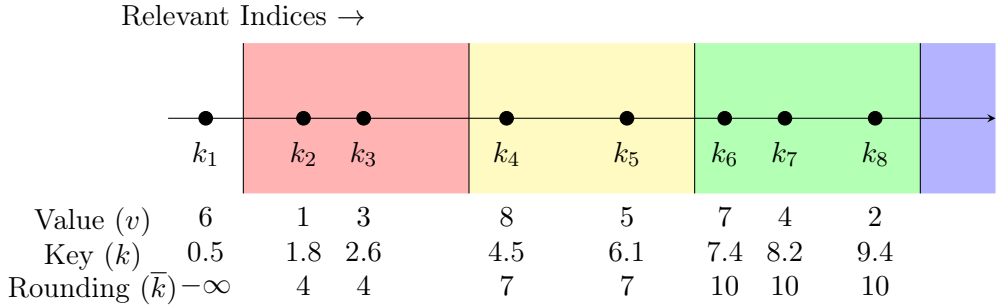

Combining this observation with a simple rounding scheme, we already obtain a modest improvement over known algorithms for Attention \cite{alman2023fastattentionboundedentries}. 
We illustrate this for the $d=1$ case.
Consider relevant $k_{j}$ such that $q_{i} k_{j} \geq q_{i} k_{\max} - \log(n/\varepsilon)$.
If we round such $k_{j}$ to $\overline{k}_{j}$ such that $q_{i} k_{j} \leq q_{i} \overline{k}_{j} \leq q_{i} k_{j} + \log(1 + \varepsilon)$, then $e^{q_{i} \overline{k}_{j}}$ is a $(1+\varepsilon)$-multiplicative approximation of $e^{q_{i} k_{j}}$.
This gives us estimates $\overline{p}_{i, j}$ that are good multiplicative approximations of $p_{i, j}$.
The algorithm then outputs $\hat{o}_{i} = \sum_{j} \overline{p}_{i, j} v_{j}$ which is a $(1+\varepsilon)$-multiplicative approximation of $o_{i}$.
Since $o_{i} = O(B)$, this gives a $\varepsilon B$-additive approximation of $o_{i}$.
To compute $\hat{o}_{i}$, we need to compute the number of $k_{j}$ in the $\frac{\log(n/\varepsilon)}{\log(1 + \varepsilon)} = O((1/\varepsilon) \log(n/\varepsilon))$ intervals for each $q_{i}$ since for all $k_j$s that fall in the same interval will have the same rounded value $\overline{k_j}$.
We can preprocess $k_{j}$ in $\tO{n}$ time to support such interval counting queries in $O(\log n)$ time. 
Since the total number of queries is $\tO{n/\varepsilon}$, we can compute $\attc(n, d, B, \varepsilon B)$ in time $\tO{n/\varepsilon}$, or $\attc(n, d, B, \varepsilon)$ in time $\tO{nB/\varepsilon}$ by rescaling $\varepsilon$ by $B$. 
\Cref{fig:1-d-attention} illustrates the rounding scheme.

The above rounding method gives a polynomial dependence on the entry bound $B$, and is only subquadratic when $B < o(n)$.
Although this already impoves on \cite{alman2023fastattentionboundedentries}'s algorithm (which exhibits exponential dependence on $B$, and thus only worked for values of $B = o(\sqrt{\log{n}})$), we would like a truly  subquadratic algorithm for all polynomial $B$.
To do this, we leverage the powerful polynomial method in algorithm design (see e.g. \cite{Williams18APSP, AbboudWY15}) that was also used in \cite{alman2023fastattentionboundedentries}. 

A natural attempt to utilize the polynomial method is to approximate $e^{x}$ with a polynomial.
As a simple case, by approximating $e^{x} \sim 1 + x$ we can compute $\exp(QK^T)V \sim \onevec \onevec^{\transpose} V + QK^TV$ efficiently. 
However, it is well known that $e^{x}$ can only be approximated well by polynomials with degree $p$ when $|x| \leq p$ \cite{aggarwal2022optimal}.
For a rank $d = O(1)$ matrix $QK^{\transpose}$, $\exp(QK^{\transpose})$ can be approximated with a rank $2^{O(B^2)}$ matrix.
Using this observation (as in \cite{alman2023fastattentionboundedentries}) one can obtain a subquadratic algorithm by assuming $B = o(\sqrt{\log n})$, but we would like to design fast algorithms in more general settings.

Let us see how we can handle 1-dimensional Attention.
For $x = O(\log(n/\varepsilon))$, there is a low-degree polynomial $P$ such that $|P(x) - \exp(x)| < \varepsilon \exp(x)$ \cite{alman2023fastattentionboundedentries}.
We then approximate $p$ with $\hat{p}$ by approximating $\exp$ with $P$.
Formally, our algorithm outputs $\hat{o}_{i} = \sum_{j} \hat{p}_{i, j} v_{j}$ where $\hat{p}_{i, j} \propto P(q_{i} k_{j})$.
Here, we crucially use the observation that irrelevant indices can be discarded.
Setting $c_i = \max_{j} q_i k_{j} - O(\log (n/\varepsilon))$ we can approximate $p_{i, j} \propto \exp(q_{i} k_{j}) \propto \exp(q_{i} k_{j} - c_{i})$.
For relevant indices $q_{i}k_{j} - c_{i} = O(\log(n/\varepsilon))$ so that $\hat{p}_{i, j} \propto P(q_{i} k_{j} - c_{i})$ is a good approximation of $p_{i, j}$.

We now describe how to compute $\hat{o}_{i}$ efficiently. 
In particular, we describe how to compute $\sum_{j} P(q_{i} k_{j} - c_{i}) v_{j}$ over relevant $j$ noting that $\hat{o}_{i}$ easily follows from computing this quantity twice (once with $v$ and once with $v$ replaced by $\onevec$ for normalization).
Let $P(x) = \sum_{\ell} m_{\ell} x^{\ell}$ and fix a monomial $x^{\ell}$.
It suffices to compute each monomial separately as $\sum_{j} P(q_{i} k_{j} - c_{i}) v_{j} = \sum_{j, \ell} m_{\ell} (q_{i} k_{j} - c_{i})^{\ell} v_{j}$.
Then for a fixed $\ell$ we compute
\begin{equation*}
    f(i, \ell) := \sum_{j} (q_{i} k_{j} - c_{i})^{\ell} v_{j} = \sum_{j} v_{j} \sum_{b = 0}^{\ell} \binom{\ell}{b} q_{i}^{b} k_{j}^{b} (- c_{i})^{\ell - b} = \sum_{b = 0}^{\ell} \binom{\ell}{b} q_{i}^{b} (- c_{i})^{\ell - b} \sum_{j} k_{j}^{b} v_{j} \text{.}
\end{equation*}
We can preprocess a simple data structure in $\tO{n}$ time for all $b$ that answers queries $\phi(i, b) = \sum_{\relevant~j} k_{j}^{b} v_{j}$ in near constant time, since relevant indices lie in a continuous interval.
Then, for each $i$, since $P$ is of low degree, we compute $\sum_{\ell} m_{\ell} f(i, \ell)$ in $\tO{1}$ time, thus obtaining a near-linear time algorithm to approximate Attention.

{\bf Generalizing to Higher Dimensions.}
What happens when we try to generalize this algorithm to higher dimensions?
In one dimension, we knew that for each $i$, the set of relevant $j$ included all $j$ where $q_{i} k_{j} \gg q_{i} k_{\max} - \log(n/\varepsilon)$.
In higher dimensions, our goal is similarly to compute a set of relevant indices $j$ relative to each $Q_{i}$ such that (1) discarding irrelevant indices outside this range does not significantly affect the additive error of our estimate and (2) the range of $Q_{i} \cdot K_{j}$ is now sufficiently restricted so that we can use a low-degree polynomial to approximate $\exp(Q_{i} \cdot K_{j})$.

In one dimension, the set of all relevant $j$ consists exactly of the set of sufficiently large $k_{j}$ (either in the positive or negative direction).
A simple interval searching data structure can support the necessary queries.
In $d > 1$ dimensions, each row of $Q, K$ (denoted $Q_{i}, K_{j}$) is now a $d$-dimensional vector.
Even in 2 dimensions, different $K_{j}$ may be larger with respect to different $Q_{i}$. 
Sorting all $K_{j}$ with respect to each $Q_{i}$ already requires $n^{2}$ time.
Instead, the key observation is that the set of relevant $j$ with respect to $Q_{i}$ is exactly the set of $K_{j}$ contained in the half-space
\begin{equation*}
    \SET{x \in \R^d :  Q_{i} \cdot x \geq \max_{j} Q_{i} \cdot K_{j} - \log(n/\varepsilon)}.
\end{equation*}
This can be handled with a simplex range-searching data structure by Matou{\v{s}}ek \cite{Mat92PartitionTrees}.
In particular, we can initialize the data structure using points $\SET{K_{j}}$ so that for each $Q_{i}$ we can query the data structure for the appropriate half-space.
Matou{\v{s}}ek's data structure supports queries in $\tO{n^{1 - 1/d}}$ time and computes the sum of the weights assigned to all points in the half-space.
In high dimensions, the application of the polynomial method becomes more challenging, and we carefully precompute weights assigned to each point.
Using appropriate queries to the data structure over all $i$, our algorithm requires $\tO{n^{2 - 1/d}}$ time.
\Cref{fig:constd-attention} illustrates the algorithm.

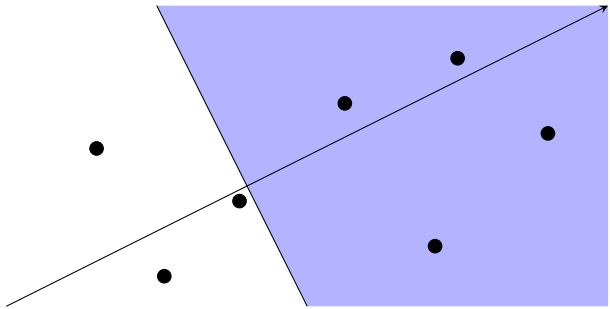
\begin{figure}[h!]
   \centering
   \usetikzlibrary{matrix}

\begin{tikzpicture}[>=stealth]
    \fill[blue!30] plot[smooth, samples=100, domain=1:3.8] (2, 4) -- (8, 4) -- (8, 0) -- (4, 0) -- cycle;
    
    \draw[->] (0, 0) -- (8, 4);
    \draw (2, 4) -- (4, 0);
    
    \node[circle, draw, fill, scale=0.5] (k1) at (7.2, 2.3) {};
    \node[circle, draw, fill, scale=0.5] (k2) at (6, 3.3) {};
    \node[circle, draw, fill, scale=0.5] (k3) at (1.2, 2.1) {};
    \node[circle, draw, fill, scale=0.5] (k4) at (4.5, 2.7) {};
    \node[circle, draw, fill, scale=0.5] (k5) at (3.1, 1.4) {};
    \node[circle, draw, fill, scale=0.5] (k6) at (5.7, 0.8) {};
    \node[circle, draw, fill, scale=0.5] (k7) at (2.1, 0.4) {};
\end{tikzpicture}
   \caption{Polynomial method algorithm for $d$-dimensional Attention illustrated for $q_{i} = (2, 1)$. Relevant points are in the shaded blue region. Irrelevant points are in the white region. Weights are omitted for clarity.}
   \label{fig:constd-attention}
\end{figure}

{\bf Generalizing to Low Rank Matrices.}
To generalize the algorithm for low-rank matrices $Q, K$ with rank $r$, we may decompose $Q = U_{Q} V_{Q}^{\transpose}, K = U_{K} V_{K}^{\transpose}$ where $U_{Q}, V_{Q}, U_{K},V_{K}$ are $n \times r$ matrices.
Then, we obtain \Cref{thm:low-rank-attention-alg} by applying \Cref{m-thm:const-d-att-alg} to $Q' = U_{Q}$ and $K'^{\transpose} = V_{Q}^{\transpose} U_{K} V_{K}^{\transpose}$ which may be computed in $O(nr)$ time.

{\bf Attention Gradient Computation.}
We revise the attention gradient computation formula, building on prior work \cite{alman2024fine}, to express the gradient as a composition of attention computations plus efficient matrix computations. 
By carefully selecting the order of the matrix operations, this formula can be evaluated with just attention computations and additional matrix operations taking $O(nd^2)$ time.

{\bf Outline.}
We give our algorithm in \Cref{sec:att-alg}.
The reduction from gradient computation to Attention computation is given in \Cref{sec:backwards-pass}.
Our lower bounds are presented in \Cref{sec:hardness}.

\subsection{Related Work}

Attention is a key bottleneck of the transformer architecture \cite{DBLP:conf/nips/VaswaniSPUJGKP17} which has become the predominant architecture in applications such as natural language processing.

{\bf Approximate Attention Computation.}
In an orthogonal line of work, many approximate notions of Attention have been studied to reduce its compute constraints with the goal of computing an approximation in linear time \cite{DBLP:conf/nips/BrownMRSKDNSSAA20,   DBLP:conf/icml/KatharopoulosV020, DBLP:conf/iclr/KitaevKL20,  DBLP:conf/nips/ZaheerGDAAOPRWY20, DBLP:conf/nips/ChenDWSRR21, DBLP:conf/iclr/ChoromanskiLDSG21, gao2023llmtensor, panigrahi2023trainable, malladi2023fine, song2024solving}.
Several works obtain provable guarantees as well as practical improvements, such as KDEFormer \cite{zandieh2023kdeformer}, HyperAttention \cite{hanhyperattention}, and PolySketchFormer \cite{kachampolysketchformer}.
However, these works only obtain theoretical guarantees with respect to matrix norms such as operator norm rather than any guarantee on the correctness of each entry.

Our work follows \cite{alman2023fastattentionboundedentries, alman2024fine} and focuses on the setting of computing Attention as exactly as possible (up to inverse polynomial precision in the output) and therefore obtains significantly stronger approximation guarantees.
Indeed, our lower bounds show that linear time approximations do not obtain such strong approximation guarantees.
In the low dimension regime $d = o(\log n)$, the Fast Multipole Method gives fast algorithms for the intimately related Gaussian Kernel Density Estimation (KDE) problem \cite{AlmanGuan24}.
However, these algorithms do not apply in our regime of polynomial entries.
In particular, using the standard reduction from Attention to Gaussian KDE\footnote{Map $x \mapsto (x, 0, r_{x})$ and $y \mapsto (y, r_{y}, 0)$ for appropriate $r_{x}, r_{y}$ so that $\norm{x - y}^{2} = R - 2 x \cdot y$ for some constant $R$.} the error produced by the known KDE algorithms is amplified so that only Attention with subpolynomial entries $B = 2^{o(\log n)}$ can be computed efficiently, even with constant dimension $d = O(1)$.

{\bf Attention with MLP Units.}
Many works have studied the expressive power of Transformers \cite{DBLP:conf/nips/SanfordHT23, Sanford0T24, SanfordFHTKHPM24, sanfordDepthWidth} for classical algorithmic problems.
In an independent work \cite{AlmanYu25} show that an Attention unit with input and output MLP Layers can compute $\OV$ and (Monochromatic) $\mip$.
While the constructions are essentially identical, we reduce (Bichromatic) $\mip$ to Attention, and thus obtain a strong conditional lower bound for $d = 2^{\Theta(\log^* n)}$ via \cite{chen2018maxip}.

Rather than allowing arbitrary inputs $Q, K, V \in \R^{n \times d}$, these works consider Attention with MLP Units: Given inputs $X \in \R^{n \times d_1}$ and $W_Q, W_K, W_V \in \R^{d_1 \times d}$, compute $Q = X W_Q, K = X W_K, V = X W_V$ and then $\att(Q, K, V)$.
This preprocessing step requires only $O(nd^2)$ time and does not change the running time of our algorithm.
Via a simple modification (to either our construction or \cite{AlmanYu25}),\footnote{We describe how to obtain $Q, K$. 
Given sets of vectors $A, B \subset \R^{d}$, let $X \in \R^{n \times 2d}$ consist of $A$ in the first $d$ columns, $B$ in the next $d$ columns.
Let $W_Q = \begin{pmatrix}
    I_{d} & 0
\end{pmatrix}$ and $W_K = \begin{pmatrix}
    0 & I_d
\end{pmatrix}$.} 
it is possible to show that an Attention unit with MLP Units can compute (Bichromatic) $\mip$.
Our reductions from $\OV$ naturally hold for bichromatic instances as well.

{\bf Alternative Settings of Attention Computation.}
Attention has also been studied in several settings, including dynamic updates \cite{brand2023algorithm}, quantum algorithms \cite{gao2023fast}, differential privacy \cite{gao2023differentially}, and I/O complexity \cite{sy:24}.
Conditional lower bounds for Attention have been studied as well \cite{keles2023computational, alman2023fastattentionboundedentries, alman2024fine, AlmanYu25}.

\section{Preliminaries}
\label{sec:prelim}

Let $\log$ denote the natural log.
Let $[n] = \{1, 2, \dotsc, n\}$.
For a matrix $M \in \R^{n \times m}$, we denote its $(i, j)$-entry by $M_{i, j}$, its transpose $M^{\transpose}$, and its inverse $M^{-1}$.
Let $\norm{M}_\infty := \max_{i, j} |M_{i, j}|$ and $\exp(M)$ denote applying $e^{x}$ entry-wise to $M$.
Let $\zerovec$ and $\onevec$ denote the all zeros and all ones vectors.
For a vector $v \in \R^n$, $\diag(v)$ denotes the $n \times n$ diagonal matrix whose $(i,i)$-entry equals $v_i$.

{\bf Fine-grained Complexity Hypotheses.}
We establish new fine-grained lower bounds for the approximate attention computation problem $\attc(n, d, B, \varepsilon)$.
These lower bounds are conditional on some well-known fine-grained complexity hypotheses, which we introduce below.

The Strong Exponential Time Hypothesis ($\SETH$) was introduced by \cite{impagliazzo2001seth}.
They hypothesized that solving $k$-$\sat$ for $k \geq 3$ cannot be significantly improved beyond exhaustive search.

\begin{hypothesis}[Strong Exponential Time Hypothesis ($\SETH$)]\label[hypothesis]{def:seth}
     For every $\varepsilon > 0$, there is a positive integer $k \geq 3$ such that $k$-$\sat$ on formulas with $n$ variables cannot be solved in $O(2^{(1-\varepsilon)n})$ time, even by randomized algorithms.
\end{hypothesis}

$\SETH$ is a strengthening of the famous $\ptime \neq \np$ conjecture and has later been used to derive fine-grained lower bounds for many fundamental computational problems, from string edit distance \cite{BackursI18} to graph diameter \cite{RodittyW13}.
Our lower bounds under $\SETH$ will proceed via reduction to the Orthogonal Vectors ($\OV$) Problem and the $\mip$ Problem.

\begin{theorem}[\cite{williams2004ovc}]
    \label{thm:ov-lower-bound}
    Assuming $\SETH$, for any $\delta > 0$ there is a constant $C$ such that any randomized algorithm solving $\OV$ in dimension $d = C \log n$ with high probability requires $\Omega(n^{2 - \delta})$ time.
\end{theorem}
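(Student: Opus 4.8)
The plan is to reduce $k$-$\sat$ to $\OV$ via the classical ``split-and-list'' argument, using the Sparsification Lemma to keep the dimension at $O(\log n)$. Fix $\delta > 0$. First I would preprocess the given $k$-$\sat$ formula $\varphi$ on $N$ variables with the Sparsification Lemma of Impagliazzo--Paturi--Zane: for any $\varepsilon_1 > 0$ (to be chosen as a function of $\delta$) there is a constant $c = c(k, \varepsilon_1)$ such that in $2^{\varepsilon_1 N}\poly(N)$ time one can produce $k$-$\sat$ instances $\varphi_1, \dots, \varphi_T$ with $T \le 2^{\varepsilon_1 N}$, each on the same $N$ variables and with at most $m := cN$ clauses, so that $\varphi$ is satisfiable iff some $\varphi_t$ is. Thus it suffices to decide satisfiability of a single linear-clause instance and OR the answers over the $T$ pieces.

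For a single instance $\psi$ with $m = cN$ clauses, partition the $N$ variables into two halves of size $\approx N/2$. Enumerate the $n := 2^{\lceil N/2\rceil}$ assignments $\alpha$ to the first half, and for each form the vector $a_\alpha \in \{0,1\}^m$ with $(a_\alpha)_\ell = 1$ iff $\alpha$ satisfies no literal of clause $C_\ell$; symmetrically build $n$ vectors $b_\beta$ from assignments to the second half. Then $a_\alpha \cdot b_\beta$ counts the clauses satisfied by neither half, so $a_\alpha \cdot b_\beta = 0$ exactly when $\alpha \cup \beta$ satisfies $\psi$. Hence $\psi$ is satisfiable iff the resulting $\OV$ instance, with $n = 2^{\lceil N/2\rceil}$ vectors in dimension $m = cN$, contains an orthogonal pair. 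Since $\log$ denotes the natural log and $n = 2^{\Theta(N)}$, the dimension is $\Theta(\log n)$, and by padding the vectors with zero coordinates we may take it to be exactly $C\log n$ for a constant $C$ depending only on $\delta$.

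For the contradiction, suppose some randomized algorithm decides $\OV$ in dimension $C\log n$ with high probability in $O(n^{2-\delta})$ time. Amplify its success probability to $1 - 2^{-N}$ (a $\poly(N)$ overhead) and run it on each of the $T \le 2^{\varepsilon_1 N}$ pieces, answering ``satisfiable'' iff some run does; a union bound over the pieces succeeds with high probability. The total running time is
\[
    2^{\varepsilon_1 N}\cdot O\!\left(\left(2^{\lceil N/2\rceil}\right)^{2-\delta}\right)\cdot\poly(N) \;=\; 2^{\left(1+\varepsilon_1-\delta/2\right)N}\poly(N).
\]
Choosing $\varepsilon_1 := \delta/8$, the exponent is $1 - 3\delta/8$, so the running time is $O(2^{(1-\delta/8)N})$ --- a randomized algorithm beating brute-force search for $k$-$\sat$. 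By $\SETH$, fixing $k = k(\delta/8)$ (and then $c = c(k,\delta/8)$, $C = C(c)$) there is no such algorithm, a contradiction; hence $\OV$ in dimension $C\log n$ requires $\Omega(n^{2-\delta})$ time.

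The one non-elementary ingredient --- and the step I would flag as the crux --- is the Sparsification Lemma. The bare split-and-list reduction yields dimension $d = m$, and an arbitrary $k$-$\sat$ instance may have $m = \Theta(N^k) = \polylog(n)$ clauses, which suffices only for the weaker ``$d = \omega(\log n)$'' form of the $\OV$ Hypothesis, not the $d = C\log n$ statement here. Sparsification trades a $2^{\varepsilon_1 N}$ factor in the running time --- absorbed by the slack in the $n^{2-\delta}$ bound --- for a linear clause count. The remaining pieces (correctness of the inner-product gadget, handling odd $N$ or unequal side sizes by trivial padding, and the amplification/union bound over subinstances) are routine.
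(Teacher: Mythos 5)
The paper does not prove this theorem: it is stated as a citation to Williams' 2004 paper, which established the SETH-to-OV reduction. Your proof is a correct and essentially complete reconstruction of that standard argument — sparsification to control the clause count, split-and-list to build the $\{0,1\}$-vectors from half-assignments, the observation that $a_\alpha \cdot b_\beta = 0$ iff $\alpha \cup \beta$ satisfies every clause, and the parameter-chase $\varepsilon_1 \to k \to c \to C$ to get $d = C\log n$. The amplification and union bound over the $T \le 2^{\varepsilon_1 N}$ sparsified subinstances correctly handle the randomized setting. You also correctly flag sparsification as the load-bearing step: without it the dimension is the raw clause count $\Theta(N^k) = \polylog(n)$, which proves only the weaker ``$d = \omega(\log n)$'' form. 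The only (harmless) sloppiness is the final line: with $\varepsilon_1 = \delta/8$ the exponent is $1 - 3\delta/8$, and you round it up to $1 - \delta/8$ to absorb the $\poly(N)$ factor, which is fine but could have been said more explicitly. In short, this matches the cited source's approach and is correct.
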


The $\mip$ problem asks to compute given sets of integer-valued vectors $A, B \in \Z^d$, $\max_{a \in A, b \in B} a \cdot b$. 
\cite{chen2018maxip} showed that computing $\mip$ requires $n^{2-o(1)}$ time even when $d = 2^{\Theta(\log^* n)}$.

\begin{restatable}[\cite{chen2018maxip}]{theorem}{ZMIPLB}
    \label{thm:z-mip-lower-bound}
    Assuming $\SETH$, for any $\delta > 0$ there is a constant $C$ such that any exact algorithm for $\mip$ in dimension $d = C^{\log^* n}$ with $O(\log n)$-bit entries requires $\Omega(n^{2 - \delta})$ time.
\end{restatable}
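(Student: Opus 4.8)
The plan is to reprove \Cref{thm:z-mip-lower-bound} (due to \cite{chen2018maxip}) by reducing from the classical $\OV$ lower bound and then iterating a dimension-reduction self-reduction for $\mip$ a total of $O(\log^* n)$ times. The starting point is \Cref{thm:ov-lower-bound}: under $\SETH$, $\OV$ on $n$ Boolean vectors in dimension $d_0 = C_0 \log n$ requires $n^{2-o(1)}$ time. Since inner products of Boolean vectors are non-negative and a pair is orthogonal iff its inner product is $0$, negating one side of an $\OV$ instance yields a $\mip$ instance over $\{-1, 0\}$ whose optimum equals $0$ exactly when an orthogonal pair exists. Hence, under $\SETH$, even deciding whether the $\mip$ value of an $n$-vector instance in dimension $\Theta(\log n)$ with $O(1)$-bit entries exceeds a given threshold requires $n^{2-o(1)}$ time. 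It remains to drive the dimension down to $2^{O(\log^* n)}$ while keeping the number of vectors $n^{1+o(1)}$ and entries of bit-length $O(\log n)$.

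The heart of the argument is a near-linear-time \emph{dimension self-reduction}: a transformation of a $\mip$ instance on $n$ vectors in dimension $d$ with $b$-bit integer entries into a $\mip$ instance on $n^{1+o(1)}$ vectors in dimension $d' = \polylog(d)$ (any function growing much more slowly than $d$ would do) with $(b + \polylog(d))$-bit entries, preserving the $\mip$ value, or at least the relevant threshold decision. The natural template is block-wise: split the $d$ coordinates into a few blocks, enumerate for each vector its restriction (``pattern'') to each block, and assemble new low-dimensional vectors whose coordinates, indexed by (block, pattern) pairs, are chosen so that the inner product of two new vectors reconstructs the sum over blocks of the partial inner products of the original vectors on matching patterns. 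To make the new dimension grow only polylogarithmically in $d$ rather than polynomially, one composes this with a communication-protocol / distributed-PCP-style encoding of the per-block inner products, at the cost of a further $n^{o(1)}$ factor in the number of vectors and a polylogarithmic additive increase in the entry size.

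Granting the self-reduction, the theorem follows by iteration. Applying it $t$ times to the starting instance produces dimension roughly the $t$-fold iterated logarithm of $n$, so $t = O(\log^* n)$ rounds bring the dimension to a constant; accounting for a fixed multiplicative overhead in the dimension per round, the final dimension is $C^{\log^* n} = 2^{O(\log^* n)}$. Over these $O(\log^* n)$ rounds the entry bit-length grows only additively, by $\sum_i \polylog(d_i) = o(\log n)$, so it stays $O(\log n)$, and the number of vectors is multiplied by $n^{o(1)}$ in total. A truly subquadratic exact algorithm for $\mip$ in the final small dimension would then, by unwinding the chain of reductions, solve $\OV$ in dimension $C_0 \log n$ in truly subquadratic time, contradicting $\SETH$ by \Cref{thm:ov-lower-bound}; choosing parameters appropriately gives the stated $\Omega(n^{2-\delta})$ bound for every $\delta > 0$ with a corresponding constant $C = C(\delta)$.

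The main obstacle I anticipate is the self-reduction itself: it must simultaneously shrink the dimension from $d$ to a quantity growing slower than every fixed power of $d$ (so that only $O(\log^* n)$ rounds are needed), preserve the $\mip$ answer exactly, and yet inflate the number of vectors and the entry size by only sub-polynomial and polylogarithmic amounts respectively. The tension between aggressive dimension reduction and a small blowup in the number of vectors is the crux — a plain block-and-enumerate scheme that collapses the dimension dramatically tends to explode the vector count or the alphabet size — and overcoming it is precisely where the recursive / protocol-based encoding is needed.
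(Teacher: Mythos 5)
This statement is imported by citation from \cite{chen2018maxip}; the paper you are reading does not supply its own proof, so there is no in-paper argument to compare against. What you have written must therefore stand or fall on its own as a reconstruction of Chen's result, and as it stands it does not.

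Your high-level plan does match the broad shape of Chen's argument: start from $\SETH$-hardness of $\OV$ in dimension $\Theta(\log n)$, pass to $\mip$, and then iterate a dimension-reducing self-reduction roughly $\log^* n$ times, tracking the blowup in the number of vectors and the entry bit-length. But the entire technical content of Chen's theorem lives inside the single lemma you invoke and do not prove: a near-linear-time transformation of an $n$-vector, dimension-$d$ $\mip$ instance into an $n^{1+o(1)}$-vector instance in dimension sub-polynomial in $d$, preserving the exact optimum (or at least the relevant threshold), with only a polylogarithmic additive increase in entry bit-length. You explicitly flag this as ``the main obstacle'' and ``the crux,'' and the sketch you offer --- block-and-enumerate plus an unspecified ``communication-protocol / distributed-PCP-style encoding'' --- does not resolve it. A plain block decomposition with pattern enumeration produces dimension on the order of (number of blocks) times (number of patterns), which does not shrink below $d$ without exploding the alphabet or the vector count, as you yourself observe. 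The actual construction in \cite{chen2018maxip} is a concrete number-theoretic gadget (built from a reduction through a $\Z$-valued inner-product problem, with careful control of how the entry magnitudes and the vector count grow per round), and none of that machinery appears here. Absent that lemma, the iteration and the parameter accounting in your third paragraph have nothing to iterate, so the argument does not go through. In short: the outline is consistent with how the result is actually proved, but the proposal is an outline with a hole exactly where the proof is, and so it does not constitute a proof of \Cref{thm:z-mip-lower-bound}.
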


\section{Fast Attention for Constant Head Dimension}
\label{sec:att-alg}

We present our algorithms for computing Attention in truly subquadratic time for constant head dimension $d$ and polynomial entry size $\entryBound$.

\ConstantDAttentionAlg*

The algorithm naturally extends to the case when $d$ is large but the matrices are low dimensional.

\subsection{Warm-up: \texorpdfstring{$d = 1$}{d = 1}}
\label{app:warmup}

For simplicity, we begin with our algorithm for the $d = 1$ case and explain how to generalize to the constant head dimension case later.
Formally, we prove in this section the following result.

\begin{lemma}
    \label{lem:1-d-attention-alg-faster}
    There is an algorithm computing $\attc(n, 1, B, \varepsilon)$ in $\tO{n \cdot \polylog(\entryBound/\varepsilon)}$ time.
\end{lemma}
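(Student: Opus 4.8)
The plan is to follow exactly the strategy sketched in the technical overview for the one-dimensional case, combining the ``irrelevant index'' truncation with the polynomial method. Given inputs $q, k, v \in [-B, B]^{n}$, the goal is to approximate $o_{i} = \sum_{j} p_{i,j} v_{j}$ where $p_{i,j} \propto \exp(q_{i} k_{j})$. First I would handle the normalization issue by observing that it suffices to approximate the unnormalized quantities $S_{i} := \sum_{j} \exp(q_{i} k_{j}) v_{j}$ and $Z_{i} := \sum_{j} \exp(q_{i} k_{j})$ to within a $(1 \pm \varepsilon')$ multiplicative factor for suitable $\varepsilon' = \Theta(\varepsilon / B)$; since $|o_{i}| \le B$ and $Z_{i} \ge 1$, a relative error $\varepsilon'$ on each translates to an additive error $O(\varepsilon' B) \le \varepsilon$ on $o_{i}$. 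So the whole problem reduces to: for each $i$, compute a $(1+\varepsilon')$-multiplicative approximation of $\sum_{j} \exp(q_{i} k_{j}) w_{j}$ where $w$ is either $v$ or $\onevec$.

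Next I would set up the truncation. Fix $i$; without loss of generality scale so we may assume $q_i \ge 0$ (the case $q_i < 0$ is symmetric, swapping the roles of $\max$ and $\min$ over the $k_j$), and let $c_{i} := q_{i} k_{\max} - \Theta(\log(n/\varepsilon'))$ where $k_{\max} = \max_j k_j$. Call $j$ \emph{relevant} if $q_{i} k_{j} \ge c_{i}$ and irrelevant otherwise. As in the overview, the total softmax mass on irrelevant $j$ is $O(\varepsilon')$ times that on the maximizer, so discarding them changes $S_i, Z_i$ by at most a $(1 \pm O(\varepsilon'))$ factor. The crucial structural point is that ``$q_i k_j \ge c_i$'' with $q_i \ge 0$ is equivalent to ``$k_j \ge c_i / q_i$'' (or, if $q_i = 0$, either all or none are relevant, a trivial case), so the relevant set is a suffix of the $k_j$ sorted in increasing order — an interval. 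For the relevant $j$ we have $q_i k_j - c_i \in [0, \Theta(\log(n/\varepsilon'))]$, so by the polynomial approximation result cited from \cite{alman2023fastattentionboundedentries} there is a polynomial $P$ of degree $\Delta = O(\log(n/\varepsilon') + \log(1/\varepsilon'))= \mathrm{polylog}(n/\varepsilon')$ with $|P(x) - e^{x}| \le \varepsilon' e^{x}$ for all $x \in [0, \Theta(\log(n/\varepsilon'))]$. Then $\sum_{\text{rel } j} \exp(q_i k_j) w_j = e^{c_i} \sum_{\text{rel } j} \exp(q_i k_j - c_i) w_j$ is approximated to relative error $\varepsilon'$ by $e^{c_i} \sum_{\text{rel } j} P(q_i k_j - c_i) w_j$ (the common factor $e^{c_i}$ cancels in the ratio $S_i/Z_i$, so we never actually need to compute it — avoiding overflow).

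The final ingredient is fast evaluation. Writing $P(x) = \sum_{\ell=0}^{\Delta} m_\ell x^\ell$ and expanding $(q_i k_j - c_i)^\ell = \sum_{b=0}^{\ell} \binom{\ell}{b} q_i^{b} (-c_i)^{\ell-b} k_j^{b}$ exactly as in the overview, the needed quantity is $\sum_{b=0}^{\Delta} \big(\sum_{\ell \ge b} m_\ell \binom{\ell}{b} (-c_i)^{\ell - b}\big) q_i^{b} \, \phi_w(i, b)$ where $\phi_w(i, b) := \sum_{\text{rel } j} k_j^{b} w_j$. I would sort the $k_j$ once in $\tO{n}$ time and build, for each $b \in \{0, \dots, \Delta\}$, a prefix-sum array of $k_j^{b} w_j$ over the sorted order; this is $\tO{n \Delta} = \tO{n \cdot \mathrm{polylog}(B/\varepsilon)}$ preprocessing. (To locate the relevant suffix for query $i$ one binary-searches for $c_i/q_i$ in the sorted keys.) Then each $\phi_w(i,b)$ is an $O(\log n)$-time range query, and for each $i$ the coefficient $\sum_{\ell \ge b} m_\ell \binom{\ell}{b}(-c_i)^{\ell-b}$ can be computed in $\tO{\Delta}$ time, so each $o_i$ costs $\tO{\Delta}$ and the total is $\tO{n \cdot \mathrm{polylog}(B/\varepsilon)}$. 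I should double-check that $\varepsilon$ itself may be taken $\ge 1/\mathrm{poly}(nB)$ without loss, so that $\mathrm{polylog}(n/\varepsilon') = \mathrm{polylog}(B/\varepsilon)$; combining all the multiplicative $(1 \pm O(\varepsilon'))$ errors and choosing $\varepsilon' = \Theta(\varepsilon/B)$ small enough gives the claimed $\varepsilon$-additive bound.

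The main obstacle I anticipate is bookkeeping the precision and magnitude issues carefully rather than any conceptual difficulty: one must verify that $P$ has bounded coefficients (so that evaluating $P(q_i k_j - c_i)$ via the binomial expansion does not blow up intermediate values beyond $\mathrm{polylog}(B/\varepsilon)$-bit precision — the polynomial from \cite{alman2023fastattentionboundedentries} is a truncated Taylor or Chebyshev series with controlled coefficients, so this should go through), that the $e^{c_i}$ factors genuinely cancel and never need to be materialized, and that the relative-to-additive error conversion via $|o_i| \le B$, $Z_i \ge 1$ is airtight for \emph{every} $i$ including those where all indices are irrelevant except the maximizer. None of these is deep, but they are the places where the argument needs care.
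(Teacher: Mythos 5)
Your proposal is essentially the paper's proof: same truncation to the relevant (half-line) keys with $c_i = q_i k_{\max} - \Theta(\log(n/\varepsilon'))$, same degree-$O(\log(n/\varepsilon'))$ polynomial from \cite{aggarwal2022optimal, alman2023fastattentionboundedentries}, same binomial-expansion rearrangement into $\phi_w(i,b)$ prefix sums over sorted $k$, same binary search for the boundary, and the same $\varepsilon' = \Theta(\varepsilon/B)$ rescaling. One small slip in the setup is worth fixing: because $v$ may have negative entries, $S_i = \sum_j e^{q_i k_j} v_j$ can cancel to near zero, so you cannot in general hope for a $(1\pm\varepsilon')$ \emph{multiplicative} approximation of $S_i$ (nor is $Z_i \ge 1$ guaranteed when all $q_i k_j < 0$); the correct invariant, which your algorithm in fact achieves and which the paper uses, is the additive bound $|\hat S_i - S_i| \le \varepsilon' \sum_j e^{q_i k_j}|v_j| \le \varepsilon' B Z_i$, which combined with the multiplicative $Z_i$ bound yields $|\hat S_i/\hat Z_i - o_i| = O(\varepsilon' B)$ directly.
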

In the above, $\att(q, k, v)$ is defined by viewing vectors $q, k, v \in \R^{n \times 1}$ as matrices.
When $d = 1$, the input is given by vectors $q, k, v \in [-\entryBound, \entryBound]^{n}$.
In the output vector, we hope to compute the entries
\begin{equation*}
    o_{i} = \frac{\sum_{j} e^{q_i k_j} v_j}{\sum_{j} e^{q_i k_j}}
\end{equation*}
for all $i$.
Define the softmax probabilities 
\begin{equation*}
    p_{i, j} = \frac{e^{q_i k_j}}{\sum_{j'} e^{q_i k_{j'}}}
\end{equation*}
so that $o_{i} = \sum_{j} p_{i, j} v_j$.

We begin with an overview of our algorithm.
Without loss of generality, we assume $q_{i} \geq 0$ are non-negative for all $i$.
In particular, if we compute $\att(|q|, k, v)$ and $\att(|q|, -k, v)$, where $|q|$ is a vector where we take entrywise absolute value of $q$,  we can recover the entries of $\att(q, k, v)$ from the two outputs.
If $q_{i} \geq 0$, we read the output from $\att(|q|, k, v)$ and otherwise we read the output from $\att(|q|, -k, v)$.

Let $k_{\max}$ denote the maximum value of $k$ and $p_{\max}^{(i)} = \max_{j} p_{i, j}$ be the corresponding maximum probability for some fixed $i$.
First, we argue that we may ignore all indices where $k_j \ll k_{\max}$.
Since all of these indices have exponentially small $p_{i, j}$, ignoring these indices incurs only a small additive error to the output estimate $\hat{o}_{i}$.
Second, we argue that the remaining values of $k_{j}$ satisfy the property that $q_{i} k_{j}$ lie in a small range.
In particular, on this range, we use the low-degree polynomial $P$ from  \cite{aggarwal2022optimal} to give a low-error approximation of the exponential function.
Using this polynomial approximation, we instead compute
\begin{equation*}
    \hat{o}_{i} = \frac{\sum_{j} P(q_{i} k_{j} - c) v_{j}}{\sum_{j} P(q_{i} k_{j} - c)} \approx \frac{\sum_{j} e^{q_i k_j - c} v_{j}}{\sum_{j} e^{q_i k_j - c}} = \frac{\sum_{j} e^{q_i k_j} v_{j}}{\sum_{j} e^{q_i k_j}} = o_{i}
\end{equation*}
for some value $c$ that guarantees $q_{i} k_{j} - c$ lies in a bounded interval around $0$ for the remaining values $k_{j}$. 

Consider a monomial $m_{\ell} x^{\ell}$ of $P$.
Then $\sum_{j} (q_{i} k_{j} - c)^{\ell} = \sum_{b = 0}^{\ell} \binom{\ell}{b} (-c)^{\ell - b} q_{i}^{b} \sum_{j} k_{j}^{b}$.
This allows to pre-compute $\sum_{j} k_{j}^{b}$ for all exponents $b$ in a pre-processing phase, and then efficiently compute $\hat{o}_{i}$ using the pre-computed values.
We now describe the algorithm in more detail.

\paragraph{Step 1: Removing Irrelevant Keys.}

We argue that we can ignore irrelevant keys $k_j$ with only small additive error in the estimate.

\begin{definition}
    \label{def:relevant-keys}
    An index $j \in [n]$ is {\em relevant} with respect to $q_{i}$ if $q_{i} k_j \geq \max_{j} q_{i} k_{j} - \log(n/\varepsilon)$. 
    Otherwise, we say that $j$ is {\em irrelevant} with respect to $q_{i}$.
    When $q_{i}$ is clear from context, we simply say $j$ is relevant (or irrelevant).
\end{definition}
Since $q_{i} \geq 0$, by rearranging, note that for all irrelevant $j$, we have $q_{i} k_j - q_{i} k_{\max} \leq - \log(n/\varepsilon)$.
Then, we conclude
\begin{equation*}
    \frac{p_{i, j}}{p_{\max}^{(i)}} = e^{q_i (k_j - k_{\max})} \leq \frac{\varepsilon}{n}.
\end{equation*}
Summing over all such indices $j$,
\begin{equation*}
    \sum_{\irrelevant~j} p_{i, j} \leq \sum_{\irrelevant~j} p_{\max}^{(i)} \frac{\varepsilon}{n} \leq \varepsilon.
\end{equation*}
Thus, if we define 
\begin{equation*}
    p_{i, j}^{(r)} = \begin{cases}
        \frac{p_{i, j}}{\sum_{\relevant~j'} p_{i, j'}} & \textrm{$j$ is relevant,} \\
        0 & \otherwise,
    \end{cases}
\end{equation*} 
we can obtain the guarantees for all relevant $j$
\begin{equation*}
    p_{i, j} \leq p_{i, j}^{(r)} \leq \frac{p_{i, j}}{1 - \varepsilon}.
\end{equation*}
Then, define
\begin{equation*}
    o_i^{(r)} = \sum_{j} p_{i, j}^{(r)} v_j
\end{equation*}
so that
\begin{align*}
    \left| o_{i}^{(r)} - o_{i} \right| &\leq \left|\sum_{\relevant~j} (p_{i, j}^{(r)} - p_{i, j}) v_{j}\right| + \left|\sum_{\irrelevant~j} (p_{i, j}^{(r)} - p_{i, j}) v_{j}\right| \\
    &\leq \left|\sum_{\relevant~j} \frac{\varepsilon}{1 - \varepsilon} p_{i, j} v_{j}\right| + \left|\sum_{\irrelevant~j} p_{i, j} v_{j}\right| \\
    &\leq \frac{\varepsilon}{1 - \varepsilon} \entryBound + \varepsilon \entryBound \\
    &\leq 3 \varepsilon \entryBound
\end{align*}
where we assume $\varepsilon < \frac{1}{2}$.

\paragraph{Step 2: Polynomial Approximation of Exponential.}

We now show how we use polynomial approximations of $e^{x}$ to efficiently estimate attention.

Our goal is to approximate $o^{(r)}$:
\begin{equation*}
    o_{i}^{(r)} = \sum_{\relevant~j} p_{i, j}^{(r)} v_{j} = \frac{\sum_{\relevant~j} e^{q_{i} k_{j}} v_{j}}{\sum_{\relevant~j} e^{q_{i} k_{j}}} = \frac{\sum_{\relevant~j} e^{q_{i} k_{j} - c(q_{i})} v_{j}}{\sum_{\relevant~j} e^{q_{i} k_{j} - c(q_{i})}} 
\end{equation*}
where $c(q_i) = q_i \cdot k_{\max} - \log(n/\varepsilon)$.
In particular, we have $q_{i} k_{j} - c(q_{i}) \in \left[ 0, \log(n/\varepsilon) \right]$ for every relevant $j$.
The following result allows us to approximate the exponential function with polynomials on a bounded interval.

\begin{lemma}[\cite{aggarwal2022optimal, alman2023fastattentionboundedentries}]
    \label[lemma]{lem:poly-mult-approx}
    Let $\polyBound > 1$ and $\varepsilon < 0.1$.
    There is a polynomial $P: \R \rightarrow \R$ of degree $g = \bigTh{\max\left( \frac{\log(1/\varepsilon)}{\log(\log(1/\varepsilon)/\polyBound)}, \polyBound \right)}$ such that for all $x \in [-\polyBound, \polyBound]$, we have $
        |P(x) - \exp(x)| < \varepsilon \exp(x). $ 
    Moreover, its coefficients are rationals with $\poly(g)$-bit integer numerators and denominators and can be computed in $\poly(g)$-time.
\end{lemma}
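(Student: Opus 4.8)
The plan is to take $P$ to be the degree-$g$ truncation of the Taylor series of the exponential, $P(x) = \sum_{k=0}^{g} x^{k}/k!$, with $g$ equal to $C\max\!\left(\frac{\log(1/\varepsilon)}{\log(\log(1/\varepsilon)/\polyBound)},\, \polyBound\right)$ for a sufficiently large absolute constant $C$ (when $\log(1/\varepsilon)\le\polyBound$ the first argument is vacuous and the maximum is simply $\polyBound$). All three assertions of the lemma then follow from elementary estimates on the Taylor remainder, so essentially the only work is calibrating $C$ so that the error bound matches the claimed degree.

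First I would bound the remainder uniformly on $[-\polyBound,\polyBound]$. Since $C\ge 2$ forces $g\ge 2\polyBound$, for $|x|\le\polyBound$ and $k\ge g$ consecutive Taylor terms shrink by a factor $|x|/(k+1)\le\polyBound/(g+1)\le 1/2$ in absolute value; summing the geometric tail when $x>0$, and using that for $x<0$ the tail is an alternating series bounded by its first term, gives $|P(x)-\exp(x)| = \left|\sum_{k>g} x^{k}/k!\right| \le 2\,\polyBound^{\,g+1}/(g+1)!$. Invoking $(g+1)!\ge((g+1)/e)^{g+1}$ turns this into $|P(x)-\exp(x)|\le 2\left(e\polyBound/(g+1)\right)^{g+1}$, valid for every $x$ in the interval.

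Next I would upgrade this absolute bound to the desired multiplicative one. Since $\exp(x)\ge e^{-\polyBound}$ on $[-\polyBound,\polyBound]$, it suffices that $2\left(e\polyBound/(g+1)\right)^{g+1}\le\varepsilon\,e^{-\polyBound}$, i.e.\ $(g+1)\log\!\left(\frac{g+1}{e\polyBound}\right)\ge\polyBound+\log(2/\varepsilon)$. I would check this by cases. If $\log(1/\varepsilon)\le\polyBound$ then $g=\Theta(\polyBound)$, the right-hand side is at most $3\polyBound$, and the left-hand side is at least $C\polyBound\log(C/e)$, which exceeds $3\polyBound$ once $C$ is a large enough constant. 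If $\log(1/\varepsilon)>\polyBound$, put $t:=\log(1/\varepsilon)/\polyBound>1$; using the elementary fact $t/\log t\ge e$ (so the first argument of the max dominates and $g=\Theta(\log(1/\varepsilon)/\log t)$) one gets $\frac{g+1}{e\polyBound}=\Theta(t/\log t)$, hence $\log\frac{g+1}{e\polyBound}\ge\log\frac{t}{\log t}$, while $\frac{\log(t/\log t)}{\log t}\ge 1-\tfrac1e$; combining, the left-hand side is at least $C(1-\tfrac1e)\log(1/\varepsilon)$, which beats $\polyBound+\log(2/\varepsilon)\le 2\log(1/\varepsilon)+1$ for large enough $C$. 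This degree calibration is the one delicate step; I expect the main care to be needed in pinning down the cross-over regime $\polyBound<\log(1/\varepsilon)=O(\polyBound)$, and in checking that the hypotheses $\polyBound>1$ and $\varepsilon<0.1$ keep all the logarithms positive and the constants closed.

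Finally, the coefficient claim is immediate: the coefficient of $x^{k}$ in $P$ is $1/k!$, so its numerator is $1$ and its denominator $k!\le g!$ has $\log_{2}(g!)=O(g\log g)=\poly(g)$ bits and is computable in $\poly(g)$ time by $g$ successive multiplications (alternatively, clear everything to the common denominator $g!$ to get integer coefficients $g!/k!\le g!$ of the same bit length). Putting the pieces together establishes the lemma; everything outside the middle degree estimate is routine bookkeeping.
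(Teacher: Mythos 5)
Your argument is correct; note that the paper does not supply its own proof of this lemma (it cites \cite{aggarwal2022optimal, alman2023fastattentionboundedentries}), so there is no in-paper proof to compare against. Your elementary route --- truncate the Taylor series at degree $g=C\max\bigl(\log(1/\varepsilon)/\log(\log(1/\varepsilon)/\polyBound),\,\polyBound\bigr)$, bound the tail by $2\polyBound^{g+1}/(g+1)!$ via geometric decay (for $x>0$) and the alternating-series estimate (for $x<0$), invoke $(g+1)!\ge((g+1)/e)^{g+1}$, and reduce the multiplicative target to the absolute target $\varepsilon e^{-\polyBound}$ --- correctly establishes the existence (upper-bound) side of the lemma, which is all this paper actually uses. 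The calibration inequality $(g+1)\log\bigl((g+1)/(e\polyBound)\bigr)\ge \polyBound+\log(2/\varepsilon)$ is checked soundly in both regimes; the facts $t/\log t\ge e$ for $t>1$ and $\log\log t/\log t\le 1/e$ are applied correctly, and the cross-over regime $t\to1^{+}$ is benign since your left-hand side blows up while the right-hand side stays $O(\polyBound)$. Two minor items: the lemma states a strict inequality, so take $C$ slightly larger; and fix one $C$ (the larger of the two you derive) that serves both cases. What you deliberately omit is the $\Omega(\cdot)$ direction implicit in the $\Theta$ --- that is a polynomial-approximation lower bound from \cite{aggarwal2022optimal}, which the algorithms in this paper never invoke, so the omission is harmless.
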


On this interval, by \cref{lem:poly-mult-approx}, there is a polynomial $P$ of degree \[
g = \bigO{\max\left( \frac{\log(1/\varepsilon)}{\log(\log(1/\varepsilon)/\log(n/\varepsilon))}, \log(n/\varepsilon) \right)} = \bigO{\log(n/\varepsilon)}\]
such that $|P(x) - \exp(x)| \leq \varepsilon \exp(x)$ for all $x \in \left[ 0, \log(n/\varepsilon) \right]$.
Then, we define $\hat{p}_{i, j} = \frac{P(q_{i} k_{j} - c(q_{i}))}{\sum_{\relevant~j'} P(q_{i} k_{j'} - c(q_{i}))}$ for relevant $j$ and $\hat{p}_{i, j} = 0$ otherwise.
Next, define $\hat{o}_{i} = \sum_{j} \hat{p}_{i, j} v_{j}$.
First, we prove the desired approximation guarantee.
For all relevant $j$,
\begin{align*}
    \frac{1 - \varepsilon}{1 + \varepsilon} p_{i, j}^{(r)} \leq \hat{p}_{i, j} &\leq \frac{1 + \varepsilon}{1 - \varepsilon} p_{i, j}^{(r)}
\end{align*}
so that
\begin{align*}
    \left| \hat{o}_{i} - o_{i}^{(r)} \right| &\leq \entryBound \sum_{\relevant~j} \left|\hat{p}_{i, j} - p_{i, j}^{(r)}\right| \\
    &\leq \entryBound \sum_{\relevant~j} 4 \varepsilon p_{i, j}^{(r)} \leq 4 \varepsilon \entryBound. 
\end{align*}
Combined with our previous bound using triangle inequality, we get
\begin{equation}
    \label{eq:1-d-att-approx}
    \norm{\hat{o} - o}_{\infty} \leq \norm{\hat{o} - o^{(r)}}_{\infty} + \norm{o^{(r)} - o}_{\infty} \leq 7 \varepsilon \entryBound.
\end{equation}

Now, we describe how to compute $\hat{o}$ efficiently.
Consider a monomial $m_{\ell} x^{\ell}$ of $P$.
Then,
\begin{equation*}
    m_{\ell} (q_{i} k_{j} - c(q_{i}))^{\ell} = m_{\ell} \sum_{b = 0}^{\ell} \binom{\ell}{b} q_{i}^{b} k_{j}^{b} \left( - c(q_{i}) \right)^{\ell - b}
\end{equation*}
Summing over the indices $j$,
\begin{align*}
    \sum_{\relevant~j} m_{\ell} (q_{i} k_{j} - c(q_{i}))^{\ell} &= m_{\ell} \sum_{\relevant~j} \sum_{b = 0}^{\ell} \binom{\ell}{b} q_{i}^{b} k_{j}^{b} \left( - c(q_{i}) \right)^{\ell - b} \\
    &= m_{\ell} \sum_{b = 0}^{\ell} \binom{\ell}{b} q_{i}^{b} \left(-c(q_{i})\right)^{\ell - b} \sum_{\relevant~j} k_{j}^{b}
\end{align*}
Let $\phi(i, b) = \sum_{\relevant~j} k_{j}^{b}$ be the sum of $k_{j}^{b}$ for all $j$ relevant with respect to $q_{i}$.
In particular,
\begin{align*}
    \sum_{\relevant~j} P(q_{i} k_{j} - c(q_{i})) &= \sum_{\relevant~j} P(q_{i} k_{j} - c(q_{i})) \\
    &= \sum_{\relevant~j} \sum_{\ell} m_{\ell} \left( q_{i} k_{j} - c(q_{i}) \right)^{\ell} \\
    &= \sum_{\ell} m_{\ell} \sum_{b = 0}^{\ell} \binom{\ell}{b} q_{i}^{b} \left(-c(q_{i})\right)^{\ell - b} \phi(i, b).
\end{align*}
Following similar computations we obtain
\begin{equation}
    \label{eq:1-d-att-poly}
    \sum_{j} P(q_{i} k_{j} - c(q_{i})) v_{j} = \sum_{\ell} m_{\ell} \sum_{b = 0}^{\ell} \binom{\ell}{b} q_{i}^{b} \left(-c(q_{i})\right)^{\ell - b} \phi_{v}(i, b)
\end{equation}
where $\phi_{v}(i, b) = \sum_{\relevant~j} k_{j}^{b} v_{j}$.

The following lemmas show that we can compute $\hat{o}$ efficiently. 

\begin{lemma}
    \label[lemma]{lem:compute-phi-i-b}
    Let $b \geq 1$ and $k_{1} \geq k_{2} \geq \dotsc \geq k_{n}$.
    Let $q_{1}, \dotsc, q_{n}$ be arbitrary.
    Then, $\phi(i, b), \phi_{v}(i, b)$ can be computed for all $i$ in time $O(n \log n)$ time.
\end{lemma}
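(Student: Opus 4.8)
The plan is to reduce the computation to a single sorting step followed by a prefix-sum sweep. First I would observe that, since $q_i \geq 0$ by the reduction in the warm-up, the relevance condition $q_i k_j \geq \max_{j'} q_i k_{j'} - \log(n/\varepsilon)$ is equivalent to $k_j \geq k_{\max} - \log(n/\varepsilon)/q_i$ (handling $q_i = 0$ separately, where either all indices are relevant or the threshold is vacuous). In other words, with the keys sorted in decreasing order $k_1 \geq k_2 \geq \dots \geq k_n$ as in the hypothesis, the set of relevant indices with respect to $q_i$ is exactly a \emph{prefix} $\{1, 2, \dots, t_i\}$ of $[n]$, where $t_i$ is the largest index with $k_{t_i} \geq k_{\max} - \log(n/\varepsilon)/q_i$.

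Next I would precompute, in $O(n)$ time, the prefix sums $\Sigma_b(t) := \sum_{j=1}^{t} k_j^b$ and $\Sigma_{v,b}(t) := \sum_{j=1}^{t} k_j^b v_j$ for $t = 0, 1, \dots, n$ (for the fixed exponent $b$; each entry is obtained from the previous by one multiply-add). Then $\phi(i,b) = \Sigma_b(t_i)$ and $\phi_v(i,b) = \Sigma_{v,b}(t_i)$, so once we know $t_i$ for every $i$ we can read off all answers in $O(n)$ total time. To find all the $t_i$: each $t_i$ is determined by a threshold on $k$, so I would either binary-search for each $i$ in the sorted array $k_1 \geq \dots \geq k_n$ in $O(\log n)$ time (total $O(n \log n)$, matching the claimed bound), or — if one prefers — sort the $n$ threshold values $k_{\max} - \log(n/\varepsilon)/q_i$ together with the $k_j$'s and merge, again $O(n \log n)$. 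The $O(n\log n)$ in the lemma statement already absorbs the sorting of the $k_j$'s, so this fits.

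The only genuinely delicate points are bookkeeping rather than mathematical: (i) correctly handling $q_i = 0$ and ties in the $k_j$ values at the threshold (here any consistent tie-breaking works, since the final approximation guarantee in \eqref{eq:1-d-att-approx} is insensitive to which side of an exactly-borderline index we place it, as long as it is consistent with the definition of relevance used elsewhere), and (ii) ensuring the arithmetic stays within the claimed time bound — the entries $k_j^b$ can be large, but $b = O(\log(n/\varepsilon))$ and $|k_j| \leq B$, so each $k_j^b$ has $O(b \log B) = \polylog(nB/\varepsilon)$ bits and each arithmetic operation costs $\polylog$, which is swallowed by the $\tilde O$ notation in the downstream lemmas (the statement here as written charges unit cost per operation). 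I do not expect a real obstacle; the content of the lemma is the reduction of "relevant set" to "sorted prefix," and everything else is a standard prefix-sum sweep.
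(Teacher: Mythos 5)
Your proof is correct and follows essentially the same approach as the paper's: reduce the relevance condition to a prefix of the sorted keys, precompute prefix sums of $k_j^b$ and $k_j^b v_j$ in $O(n)$, and binary-search for the prefix endpoint $t_i$ per index $i$. The extra remarks on tie-breaking, $q_i = 0$, and bit-length are reasonable bookkeeping but not in tension with the paper's argument.
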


\begin{proof}
    Given $b$, we can compute $\sum_{j = 1}^{J} k_{j}^{b}$ for all $1 \leq J \leq n$ in $O(n)$ time.
    Then, for each $i$, we use binary search to find $J_{i}$, the maximum index $j$ where $k_j \geq \max_{j} k_{j} - \log(n/\varepsilon) / q_{i}$, i.e., $k_{j}$ is relevant with respect to $q_{i}$. Then we assign $\phi(i, b) = \sum_{j = 1}^{J_{i}} k_{j}^{b}$.
    Over all $i$, this takes $O(n \log n)$ time. We can
    $\phi_{v}(i, b)$  similarly.
\end{proof}

\begin{algorithm}

\SetKwInOut{Input}{Input}\SetKwInOut{Output}{Output}\SetKwInOut{Parameters}{Parameters}
\Input{vectors $q, k, v \in [-\entryBound, \entryBound]^{n}$.}
\Parameters{error parameter $\varepsilon$}
\Output{$\hat{o}$ satisfying $\norm{\hat{o} - \att(q, k, v)}_{\infty} \leq \varepsilon \entryBound$.}

Compute a polynomial $P(x) = \sum_{\ell} m_{\ell} x^{\ell}$ for range $[0, \log(n/\varepsilon)]$ using \Cref{lem:poly-mult-approx}.

Compute $k_{\max} \gets \max_{j} k_{j}$ and sort $\SET{k_{j}}$.

Compute $\phi(i, b), \phi_{v}(i, b)$ for all $1 \leq i \leq n, 1 \leq b \leq g$ using \Cref{lem:compute-phi-i-b}.

\For{$1 \leq i \leq n$}{
    Compute $\hat{o}_{i} \gets \frac{\sum_{\relevant~j} P(q_{i} k_{j} - c(q_{i})) v_{j}}{\sum_{\relevant~j} P(q_{i} k_{j} - c(q_{i}))}$ using \Cref{lem:compute-hat-o}.
}

\Return $\hat{o}$
\caption{$\vecAttF(q, k, v)$} 
\label{alg:1-d-attention-faster}

\end{algorithm}

\begin{lemma}
    \label[lemma]{lem:compute-hat-o}
    Let $P(x) = \sum_{\ell} m_{\ell} x^{\ell}$ be a degree $g$-polynomial with $\poly(g)$-bit coefficients.
    Given $q_{i}, \phi(i, b), \phi_{v}(i, b)$, there is an algorithm computing $\hat{o}_{i}$ in $\poly(g)$ time.
\end{lemma}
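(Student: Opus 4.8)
The plan is to obtain $\hat o_i$ by directly evaluating the closed forms for its numerator and denominator already derived in \eqref{eq:1-d-att-poly}. Write $N_i := \sum_{\relevant~j} P(q_i k_j - c(q_i))\, v_j$ and $D_i := \sum_{\relevant~j} P(q_i k_j - c(q_i))$, so that $\hat o_i = N_i / D_i$. By \eqref{eq:1-d-att-poly} we have
\[
  N_i \;=\; \sum_{\ell} m_\ell \sum_{b=0}^{\ell} \binom{\ell}{b} q_i^{\,b}\bigl(-c(q_i)\bigr)^{\ell-b}\,\phi_v(i,b),
\]
and $D_i$ is given by the identical expression with $\phi_v(i,b)$ replaced by $\phi(i,b)$, where $c(q_i) = q_i k_{\max} - \log(n/\varepsilon)$. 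Since $q_i$, the globally precomputed $k_{\max}$, and the values $\phi(i,b),\phi_v(i,b)$ for $0 \le b \le g$ are all available, this is a fixed arithmetic formula in known quantities.

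Concretely, I would first precompute the powers $q_i^{\,0},\dots,q_i^{\,g}$ and $(-c(q_i))^0,\dots,(-c(q_i))^g$ with $O(g)$ multiplications each, and all binomial coefficients $\binom{\ell}{b}$ for $0 \le b \le \ell \le g$ via Pascal's rule with $O(g^2)$ additions. Each of $N_i$ and $D_i$ is then a sum of $O(g^2)$ terms, every term a product of a coefficient $m_\ell$ with four precomputed numbers, so accumulating them costs $O(g^2)$ arithmetic operations; finally output $N_i/D_i$. To see this runs in $\poly(g)$ time it suffices to bound the bit-lengths of everything that appears: $m_\ell$ has $\poly(g)$ bits by \Cref{lem:poly-mult-approx}, $q_i, k_{\max}\in[-\entryBound,\entryBound]$, $|c(q_i)| \le \entryBound^2 + \log(n/\varepsilon)$, $\binom{\ell}{b} \le 2^g$, and $|\phi(i,b)|,|\phi_v(i,b)| \le n\,\entryBound^{\,b+1}$; using $g = \Theta(\log(n/\varepsilon))$ and $\entryBound = \poly(n)$, every partial product has bit-length $\poly(g)$, so $O(g^2)$ operations on such numbers take $\poly(g)$ time total. (Keeping everything as exact rationals works equally well: the only denominators introduced are those of the $m_\ell$, which are $\poly(g)$-bit.)

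The one point needing an argument is that the division is legitimate, i.e.\ $D_i \neq 0$. By \Cref{lem:poly-mult-approx}, $P(x) > (1-\varepsilon)e^x > 0$ for every $x \in [0,\log(n/\varepsilon)]$, and the index $j$ attaining $\max_{j'} q_i k_{j'}$ is always relevant with respect to $q_i$; hence $D_i = \sum_{\relevant~j} P(q_i k_j - c(q_i)) > 0$. I expect the only mildly delicate part to be the bit-complexity bookkeeping above — in particular confirming that the powers of $c(q_i)$ do not blow up, which is exactly what $k_{\max} \le \entryBound = \poly(n)$ guarantees; everything else is a routine bounded-size evaluation of the two polynomial-in-$q_i$ expressions.
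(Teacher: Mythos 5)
Your proof is correct and takes the same route as the paper: both evaluate the numerator $N_i$ and denominator $D_i$ directly from the binomial expansion in \eqref{eq:1-d-att-poly} using the precomputed $\phi(i,b)$ and $\phi_v(i,b)$. The paper states this in one line and omits the bit-length bookkeeping and the check that $D_i \neq 0$; your additions of those details are correct and harmless but do not change the argument.
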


\begin{proof}
    We recall that
    \begin{equation*}
        \hat{o}_{i} = \sum_{j} \hat{p}_{i, j} v_{j} = \frac{\sum_{j} P(q_{i} k_{j} - c(q_{i})) v_{j}}{\sum_{j} P(q_{i} k_{j} - c(q_{i}))}.
    \end{equation*}
    From \Cref{eq:1-d-att-poly}, we note
    \begin{equation*}
        \sum_{j} P(q_{i} k_{j} - c(q_{i})) v_{j} = \sum_{\ell} m_{\ell} \sum_{b = 0}^{\ell} \binom{\ell}{b} q_{i}^{b} \left(-c(q_{i})\right)^{\ell - b} \phi_{v}(i, b)
    \end{equation*}
    so that given access to $\phi_{v}(i, b)$, we can compute the numerator in $\poly(g)$-time.
    Similarly, by accessing $\phi(i, b)$, we can compute the denominator as well.
\end{proof}

To conclude the proof of \Cref{lem:1-d-attention-alg-faster}, we apply \Cref{alg:1-d-attention-faster} with $\varepsilon' = \frac{\varepsilon}{7\entryBound}$ so we obtain $\varepsilon$-approximation under \Cref{eq:1-d-att-approx}.
In particular, the degree of the polynomial required is
\begin{equation*}
    g = \bigO{\log(n/\varepsilon')} = \bigO{\log(n\entryBound/\varepsilon)}.
\end{equation*}
Then, \Cref{alg:1-d-attention-faster} takes time $\tO{n\cdot \polylog(\entryBound/\varepsilon)}$.

\subsection{Algorithm for Attention with Constant Head Dimension}

We now give a formal description of our algorithm for constant head dimension and prove \Cref{m-thm:const-d-att-alg}.
For $d > 1$, we require an efficient data structure for the range searching problem.

\begin{definition}[Simplex Range Searching]
    \label[definition]{def:range-searching}
    Preprocess a weighted point set $\SET{(k_{i}, w_{i})}$ where $k_{i} \in \R^{d}$ and $w_{i} \in \R$ so that given any simplex query $\sigma$, the data structure returns $\sum_{k_{i} \in \sigma} w_i$.
\end{definition}

Matou{\v{s}}ek \cite{Mat92PartitionTrees} gives an efficient data structure for the simplex range searching problem.  In our work, we will only query the data structure with halfspaces $\sigma$, which are special case of simplex queries (one can imagine a simplex defined by the half-space and a sufficiently large bounding box that contains all input points).

\begin{theorem}[\cite{Mat92PartitionTrees}]
    \label[theorem]{thm:range-searching-ds}
    There is a data structure $\rsds$ for the Simplex Range Searching problem for $n$ input points in $d$-dimension with $O(n \log n)$ preprocessing and $\tO{n^{1 - 1/d}}$ query time.
\end{theorem}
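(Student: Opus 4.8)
The plan is to realize $\rsds$ as a \emph{partition tree} in the sense of Matou\v{s}ek, built on top of the \emph{Partition Theorem}: for every set $S$ of $n$ points in $\R^d$ and every parameter $r$ with $1 < r \le n$, there is a \emph{simplicial partition} $\{(S_1,\Delta_1),\dots,(S_m,\Delta_m)\}$ with $m = O(r)$, where the $S_i$ partition $S$, each $|S_i| \le 2n/r$, each $\Delta_i$ is a (possibly degenerate) simplex containing $S_i$, and the partition has \emph{crossing number} $O(r^{1-1/d})$, meaning every hyperplane meets the relative interior of at most $O(r^{1-1/d})$ of the $\Delta_i$. The first step is to establish this theorem. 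The idea is that it suffices to control crossings for a polynomial-size \emph{test set} of hyperplanes obtained from a cutting of $S$ (equivalently, of its dual), since any hyperplane can be perturbed to a test hyperplane without changing the set of $\Delta_i$ it stabs by more than a constant factor. One then builds the $S_i$ greedily, one at a time: among the points not yet removed, a cutting/averaging argument produces a small simplex containing $\Theta(n/r)$ of them whose boundary is crossed by only an $O(r^{-1/d})$ fraction of the test hyperplanes; removing these points and repeating $O(r)$ times yields the partition, and a charging argument over the $O(r)$ pieces gives the $O(r^{1-1/d})$ total crossing bound.

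Given the Partition Theorem, the data structure is the obvious recursive object. At the root we store $S$ together with $\sum_i w_i$; we compute a simplicial partition with a suitable parameter $r$, make each $S_i$ a child annotated with its enclosing simplex $\Delta_i$ and its weight sum $\sum_{k_j \in S_i} w_j$, and recurse on each $S_i$ until leaves of size $O(1)$. The resulting tree has $O(n)$ nodes, depth $O(\log n)$, and $O(n\log n)$ total size; using a cutting-based, near-linear-time implementation of each simplicial-partition computation makes the total preprocessing $O(n\log n)$ (a direct implementation gives $n^{1+o(1)}$, which would already suffice for \Cref{m-thm:const-d-att-alg}, but we keep the sharper bound).

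A halfspace query $h$, bounded by a hyperplane $H$, is answered by a recursion maintaining an accumulator initialized to $0$: at each visited node, for each child simplex $\Delta_i$, if $\Delta_i \subseteq h$ we add the stored weight sum of $S_i$ to the accumulator, if $\Delta_i \cap h = \emptyset$ we skip it, and if $H$ crosses $\Delta_i$ we recurse into that child. Correctness is immediate, since each point of $S$ follows a unique root-to-leaf path and its weight is added exactly once, at the first node where its enclosing simplex lies wholly inside $h$, and never added if those simplices are always disjoint from $h$. By the crossing-number bound at most $O(r^{1-1/d})$ children are entered per node, so the query time satisfies $Q(n) \le O(r^{1-1/d})\, Q(2n/r) + O(r)$; choosing $r$ appropriately (a slowly growing $r$, or a constant $r$ together with the hierarchical-cutting refinement that removes the resulting sub-polynomial slack) this solves to $\tO{n^{1-1/d}}$. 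For a general simplex $\sigma = \bigcap_{t=1}^{d+1} h_t$ one runs the same recursion but descends into $\Delta_i$ whenever \emph{any} of the $d+1$ bounding hyperplanes of $\sigma$ crosses it, adding the full weight of $S_i$ only when $\Delta_i \subseteq \sigma$; since the crossing bound applies to each bounding hyperplane separately, for constant $d$ the branching is still $O(r^{1-1/d})$ per node and the same $\tO{n^{1-1/d}}$ bound holds.

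The main obstacle is entirely in the Partition Theorem: obtaining simplicial partitions whose crossing number is $O(r^{1-1/d})$ rather than the trivial $O(r)$ is the technical core of Matou\v{s}ek's work, with the exponent $1-1/d$ tight against the known lower bounds for simplex range searching, and squeezing its construction down to near-linear time requires the hierarchical-cutting machinery. Once that tool is available, the partition-tree recursion, the correctness argument, and the query-time recurrence above are all routine.
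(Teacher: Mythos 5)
This statement is cited in the paper from Matou\v{s}ek's 1992 \emph{Efficient partition trees} without proof, so there is no in-paper proof to compare against; your sketch is a faithful account of the argument in the cited reference. You correctly isolate the Partition Theorem (crossing number $O(r^{1-1/d})$) as the technical core, give the right recursive data structure, the right query recurrence $Q(n) \le O(r^{1-1/d})\,Q(2n/r) + O(r)$, and correctly flag that a constant $r$ leaves a polynomial slack in the exponent that Matou\v{s}ek removes via hierarchical cuttings (which is also what brings preprocessing down to $O(n\log n)$), so this matches the approach of the source.
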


Given this data structure, we now present our algorithm for arbitrary head dimension $d$.
Our inputs are $n \times d$ matrices $Q, K, V$ with entries in $[-\entryBound, \entryBound]$.
We rewrite $O_{i, t} = \sum_{j} p_{i, j} V_{j, t}$ where $p_{i, j} = \frac{\exp(Q_{i} \cdot K_{j})}{\sum_{j'} \exp(Q_{i} \cdot K_{j'})} \propto \exp(Q_{i} \cdot K_{j})$.

\paragraph{Step 1: Removing Irrelevant Keys.}
We begin by showing that removing irrelevant keys does not significantly alter the quality of the approximation.
Define for each $i \in [n]$ the maximum probability in the distribution $p_{i, j}$ as $p_{\max}^{(i)} = \max_{j} p_{i, j}$.
Let $s_{\max}^{(i)}$ denote the maximum integer $s$ such that the half-space
\begin{equation*}
    \SET{x \in \R^d :  Q_{i} \cdot x \geq s \log(1 + \varepsilon)}
\end{equation*}
contains at least one $K_{j}$ vector.
In particular, $s_{\max}^{(i)}$ is the largest integer satisfying $\max_{j} Q_{i} \cdot K_{j} \geq s_{\max}^{(i)} \log(1 + \varepsilon)$.
We now define relevant and irrelevant keys analogously to the $d = 1$ case.

\begin{definition}
    \label{def:irrelevant}
    Let $j \in [n]$ be {\em irrelevant} with respect to $Q_{i}$ if $Q_{i} \cdot K_{j} < s_{\max}^{(i)} \log(1 + \varepsilon) - \log(n/\varepsilon)$.
    Otherwise $j$ is {\em relevant} with respect to $Q_{i}$.
    When $Q_{i}$ is clear, we simply say $j$ is irrelevant or relevant.
\end{definition}

We argue that we can discard irrelevant indices.

\begin{lemma}
    Define $p_{i, j}^{(r)} = \frac{p_{i, j}}{\sum_{\textrm{relevant $j$}} p_{i, j}}$ if $j$ is relevant and $0$ otherwise for all $i, j \in [n]$.
    Let $O_{i, t}^{(r)} = \sum_{j} p_{i, j}^{(r)} V_{j, t}$ for all $i \in [n], t \in [d]$.
    Then $\left| O_{i, t}^{(r)} - O_{i, t} \right| \leq 3 \varepsilon \entryBound$.
\end{lemma}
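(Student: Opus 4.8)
The plan is to mimic the $d=1$ argument almost verbatim, the only change being that the geometric characterization of irrelevant indices is now a half-space condition rather than a one-sided threshold. First I would fix $i$ and estimate $\sum_{\textrm{irrelevant } j} p_{i,j}$. By definition, for an irrelevant $j$ we have $Q_i \cdot K_j < s_{\max}^{(i)} \log(1+\varepsilon) - \log(n/\varepsilon)$, while by the definition of $s_{\max}^{(i)}$ there is some index $j^\star$ with $Q_i \cdot K_{j^\star} \geq s_{\max}^{(i)} \log(1+\varepsilon)$. Subtracting, $Q_i \cdot K_j - Q_i \cdot K_{j^\star} < -\log(n/\varepsilon)$, hence
\begin{equation*}
    \frac{p_{i,j}}{p_{\max}^{(i)}} \leq \frac{p_{i,j}}{p_{i,j^\star}} = e^{Q_i \cdot K_j - Q_i \cdot K_{j^\star}} < \frac{\varepsilon}{n}.
\end{equation*}
Summing over the at most $n$ irrelevant indices gives $\sum_{\textrm{irrelevant } j} p_{i,j} \leq \varepsilon$. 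This is the only place the new half-space definition enters, and it is the step I expect to require the most care — one must make sure $s_{\max}^{(i)}$ is defined so that a witness $K_{j^\star}$ on the "high" side genuinely exists, which is exactly what the phrase "contains at least one $K_j$ vector" guarantees.

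Next I would transfer this to the probabilities $p_{i,j}^{(r)}$. Since $\sum_{\textrm{relevant } j} p_{i,j} \geq 1 - \varepsilon$, for every relevant $j$ we get the sandwich
\begin{equation*}
    p_{i,j} \leq p_{i,j}^{(r)} = \frac{p_{i,j}}{\sum_{\textrm{relevant } j'} p_{i,j'}} \leq \frac{p_{i,j}}{1-\varepsilon},
\end{equation*}
so $|p_{i,j}^{(r)} - p_{i,j}| \leq \frac{\varepsilon}{1-\varepsilon} p_{i,j}$ on relevant indices, while on irrelevant indices $p_{i,j}^{(r)} = 0$ and $|p_{i,j}^{(r)} - p_{i,j}| = p_{i,j}$.

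Finally I would bound the output error coordinatewise, exactly as in the $d=1$ case. For each $t \in [d]$, using $|V_{j,t}| \leq \entryBound$ and the triangle inequality,
\begin{align*}
    \left| O_{i,t}^{(r)} - O_{i,t} \right|
    &\leq \sum_{\textrm{relevant } j} \left| p_{i,j}^{(r)} - p_{i,j} \right| |V_{j,t}| + \sum_{\textrm{irrelevant } j} \left| p_{i,j}^{(r)} - p_{i,j} \right| |V_{j,t}| \\
    &\leq \entryBound \cdot \frac{\varepsilon}{1-\varepsilon} \sum_{\textrm{relevant } j} p_{i,j} + \entryBound \sum_{\textrm{irrelevant } j} p_{i,j} \\
    &\leq \frac{\varepsilon \entryBound}{1-\varepsilon} + \varepsilon \entryBound \leq 3\varepsilon \entryBound,
\end{align*}
where the last inequality uses $\varepsilon < \tfrac12$ (which we may assume, rescaling $\varepsilon$ at the end as in the $d=1$ analysis). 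This completes the proof; no new ideas beyond the $d=1$ case are needed, and the only substantive point is verifying that the half-space definition of relevance yields the same exponential decay bound $\varepsilon/n$ on the discarded mass.
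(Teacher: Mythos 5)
Your proof is correct and matches the paper's approach essentially verbatim: the paper likewise bounds the discarded probability mass by $\varepsilon$ using the fact that $\max_j Q_i\cdot K_j \geq s_{\max}^{(i)}\log(1+\varepsilon)$, derives the same sandwich $p_{i,j}\leq p_{i,j}^{(r)}\leq p_{i,j}/(1-\varepsilon)$, and then splits the error into relevant and irrelevant contributions, invoking $\varepsilon<1/2$ to close with $3\varepsilon B$. The paper states the $d>1$ version by reference to the identical $d=1$ computation, which is exactly what you have spelled out.
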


\begin{proof}
    First, for all $i$ and $j$ irrelevant to $Q_{i}$, we have
    \begin{align*}
        \frac{p_{i, j}}{p_{\max}^{(i)}} = \frac{\exp(Q_{i} \cdot K_{j})}{\max_{j} \exp(Q_{i} \cdot K_{j})} \leq \frac{\exp(s_{\max}^{(i)} \log(1 + \varepsilon) - \log(n/\varepsilon))}{\exp(s_{\max}^{(i)} \log(1 + \varepsilon))} \leq \frac{\varepsilon}{n},
    \end{align*}
    so that
    \begin{equation}
        \sum_{\irrelevant~j} p_{i, j} \leq \sum_{\irrelevant~j}  p_{\max}^{(i)} \frac{\varepsilon}{n}  \leq \varepsilon.
    \end{equation}
    Thus for all relevant $j$, $p_{i, j} \leq p_{i, j}^{(r)} \leq \frac{p_{i, j}}{1 - \varepsilon}$.
    Following identical arguments as in the one-dimensional warm-up, we obtain the desired result.
\end{proof}

\paragraph{Step 2: Polynomial Approximation of Exponential.} 
Consider an entry $O_{i, t}$.
We now aim to approximate $O_{i, t}^{(r)}$.
Recall that
\begin{equation*}
    O_{i, t}^{(r)} = \sum_{j} p_{i, j}^{(r)} V_{j, t} = \frac{\sum_{\relevant~j} \exp(Q_{i} \cdot K_{j}) V_{j, t}}{\sum_{\relevant~j} \exp(Q_{i} \cdot K_{j})} = \frac{\sum_{\relevant~j} \exp(Q_{i} \cdot K_{j} - c(Q_{i})) V_{j, t}}{\sum_{\relevant~j} \exp(Q_{i} \cdot K_{j} - c(Q_{i}))}
\end{equation*}
where $c(Q_{i}) = s_{\max}^{(i)} \log(1 + \varepsilon) - \log(n/\varepsilon)$.
By the definition of $s_{\max}^{(i)}$, we have that for all relevant $j$, $Q_{i} \cdot K_{j} - C(Q_{i}) \in [0, \log(n/\varepsilon) + \log(1 + \varepsilon)]$.

We then invoke \Cref{lem:poly-mult-approx} to obtain a $g = \polylog(n/\varepsilon)$-degree polynomial $P$ such that for all $x \in [0, \log(n/\varepsilon) + \log(1 + \varepsilon)] \subset [0, 2 \log(n/\varepsilon)]$, $|P(x) - \exp(x)| \leq \varepsilon \exp(x)$.
Define for relevant $j$, $\hat{p}_{i, j} \propto P(Q_{i} \cdot K_{j} - c(Q_{i}))$ as an approximation of $p_{i, j}^{(r)} \propto \exp(Q_{i} \cdot K_{j} - c(Q_{i}))$.
For irrelevant $j$, set $\hat{p}_{i, j} = p_{i, j}^{(r)} = 0$.
Then, define $\hat{O}_{i, t} = \sum_{j} \hat{p}_{i, j} V_{j, t}$.

We claim $\hat{O}_{i, t}$ is a good approximation.

\begin{lemma}
    \label{lem:const-d-approx}
    $|\hat{O}_{i, t} - O_{i, t}| \leq 7 \varepsilon \entryBound$ for all $i \in [n], t \in [d]$.
\end{lemma}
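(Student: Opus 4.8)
The plan is to mirror the one-dimensional argument verbatim, using the triangle inequality to split the error into two pieces: the error from discarding irrelevant keys (already bounded by $3\varepsilon B$ in the preceding lemma) and the error from replacing $\exp$ by the polynomial $P$ on the relevant indices. So it suffices to show $|\hat{O}_{i,t} - O_{i,t}^{(r)}| \leq 4\varepsilon B$ and then combine.

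First I would establish the pointwise bound on the approximated probabilities. For every relevant $j$, we have $Q_i \cdot K_j - c(Q_i) \in [0, 2\log(n/\varepsilon)]$, so \Cref{lem:poly-mult-approx} gives $|P(Q_i \cdot K_j - c(Q_i)) - \exp(Q_i \cdot K_j - c(Q_i))| \leq \varepsilon \exp(Q_i \cdot K_j - c(Q_i))$; in particular $P(\cdot) > 0$ on this range (for $\varepsilon < 0.1$), so $\hat{p}_{i,j}$ is well-defined. Writing both $\hat{p}_{i,j}$ and $p_{i,j}^{(r)}$ as normalized versions of $P(\cdot)$ and $\exp(\cdot)$ respectively over the same relevant index set, the $(1\pm\varepsilon)$ multiplicative bound on numerator and denominator yields
\begin{equation*}
    \frac{1-\varepsilon}{1+\varepsilon} p_{i,j}^{(r)} \leq \hat{p}_{i,j} \leq \frac{1+\varepsilon}{1-\varepsilon} p_{i,j}^{(r)},
\end{equation*}
hence $|\hat{p}_{i,j} - p_{i,j}^{(r)}| \leq 4\varepsilon p_{i,j}^{(r)}$ (using $\frac{1+\varepsilon}{1-\varepsilon} - 1 = \frac{2\varepsilon}{1-\varepsilon} \leq 4\varepsilon$ for $\varepsilon < 1/2$).

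Then I would bound the output error directly: since $|V_{j,t}| \leq B$ and $\hat{p}_{i,j} = p_{i,j}^{(r)} = 0$ on irrelevant $j$,
\begin{equation*}
    |\hat{O}_{i,t} - O_{i,t}^{(r)}| \leq \sum_{\relevant~j} |\hat{p}_{i,j} - p_{i,j}^{(r)}| \, |V_{j,t}| \leq B \sum_{\relevant~j} 4\varepsilon\, p_{i,j}^{(r)} = 4\varepsilon B,
\end{equation*}
using $\sum_{\relevant~j} p_{i,j}^{(r)} = 1$. Combining with the previous lemma via $|\hat{O}_{i,t} - O_{i,t}| \leq |\hat{O}_{i,t} - O_{i,t}^{(r)}| + |O_{i,t}^{(r)} - O_{i,t}| \leq 4\varepsilon B + 3\varepsilon B = 7\varepsilon B$ finishes the proof. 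I do not anticipate a real obstacle here — the argument is essentially identical to the $d=1$ warm-up, the only wrinkle being to confirm that the interval $[0, \log(n/\varepsilon) + \log(1+\varepsilon)]$ on which we need the polynomial approximation is contained in $[0, 2\log(n/\varepsilon)]$ (which holds since $\log(1+\varepsilon) \leq \varepsilon < \log(n/\varepsilon)$ for $n \geq 2$), so that the degree bound $g = \polylog(n/\varepsilon)$ from \Cref{lem:poly-mult-approx} applies.
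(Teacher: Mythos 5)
Your proposal is correct and follows essentially the same route as the paper, which in this case simply defers to ``identical arguments as in the 1-dimensional case''; you carry out exactly that argument, splitting via the triangle inequality into the irrelevant-key error ($\leq 3\varepsilon B$) and the polynomial-approximation error ($\leq 4\varepsilon B$), and you correctly flag the one wrinkle—that the interval $[0,\log(n/\varepsilon)+\log(1+\varepsilon)]$ sits inside $[0,2\log(n/\varepsilon)]$—that makes the $d$-dimensional case go through.
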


This follows from identical arguments as in the 1-dimensional case.
Furthermore, we present an algorithm that computes $\hat{O}$ efficiently.
The key ingredient to the algorithm is the following data structure which again utilizes the range searching data structure of \cite{Mat92PartitionTrees}.

\begin{restatable}{lemma}{SmallDimensionDataStructure}
    \label{lem:small-dimension-data-structure}
    Given matrices $Q, K, V \in \R^{n \times d}$ there exist functions $\phi_{0}, \dotsc \phi_{d}$ such that any entry $\hat{O}_{i, t}$ can be computed with $g^{O(d)}$ queries to $\phi_{0}$ and $\phi_{t}$ and $g^{O(d)}$ additional time.
    
    Furthermore, for each $\phi_{t}$ with $0 \leq t \leq d$ there is a data structure with $\bigtO{g^{O(d)} n \log n}$ preprocessing and $\bigtO{g^{O(d)} n^{1 - 1/d} \log(B/\varepsilon)}$ query time.
\end{restatable}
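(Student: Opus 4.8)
The plan is to generalize the one-dimensional computation from \Cref{eq:1-d-att-poly}: expand $P(Q_i \cdot K_j - c(Q_i))$ as a multivariate polynomial in the coordinates of $Q_i$ and $K_j$ (together with the shift $c(Q_i)$), then group the terms so that each query sums a monomial in the $K_j$ coordinates (weighted by $V_{j,t}$ or by $1$) over all $j$ relevant to $Q_i$. Concretely, writing $P(x) = \sum_\ell m_\ell x^\ell$ and expanding $(Q_i \cdot K_j - c(Q_i))^\ell$ by the multinomial theorem over the $d$ products $Q_{i,1}K_{j,1}, \dots, Q_{i,d}K_{j,d}$ and the term $-c(Q_i)$, each summand factors as (a coefficient depending only on $i$, namely a product of powers of $Q_{i,1},\dots,Q_{i,d}$ and of $-c(Q_i)$, times $\binom{\ell}{\cdots}$) times (a monomial $\prod_{s=1}^d K_{j,s}^{b_s}$ in the key coordinates). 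The number of distinct key-monomials that appear is at most $g^{O(d)}$ since each exponent $b_s$ is at most $g$. For each such exponent tuple $\mathbf{b} = (b_1,\dots,b_d)$ I define
\[
\phi_0(i, \mathbf{b}) = \sum_{\text{relevant } j} \prod_{s=1}^d K_{j,s}^{b_s}, \qquad \phi_t(i, \mathbf{b}) = \sum_{\text{relevant } j} V_{j,t} \prod_{s=1}^d K_{j,s}^{b_s},
\]
and then $\hat O_{i,t}$ is recovered as a ratio: the numerator is $\sum_{\mathbf{b}}(\text{coeff}_i(\mathbf{b})) \phi_t(i,\mathbf{b})$ and the denominator is $\sum_{\mathbf{b}}(\text{coeff}_i(\mathbf{b})) \phi_0(i,\mathbf{b})$, which matches the claimed $g^{O(d)}$ queries plus $g^{O(d)}$ arithmetic.

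The second part is building the data structure for each $\phi_t$. The set of $j$ relevant to $Q_i$ is exactly $\{j : K_j \in H_i\}$ where $H_i$ is the half-space $\{x : Q_i \cdot x \geq s_{\max}^{(i)}\log(1+\varepsilon) - \log(n/\varepsilon)\}$, so relevant-index queries are half-space range queries. For a fixed exponent tuple $\mathbf{b}$, I instantiate one copy of Matou{\v s}ek's structure (\Cref{thm:range-searching-ds}) on the point set $\{K_j\}$ where point $K_j$ carries weight $w_j^{(\mathbf{b})} = \prod_{s} K_{j,s}^{b_s}$ (for $\phi_0$) or $w_j^{(\mathbf{b})} = V_{j,t}\prod_s K_{j,s}^{b_s}$ (for $\phi_t$); a half-space query then returns exactly $\phi_t(i,\mathbf{b})$. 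There are $g^{O(d)}$ tuples, each giving $O(n\log n)$ preprocessing and $\tilde O(n^{1-1/d})$ query time, for the stated totals. One subtlety: the value $s_{\max}^{(i)}$, needed to define $H_i$, is itself $\max_j Q_i\cdot K_j$ rounded down to a multiple of $\log(1+\varepsilon)$, which I compute with one additional half-space/binary-search sweep (or a single precomputed structure returning the max), contributing only lower-order cost; I fold this into the preprocessing. The $\log(B/\varepsilon)$ factor in the query time comes from the bit-length of the weights $w_j^{(\mathbf{b})}$, which are products of $O(d)$ entries each of magnitude at most $B$ raised to powers up to $g$, hence $\poly(g)\cdot\log B$ bits — arithmetic on them costs $\polylog(B/\varepsilon)$ per operation.

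I expect the main obstacle to be bookkeeping rather than conceptual: carefully verifying that the per-$i$ coefficient $\text{coeff}_i(\mathbf{b})$ — which aggregates contributions from every degree $\ell$ and every way of distributing the multinomial exponents among the $d$ coordinates and the shift term — can be assembled in $g^{O(d)}$ time, and that its bit-complexity stays $\poly(g)\cdot\polylog(B/\varepsilon)$ so the arithmetic bound holds. I would argue this by observing that $\text{coeff}_i(\mathbf{b}) = \sum_{\ell \geq |\mathbf{b}|} m_\ell \binom{\ell}{b_1,\dots,b_d,\ell-|\mathbf{b}|}(-c(Q_i))^{\ell-|\mathbf{b}|}\prod_s Q_{i,s}^{b_s}$ where $|\mathbf{b}| = \sum_s b_s$; the inner sum over $\ell$ has at most $g$ terms, there are $g^{O(d)}$ tuples $\mathbf{b}$, and $c(Q_i) = s_{\max}^{(i)}\log(1+\varepsilon) - \log(n/\varepsilon) = O(B\cdot\sqrt{d})$ in magnitude, so each power is $\poly(g)\cdot\polylog(B/\varepsilon)$-bit. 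Once the coefficients and the $\phi$-values are in hand, the approximation guarantee is inherited verbatim from \Cref{lem:const-d-approx}, and summing the per-entry cost $g^{O(d)}\cdot\tilde O(n^{1-1/d})$ over all $nd$ entries $(i,t)$, together with the $g^{O(d)}\tilde O(n)$ preprocessing, yields the $\tilde O(n^{2-1/d}\cdot\polylog(B/\varepsilon))$ running time of \Cref{m-thm:const-d-att-alg} after rescaling $\varepsilon$ by a factor $\Theta(B)$ and using $d = O(1)$ so that $g^{O(d)} = \polylog(nB/\varepsilon)$.
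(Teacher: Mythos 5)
Your proposal follows essentially the same approach as the paper: multinomial expansion of $P(Q_i\cdot K_j - c(Q_i))$ into monomials in the key coordinates, one Matou\v{s}ek half-space range-searching structure per exponent tuple $\mathbf{b}$ with weights $\prod_s K_{j,s}^{b_s}$ (or $V_{j,t}\prod_s K_{j,s}^{b_s}$), and recovery of $\hat O_{i,t}$ as a ratio of two $g^{O(d)}$-term sums. The only place you drift from the paper is in the accounting for $s_{\max}^{(i)}$: the paper performs the binary search for $s_{\max}^{(i)}$ inside each query to $\phi_t$ (this is precisely the source of the $\log(B/\varepsilon)$ factor in the stated query time, via $O(\log(dB/\varepsilon))$ half-space counting queries), whereas you propose to ``fold this into preprocessing.'' Precomputing $s_{\max}^{(i)}$ for all $i$ costs $\tilde O(n^{2-1/d}\log(B/\varepsilon))$, which exceeds the lemma's claimed preprocessing bound of $\tilde O(g^{O(d)} n\log n)$ for $d\geq 2$ — it is not ``lower-order'' relative to preprocessing, only relative to the final algorithm. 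To match the lemma as stated, the binary search should stay in the query; your alternative justification of the $\log(B/\varepsilon)$ query factor via bit-lengths of the weights is a legitimate concern as well, but it is not the factor the paper is tracking. None of this affects the final $\tilde O(n^{2-1/d}\polylog(B/\varepsilon))$ bound of \Cref{m-thm:const-d-att-alg}, so the slip is cosmetic rather than fatal.
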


\begin{proof}
    Recall that $\hat{O}_{i, t} = \frac{\sum_{\relevant~j}P(Q_{i} \cdot K_{j} - c(Q_{i})) V_{j, t}}{\sum_{\relevant~j}{P(Q_{i} \cdot K_{j} - c(Q_{i}))}}$ where $P$ is the polynomial of degree $g$ obtained from \Cref{lem:poly-mult-approx}.
    
    We begin with describing how to compute the numerator of $\hat{O}_{i, t}$.
    Suppose $P(x) = \sum_{\ell = 0}^{g} m_{\ell} x^{\ell}$.
    \begin{align*}
        & \sum_{\relevant~j} P(Q_{i} \cdot K_{j} - c(Q_{i})) V_{j, t}\\
        &= \sum_{\relevant~j} \sum_{\ell} m_{\ell} (Q_{i} \cdot K_{j} - c(Q_{i}))^{\ell} V_{j, t} \\
        &= \sum_{\ell} m_{\ell} \sum_{\relevant~j} \sum_{\ell_0+\ell_1 + \dotsc + \ell_{d} = \ell} \binom{\ell}{\ell_{0}, \ell_{1}, \dotsc, \ell_{d}} (-c(Q_{i}))^{\ell_{0}} \prod_{k = 1}^{d} \left( Q_{i, k} K_{j, k} \right)^{\ell_{k}} V_{j, t} \\
        &= \sum_{\ell} m_{\ell} \sum_{\ell_0+\ell_1 + \dotsc + \ell_{d} = \ell} \binom{\ell}{\ell_{0}, \ell_{1}, \dotsc, \ell_{d}} (-c(Q_{i}))^{\ell_{0}} \prod_{k = 1}^{d} Q_{i, k}^{\ell_{k}} \sum_{\relevant~j} \prod_{k = 1}^{d} K_{j, k}^{\ell_{k}} V_{j, t} \\
        &= \sum_{\ell} m_{\ell} \sum_{\ell_0+\ell_1 + \dotsc + \ell_{d} = \ell} \binom{\ell}{\ell_{0}, \ell_{1}, \dotsc, \ell_{d}} (-c(Q_{i}))^{\ell_{0}} \prod_{k = 1}^{d} Q_{i, k}^{\ell_{k}} \phi_{t}(i, \ell_{1}, \dotsc, \ell_{d})
    \end{align*}
    where we define the function $\phi_{t}(i, \ell_{1}, \dotsc, \ell_{d}) = \sum_{\relevant~j} \prod_{k = 1}^{d} K_{j, k}^{\ell_{k}} V_{j, t}$.
    Similarly, define the function
    \begin{equation*}
        \phi_{0}(i, \ell_{1}, \dotsc, \ell_{d}) = \sum_{\relevant~j} \prod_{k = 1}^{d} K_{j, k}^{\ell_{k}}
    \end{equation*}
    so that
    \begin{equation*}
        \sum_{j} P(Q_{i} \cdot K_{j} - c(Q_{i})) = \sum_{\ell} m_{\ell} \sum_{
        \ell_0+\ell_1 + \dotsc + \ell_{d} = \ell} \binom{\ell}{\ell_{0}, \ell_{1}, \dotsc, \ell_{d}} (-c(Q_{i}))^{\ell_{0}} \prod_{k = 1}^{d} Q_{i, k}^{\ell_{k}} \phi_{0}(i, \ell_{1}, \dotsc, \ell_{d}).
    \end{equation*}

    The following lemma describes how to build the appropriate data structures.
    
    \begin{restatable}{lemma}{ComputePhiJ}
        \label{lem:compute-phi-j}
        Let $\ell_{1}, \ldots, \ell_{d}$ be nonnegative integers.
        Let $0 \leq t \leq d$.
        Given matrices $Q, K, V$, there is a data structure with $O(n d + n \log n)$ preprocessing time that answers queries $\phi_{t}(i, \ell_{1}, \dotsc, \ell_{d})$ in $\bigtO{n^{1 - 1/d} \log(d\entryBound/\varepsilon)}$ time.
    \end{restatable}

    \begin{proof}
        We initialize two $\rsds$ data structures using \Cref{thm:range-searching-ds}, one with unweighted point set $\SET{K_{j}}$ and one with weighted point set $\SET{\left(K_{j}, \prod_{k = 1}^{d} K_{j, k}^{\ell_{k}} V_{j, t}\right)}_{j = 1}^{n}$.
        By \Cref{thm:range-searching-ds}, this requires $O(n \log n)$ preprocessing.
        Computing each weight requires $O(n d)$ time.
        
        Now, consider a query $\phi_{t}(i, \ell_{1}, \dotsc, \ell_{d})$ for some $i \in [n]$.
        We compute $s_{\max}^{(i)}$ using binary search with the first $\rsds$ data structure.
        Since $|Q_{i} \cdot K_{j}| \leq d \entryBound^2$ there are at most $O(d \entryBound^2/\log(1 + \varepsilon))$ values to search through.
        This requires $O(\log(d \entryBound/\varepsilon))$ queries which requires $\bigtO{n^{1 - 1/d} \log(d\entryBound/\varepsilon)}$ overall time by \Cref{thm:range-searching-ds}.
        The set of $j$ relevant to $Q_{i}$ is the set of $K_{j}$ such that $Q_{i} \cdot K_{j} \geq s_{\max}^{(i)} \log(1 + \varepsilon) - \log(n/\varepsilon)$.
        This can easily be captured by a simplex query with the half-space $Q_{i} \cdot x \geq s_{\max}^{(i)} \log(1 + \varepsilon) - \log(n/\varepsilon)$ and thus requires one query to the second $\rsds$ instance.
    \end{proof}

    Our data structure for \Cref{lem:small-dimension-data-structure} is simply the combination of all data structures that answer queries $\phi_{t}(i, \ell_{1}, \dotsc, \ell_{d})$.
    Since $P$ is degree $g$ and $\ell_{1} + \ell_{2} + \dotsc + \ell_{d} \leq \ell \leq g$, there are at most $(g + d)^{O(d)} = g^{O(d)}$ distinct tuples $\ell_{1}, \dotsc, \ell_{d}$ since $d$ is a constant.
    In particular, we can initialize all the necessary data structures to compute queries of $\phi_{t}$ in $\bigtO{g^{O(d)}(n d + n \log n)}$ time.
    
    We now show to compute an entry of $\hat{O}_{i, t}$.
    Note that numerator sums over $\ell$, tuples $\ell_{0}, \dotsc, \ell_{d}$ of which there are at most $g^{O(d)}$ summands.
    Each summand can be computed with one query to $\phi_{t}$ and $g^{O(d)}$ additional time.
    Since the denominator can be computed similarly (instead querying $\phi_{0}$) the total time to compute $\hat{O}_{i, t}$ is $\bigtO{g^{O(d)} n^{1 - 1/d} \log(d B/\varepsilon)}$.
\end{proof}

\begin{algorithm}[h!]
    \SetKwInOut{Input}{Input}\SetKwInOut{Output}{Output}\SetKwInOut{Parameters}{Parameters}
    \Input{vectors $Q, K, V \in [-\entryBound, \entryBound]^{n}$.}
    \Parameters{error parameter $\varepsilon$}
    \Output{$\hat{O}$ satisfying $\norm{\hat{O} - \att(q, k, v)}_{\infty} \leq 7 \varepsilon \entryBound$.}

    Compute $s_{\max}^{(i)}$ for all $i \in [n]$ using \Cref{thm:range-searching-ds}\;
    
    Compute $c(Q_{i}) \gets s_{\max}^{(i)} \log(1 + \varepsilon) - \log(n/\varepsilon)$ for all $i \in [n]$\;
    
    Compute a $g$-degree polynomial $P(x)$ for range $[0, 2 \log(n/\varepsilon)]$ using \Cref{lem:poly-mult-approx}\;
    
    Initialize the data structure for queries $\phi_{t}(i, \ell_{1}, \dotsc, \ell_{d})$ for all $0 \leq t \leq d$ using \Cref{lem:small-dimension-data-structure}\;
    
    Compute $\hat{O}_{i, t}$ for all $(i, t) \in [n] \times [d]$ using queries to \Cref{lem:small-dimension-data-structure}\;
    
    \Return $\hat{O}$\;
    \caption{$\appAttF(Q, K, V)$} 
    \label{alg:const-d-att-faster}
\end{algorithm}

We bound the running time of \Cref{alg:const-d-att-faster}.

\begin{restatable}{lemma}{AlgorithmRunningTime}
    \label{lem:const-d-time}
    $\appAttF$ (\Cref{alg:const-d-att-faster}) runs in time $\bigtO{n^{2 - 1/d} \cdot \polylog(\entryBound/\varepsilon)}$.
\end{restatable}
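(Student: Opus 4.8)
The plan is to bound the running time of each of the five computational steps of \Cref{alg:const-d-att-faster} in turn, using the guarantees already established, and then to collapse the resulting expression using that $d = O(1)$. I would argue that every step runs in $\bigtO{g^{O(d)} n^{2-1/d}\log(d\entryBound/\varepsilon)}$ time, where $g$ is the degree of the approximating polynomial $P$ from \Cref{lem:poly-mult-approx}; since $g = \bigO{\log(n/\varepsilon)}$ in our instantiation (the first term in the degree bound of \Cref{lem:poly-mult-approx} is dominated by $W = 2\log(n/\varepsilon)$ here) and $d$ is constant, this collapses to $\bigtO{n^{2-1/d}\polylog(\entryBound/\varepsilon)}$.

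The cheap steps are immediate: computing $c(Q_i)$ from $s_{\max}^{(i)}$ for all $i$ costs $O(n)$, and by \Cref{lem:poly-mult-approx} the coefficients of $P$ are computed in $\poly(g)$ time. For Step~1, computing $s_{\max}^{(i)}$ for all $i$, I would reuse the binary-search analysis inside the proof of \Cref{lem:compute-phi-j}: build one $\rsds$ instance on the point set $\SET{K_j}$ with $O(n\log n)$ preprocessing (\Cref{thm:range-searching-ds}), and for each $i$ binary-search over the $O(d\entryBound^2/\log(1+\varepsilon))$ candidate thresholds using $O(\log(d\entryBound/\varepsilon))$ halfspace queries of cost $\bigtO{n^{1-1/d}}$ each, for a total of $\bigtO{n^{2-1/d}\log(d\entryBound/\varepsilon)}$. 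For Step~4, initializing the data structures of \Cref{lem:small-dimension-data-structure} costs $\bigtO{g^{O(d)} n\log n}$, since there are at most $(g+d)^{O(d)} = g^{O(d)}$ monomial tuples $(\ell_1,\dotsc,\ell_d)$ with $\sum_k \ell_k \le g$, and for each we instantiate $O(d)$ copies of the $\rsds$ structure at $O(nd + n\log n)$ cost.

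The bottleneck is Step~5. By \Cref{lem:small-dimension-data-structure}, computing a single entry $\hat{O}_{i,t}$ — which amounts to $g^{O(d)}$ queries to the data structures for $\phi_0$ and $\phi_t$ (each costing $\bigtO{n^{1-1/d}\log(d\entryBound/\varepsilon)}$ by \Cref{lem:compute-phi-j}) plus $g^{O(d)}$ arithmetic operations on $\poly(g)$-bit rationals — takes $\bigtO{g^{O(d)} n^{1-1/d}\log(d\entryBound/\varepsilon)}$ time. Over all $nd$ entries $(i,t)\in[n]\times[d]$, this is $\bigtO{g^{O(d)} n^{2-1/d}\log(d\entryBound/\varepsilon)}$, which dominates the sum over all five steps.

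Finally I would simplify. With $d = O(1)$ we have $\log(d\entryBound/\varepsilon) = O(\log(\entryBound/\varepsilon))$, and $g = \bigO{\log(n/\varepsilon)} \le \bigO{\log n + \log(\entryBound/\varepsilon)}$ (using $\entryBound \ge 1$), so $g^{O(d)} = \polylog(n)\cdot\polylog(\entryBound/\varepsilon)$ with the $\polylog(n)$ factor absorbed by the $\bigtO{\cdot}$, giving $\bigtO{n^{2-1/d}\polylog(\entryBound/\varepsilon)}$. The only step needing care is checking that the combinatorial blow-up $g^{O(d)}$ (which appears both in preprocessing and in the per-entry query count) stays polylogarithmic, which holds exactly because $d$ is constant and $g$ is logarithmic in $n/\varepsilon$; apart from this bookkeeping there is no real obstacle, as the lemma is a direct combination of \Cref{thm:range-searching-ds}, \Cref{lem:poly-mult-approx}, \Cref{lem:small-dimension-data-structure}, and \Cref{lem:compute-phi-j}.
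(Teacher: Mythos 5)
Your proof is correct and follows essentially the same approach as the paper: instantiate the degree bound $g = O(\log(n/\varepsilon))$ from \Cref{lem:poly-mult-approx}, observe that the $g^{O(d)}$ combinatorial factor is polylogarithmic when $d = O(1)$, and then bound the dominant cost of computing all $nd$ entries via \Cref{lem:small-dimension-data-structure} at $\bigtO{g^{O(d)} n^{1-1/d}\log(\entryBound/\varepsilon)}$ each. The only difference is presentational — you break out the cost of each of the five algorithm lines explicitly (including the $\bigtO{n^{2-1/d}}$ cost of the $s_{\max}^{(i)}$ binary searches in Step~1, which the paper folds into the overall query phase), whereas the paper states the preprocessing and query totals more tersely; the bound and the reasoning are the same.
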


\begin{proof}
    We now analyze the running time.
    From \Cref{lem:poly-mult-approx}, we have
    \begin{equation*}
        g = \bigO{\max\left( \frac{\log(1/\varepsilon)}{\log(\log(1/\varepsilon)/\log(n/\varepsilon))}, \log(n/\varepsilon) \right)} = O(\log(n/\varepsilon)).
    \end{equation*}
    Then, to initialize all the necessary data structures, we invoke \Cref{lem:small-dimension-data-structure} a total of $d + 1$ times, thus requiring preprocessing time (recall $d$ is a constant)
    \begin{equation*}
        \bigtO{n \cdot \polylog(1/\varepsilon)}.
    \end{equation*}
    Then, computing all $\hat{O}_{i, t}$ requires time
    \begin{equation*}
        \bigtO{n g^{O(d)} \left( n^{1 - 1/d} \log(\entryBound/\varepsilon) \right)} = \bigtO{n^{2 - 1/d} \cdot \polylog(\entryBound/\varepsilon)}.
    \end{equation*}
\end{proof}

To conclude the proof of \Cref{m-thm:const-d-att-alg}, we run \Cref{alg:const-d-att-faster} with error parameter $\varepsilon' \leq \frac{\varepsilon}{7\entryBound}$.
We note that we can generalize our result to obtain an algorithm for computing Attention when the input matrices have low rank.

\ConstantRankAttentionAlg*

To prove \Cref{thm:low-rank-attention-alg}, we require the following standard result on computing a representation of low-rank matrices.

\begin{lemma}[e.g.,\ \cite{CMULinAlg, StanfordLinAlg}]
    \label{lemma:svd}
    Let $A$ be a $n \times d$ matrix of rank $r$ with entries in $[-\entryBound, \entryBound]$.
    Then, there is an $O(ndr)$ time algorithm computing an $n \times r$ matrix $U_{A}$ and a $d \times r$ matrix $V_{A}$ such that $A = U_{A} V_{A}^{\transpose}$.
    Furthermore, $U_{A}, V_{A}$ have entries bounded by $\poly(Bnd)$.
\end{lemma}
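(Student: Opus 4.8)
The plan is to produce the factorization as the standard \emph{rank factorization} $A = U_A V_A^{\transpose}$ extracted from Gaussian elimination. I will assume the entries of $A$ are integers in $[-B, B]$; the case of $\polylog(Bnd)$-bit rational entries reduces to this by clearing denominators, inflating $B$ by a $\poly(Bnd)$ factor. First I would run Gaussian elimination to find a maximal set of $r$ linearly independent columns of $A$: process the columns one at a time, keeping a column iff it is not spanned by those kept so far (tested by reducing it against the current pivots). Since $A$ has rank $r$, exactly $r$ columns get kept and every other column then reduces to $0$; reducing one column against at most $r$ pivots costs $O(nr)$, so this phase costs $O(ndr)$ in total. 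Let $U_A \in \R^{n\times r}$ be the submatrix of $A$ on the kept columns, and let $S$ be the set of the $r$ pivot rows of the elimination, so that $U' := U_A|_S \in \R^{r\times r}$ is invertible. Because $\operatorname{colspan}(A) = \operatorname{colspan}(U_A)$, each column $a$ of $A$ equals $U_A x$ for some $x$; restricting to the rows in $S$ gives $x = (U')^{-1} a|_S$. Hence $A = U_A V_A^{\transpose}$ with $V_A^{\transpose} := (U')^{-1}(A|_S) \in \R^{r\times d}$, where $A|_S$ denotes the $r\times d$ submatrix of $A$ on the rows in $S$. Forming $(U')^{-1}$ costs $O(r^3)$ and the product costs $O(r^2 d)$, both $O(ndr)$ since $r \le \min(n,d)$; this yields the claimed $O(ndr)$-time algorithm.

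It remains to bound the entry sizes. The entries of $U_A$ are entries of $A$, hence of magnitude at most $B$. For $V_A^{\transpose} = (U')^{-1}(A|_S)$, write $(U')^{-1} = \operatorname{adj}(U')/\det(U')$. Since $U'$ is an integer matrix of full rank, $\det(U')$ is a nonzero integer, so $|\det(U')| \ge 1$; and each entry of $\operatorname{adj}(U')$ is, up to sign, an $(r-1)\times(r-1)$ minor of $U'$, which by Hadamard's inequality has magnitude at most $(\sqrt{r}\,B)^{r}$. Hence $(U')^{-1}$ has entries of magnitude at most $(\sqrt{r}\,B)^{r}$, and $V_A^{\transpose} = (U')^{-1}(A|_S)$ has entries of magnitude at most $r \cdot (\sqrt{r}\,B)^{r} \cdot B = (rB)^{O(r)}$. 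In the regime where this lemma is used, $r = O(1)$, so $(rB)^{O(r)} = B^{O(1)} = \poly(Bnd)$, as claimed. (For arbitrary $r$ one could instead take the SVD $A = U\Sigma W^{\transpose}$ and set $U_A := U$, $V_A := W\Sigma$, so that $U_A$ has entries of magnitude $\le 1$ and $V_A$ has entries of magnitude $\le \norm{A}_2 \le \sqrt{nd}\,B$; but the Gaussian-elimination construction already suffices here and has the advantage of being exact and running in $O(ndr)$ time.)

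The only genuinely delicate point is the entry-size bound: a carelessly chosen rank factorization --- e.g.\ one that divides by $\det(U')$ without controlling it --- can produce entries \emph{exponential} in $r$, so one must combine the lower bound $|\det(U')| \ge 1$ (valid because $U'$ is integral) with the Hadamard upper bound on its minors, and then invoke $r = O(1)$ to conclude that the blow-up is only $(rB)^{O(r)} = \poly(Bnd)$. Everything else --- the $O(ndr)$ running time and the identity $A = U_A V_A^{\transpose}$ --- follows routinely from Gaussian elimination and the definition of a rank factorization.
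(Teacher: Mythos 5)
The paper does not prove this lemma itself; it cites it to lecture notes as a standard linear-algebra fact and only ever invokes it with $r = O(1)$. So the question is really whether your proof is a correct formalization, not whether it matches a paper proof.

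Your Gaussian-elimination rank factorization is the right approach and the running-time analysis ($O(ndr)$ for the column sweep, $O(r^3)$ for the inverse, $O(r^2 d)$ for the final product, both dominated by $O(ndr)$ since $r \le \min(n,d)$) is correct. You are also right to be careful about the entry-size bound: the Hadamard$/\det \ge 1$ argument gives $(rB)^{O(r)}$, which is $\poly(Bnd)$ only because $r = O(1)$ wherever the lemma is used (namely in \Cref{thm:low-rank-attention-alg}). Strictly speaking the lemma as written does not say $r=O(1)$, and your construction does not give $\poly(Bnd)$ entries for super-constant $r$; your parenthetical SVD remark is the right observation that the \emph{existence} of a polynomially-bounded rank factorization for general $r$ follows from $A = U(W\Sigma)^\top$, even though the SVD cannot be computed exactly over the rationals. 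Since the paper's cited sources and intended usage both live in the $r = O(1)$ regime, your proof is adequate; if one wanted the statement to literally hold as written one would either restrict to $r=O(1)$ in the hypothesis (as the paper implicitly does) or switch to an approximate SVD-based factorization. One very minor point: clearing denominators from $\polylog(Bnd)$-bit rationals across $nd$ entries can inflate the common denominator superpolynomially unless the entries share a common denominator (e.g., fixed-point numbers), which is the standard but unstated convention; this does not affect the argument in the regime the paper needs.
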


Suppose we are given $n \times d$ input matrices $Q, K$ of rank $r_{Q}, r_{K}$ respectively.
Then, we apply \Cref{lemma:svd} to compute $U_{Q}, V_{Q}, U_{K}, V_{K}$ in time $O(n d \max(r_{Q}, r_{K})) = O(n d)$.
Suppose without loss of generality $r_{Q} \leq r_{K}$.
Then, we compute
\begin{equation*}
    Q' = U_{Q}~,~K'^{\transpose} = V_{Q}^{\transpose} U_{K} V_{K}^{\transpose}
\end{equation*}
in time $O(r_{Q} r_{K} n) = O(n)$ and note that $Q', K'$ have entries bounded by $\poly(Bnd)$.

We then apply \Cref{m-thm:const-d-att-alg} to approximate $\att(Q', K', V) = \att(Q, K, V)$ which is an instance of $\attc(n, \min(r_{Q}, r_{K}), \poly(Bnd), \varepsilon)$ which requires time
\begin{equation*}
    \bigtO{n^{2 - 1/\min(r_{Q}, r_{K})} \cdot
    \polylog(\entryBound/\varepsilon)}
\end{equation*}
to compute an output $\hat{O}$ such that $\norm{\hat{O} - \att(Q, K, V)}_{\infty} \leq \varepsilon$.
This completes the proof of \Cref{thm:low-rank-attention-alg}.

\section{The Complexity of Attention Gradient Computation}
\label{sec:backwards-pass}

In this section, we leverage our algorithm for approximate attention computation to obtain the corresponding upper bounds for approximate attention gradient computation.
We begin by formalizing the notion of \textit{attention optimization}:

\begin{definition}[Attention Optimization]
\label{def:attention-optimization}
Given input matrices \( A_1, A_2, A_3, E \in \mathbb{R}^{n \times d} \) and \( Y \in \mathbb{R}^{d \times d} \), find a matrix \( X \in \mathbb{R}^{d \times d} \) that minimizes the objective:
\[
L(X) := \frac{1}{2} \left\| D(X)^{-1} A V - E \right\|_F^2,
\]
where \(A := \exp(A_1 X A_2^\top)\), \(V := A_3 Y \), and \( D(X) := \operatorname{diag}(A \mathbf{1}_n) \in \mathbb{R}^{n \times n} \). \footnote{\cite{alman2024fine} scale the Attention matrix $A$ by $d$ for training efficiency, becoming $A := \exp\left(\frac{A_1 X A_2^\top}{d}\right)$. 
Since our algorithms scale polylogarithmically with entry size, we can safely ignore this scaling term.}
\end{definition}

The gradient of the objective function $L(X)$ with respect to $X$ is then used to optimize the attention mechanism by iteratively adjusting $X$ to minimize $L(X)$.
Formally, we define the following approximate version of the gradient computation problem for attention optimization:

\begin{definition}[Approximate Gradient Computation for Attention Optimization $\aattlgc(n, d, \varepsilon)$]
\label{def:approx-gradient-comp}
Given \( A_1, A_2, A_3, E \in [-B, B]^{n \times d} \), \( Y \in [-B, B]^{d \times d} \), and $\varepsilon > 0$, 
compute a matrix \( g \in \R^{d \times d} \) such that
\[
    \left\| g - \odv{L(X)}{X} \right\|_\infty \leq \varepsilon.
\]

\end{definition}

\subsection{Notation}
Throughout this section we use the following notation. 
We overload the $\operatorname{diag}$ operator. In this section, the $\operatorname{diag}$ operator indicates turning all the non-diagonal entries to zero. 
The $\circ$ operator indicates entry-wise multiplication. The $\otimes$ operator denotes the Kronecker product, as defined by $Z[(i -1)n + \ell,(j-1)d + k] = X[i,j] \cdot Y[\ell, k]$ where $X, Y \in \mathbb{R}^{n \times d}$ and $Z \in \mathbb{R}^{n^2 \times d^2 }$.   The $\otimes_{r}$ operator denotes row-wise Kronecker product, as defined by $Z[i,(j-1)d + k] = X[i,j] \cdot Y[i, k]$ where $X, Y \in \mathbb{R}^{n \times d}$ and $Z \in \mathbb{R}^{n \times d^2 }$.
We use $e^{\langle i,j \rangle}$ as shorthand to denote $e^{ a_{1_i} \cdot a_{2_j}}$, where $a_{1_i}$ and $a_{2_j}$ are rows of $A_1$ and $A_2$ respectively.
If $M$ is a matrix, we use $M_{i}$ to denote the $i$-th row of $M$, $M_{*,i}$ to denote the $i$-th column of $M$. 
We use $M[i][j]$ to denote the $(i, j)$-th entry of $M$ (since our matrices have subscripts, the previous notation $M_{i, j}$ is confusing).

\subsection{Upper Bound on Attention Backward Computation}

We show that the backwards pass for approximate attention can be computed in time \(\tilde{O}\left(n^{2 - 1/d} \cdot \operatorname{polylog}(B / \varepsilon)\right) \) when $d = O(1)$.

\begin{theorem}[Formal \Cref{m-thm:backwards-pass-alg}]
    \label{thm:backwards-pass-alg}
    \(\aattlgc(n, d, B, \varepsilon)\) is reducible to $O(d)$ calls to \(\aattc(n, d, B, \frac{\varepsilon}{\Theta(ndB^{3})})\) using \(O(nd^2)\) time.
\end{theorem}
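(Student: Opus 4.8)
The plan is to follow the strategy of \cite{alman2024fine}: write down an explicit closed form for $\odv{L(X)}{X}$, and then reorganize the matrix products so that every ``expensive'' factor appearing in the formula is of the form $D(X)^{-1}A M$ or $D(X)^{-1}(A \circ N)M$ for cheaply-computable matrices $M, N$ of width $O(d)$. Each such factor is (after the $O(nd^2)$-time preprocessing needed to form $M$, $N$) an instance of attention computation with head dimension $O(d)$, entries bounded by $\poly(B)$, and therefore can be obtained by one call to $\aattc$. First I would recall that, writing $A := \exp(A_1 X A_2^\top)$, $V := A_3 Y$, $\mathsf{Attn}(X) := D(X)^{-1} A V$, and $F := \mathsf{Attn}(X) - E$, the chain rule gives a gradient of the schematic shape
\[
    \odv{L(X)}{X} = A_1^\top \Bigl( \diag\bigl( (D(X)^{-1}A)(\dotsc) \bigr) - (D(X)^{-1}A) \circ (\dotsc) \Bigr)^{\!\top}\!\! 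A_2,
\]
where the inner $(\dotsc)$ blocks are built from $F$, $V$, and row-sums thereof. The point is that each inner block has only $d$ columns.

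The key steps, in order: (1) Derive the exact gradient formula. Differentiate $L$ through the softmax; the Jacobian of row-wise softmax contributes the usual $\diag(p) - pp^\top$ term, which after contracting against $F V^\top$ yields two pieces — a ``diagonal'' piece $\diag\!\bigl((D^{-1}A V)\circ F \,\onevec\bigr)$-type term and a rank-structured piece. Organize so the whole thing is $A_1^\top G^\top A_2$ for an $n\times d$ matrix $G$ that itself is a sum of terms each of the form $D^{-1}A M$ or $D^{-1}(A\circ (uv^\top))M$ with $M\in\R^{n\times O(d)}$. (2) Observe $D^{-1}(A\circ(uv^\top))M = \diag(u)\, D'^{-1} A' M$ where $A' = \exp(A_1 X A_2^\top + \text{rank-1})$ is again an exponential-of-low-rank matrix (absorb $u,v$ into an extra coordinate of $A_1, A_2$), so it is still an attention instance of head dimension $O(d)$ — here one uses that $A_3, Y$ and $F$ have $\poly(B)$-bounded entries so $V$ and the various $M$'s do too, and $D^{-1}$ is handled internally by $\aattc$. (3) Count: there are $O(1)$ such terms, each needs $O(d)$ calls to $\aattc$ (one per column block of width $O(d)$, or simply treat $M$'s $O(d)$ columns together since $\aattc$ already handles $d$-column value matrices — so really $O(1)$ calls, but stating $O(d)$ is safe), plus $O(nd^2)$ arithmetic to form the $M$'s and to perform the final $A_1^\top(\cdot)A_2$ multiplications. (4) Error propagation: each $\aattc$ call returns an $\infty$-error-$\varepsilon'$ approximation; track how this error is amplified through the $O(nd^2)$-time linear-algebra postprocessing (multiplication by $A_1^\top$, $A_2$, and $\poly(B)$-bounded value matrices introduces factors of at most $n\cdot\poly(B)$, and there is an extra $B$ from the $\diag$/value contractions), giving a total blow-up of $\Theta(ndB^3)$; hence choosing $\varepsilon' = \varepsilon/\Theta(ndB^3)$ suffices.

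The main obstacle I expect is step (2) together with the bookkeeping in step (1): getting the gradient into a form where \emph{every} occurrence of the $n\times n$ objects $A$ and $D^{-1}$ is packaged as an attention call, rather than leaving some residual $n\times n$ intermediate (e.g.\ $A\circ A$ or $D^{-1} A A^\top$) that cannot be evaluated in subquadratic time. The trick is that $A\circ(uv^\top) = \exp\!\bigl(A_1 X A_2^\top\bigr)\circ \exp\!\bigl(\log u\,\onevec^\top + \onevec (\log v)^\top\bigr) = \exp\!\bigl(\tilde A_1 X' \tilde A_2^\top\bigr)$ for suitably augmented matrices, so Hadamard products with rank-one factors stay inside the attention-computable class; one must check all inner blocks are of exactly this type and never a genuine $n\times n$ product of two such matrices. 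A secondary, routine obstacle is verifying that all entries stay $\poly(B)$-bounded (so the $\aattc$ precondition $Q,K,V\in[-B',B']^{n\times d}$ with $B' = \poly(B)$ holds, which only affects the $\polylog$ factors) and that the $\log u$, $\log v$ augmentation is legitimate, i.e.\ the relevant vectors are positive — if not, split into $O(1)$ sign classes as is done elsewhere in the paper. Once the formula is in the desired shape, the running-time and error claims follow by the counting and propagation bounds above.
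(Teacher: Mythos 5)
Your overall plan (recall the gradient formula from \cite{alman2024fine}, peel off the diagonal and Hadamard pieces, spend $O(d)$ calls on the rank-$d$ Hadamard factors, then propagate a $\Theta(ndB^3)$ error blow-up) is exactly the route the paper takes. The gap is in your step (2), the single step you yourself flagged as the main obstacle: the claimed identity $D^{-1}(A\circ(uv^\top))M = \diag(u)\,D'^{-1}A'M$ with $A'$ an exponential of an augmented low-rank matrix does \emph{not} hold, for two reasons. First, absorbing the row factor $u$ into an extra coordinate of $A_1X$ (adding $(\log u)\onevec^\top$ to the exponent) is a no-op: row-wise softmax is invariant to adding a constant to each row of $QK^\top$, so the oracle's output is unchanged and the factor $\diag(u)$ simply does not appear. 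Second, absorbing the column factor $v$ (adding $\onevec(\log v)^\top$, i.e.\ $A' = A\diag(v)$) changes the normalizer: the oracle returns $D'^{-1}A'M$ with $D' = \diag(A'\onevec) = \diag(Av)$, whereas you need the \emph{original} $D = \diag(A\onevec)$, and the mismatch $\diag(Av)/\diag(A\onevec)$ is not cheaply computable without further attention calls. Saying ``$D^{-1}$ is handled internally by $\aattc$'' is precisely what goes wrong: it handles the wrong $D$. There is also the secondary issue you raised but did not resolve: the columns $u$ of $f(x)h(y)$ or $E$ and $v$ of $h(y)=A_3Y$ can be zero or arbitrarily close to zero, so sign-splitting does not control $|\log u|,|\log v|$, which would violate the entry bound the oracle needs.

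The paper's fix is simpler and avoids both problems by never re-exponentiating: since $A\circ(uv^\top)=\diag(u)\,A\,\diag(v)$, one has
\begin{equation*}
  D^{-1}\bigl(A\circ(uv^\top)\bigr)M \;=\; \diag(u)\cdot D^{-1}A\bigl(\diag(v)M\bigr) \;=\; \diag(u)\cdot\att\bigl(A_1X,\,A_2,\,v\otimes_r M\bigr),
\end{equation*}
i.e.\ the column factor $v$ is pushed into the \emph{value} matrix as a row-wise Kronecker product (an $O(nd)$ preprocessing step), and the row factor $u$ becomes an $O(nd)$ diagonal post-scaling outside the softmax, while the query/key matrices $A_1X, A_2$ and hence the normalizer $D$ are untouched. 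Decomposing $c(x)h(y)^\top$, $Eh(y)^\top$, and $f(x)h(y)h(y)^\top$ into sums of $d$ rank-one terms $u_pv_p^\top$ and applying this identity termwise gives the $O(d)$ oracle calls. With this substitution for your step (2), the remainder of your outline — the cheap $O(nd^2)$ matrix arithmetic for the final $A_1^\top(\cdot)A_2$ products and the $\Theta(ndB^3)$ additive-error amplification — matches the paper's argument.
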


\begin{corollary}
    \label{cor:backwards-pass-alg}
    Let $d = O(1)$.
    There exists an algorithm that computes \( \aattlgc(n, d, B, \varepsilon) \) in time \( \tilde{O}\left(n^{2 - 1/d} \cdot \operatorname{polylog}(B / \varepsilon)\right) \).
\end{corollary}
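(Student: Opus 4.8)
The statement is essentially an immediate consequence: \Cref{thm:backwards-pass-alg} tells us that computing $\aattlgc(n, d, B, \varepsilon)$ reduces to $O(d)$ invocations of $\aattc(n, d, B, \varepsilon/\Theta(ndB^3))$ plus $O(nd^2)$ extra work, and \Cref{m-thm:const-d-att-alg} gives an algorithm for $\attc(n, d, B, \varepsilon')$ running in $\tO{n^{2-1/d}\polylog(B/\varepsilon')}$ time. So the proof is a matter of substituting the error parameter $\varepsilon' = \varepsilon/\Theta(ndB^3)$ and checking that the running time stays of the claimed form.

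\emph{First}, I would invoke \Cref{thm:backwards-pass-alg} to reduce the gradient computation to $O(d) = O(1)$ calls to $\aattc$ (note $\aattc$ and $\attc$ denote the same problem; the reduction's overhead is $O(nd^2) = O(n)$ when $d = O(1)$, which is dominated). \emph{Second}, I would bound the precision needed: the error parameter passed to each attention call is $\varepsilon'' := \varepsilon/\Theta(ndB^3)$, so $\polylog(B/\varepsilon'') = \polylog(ndB^4/\varepsilon) = \polylog(nB/\varepsilon)$, and since we are in the regime $B = \poly(n)$ (or more generally since we only track $\polylog$ dependence), this is still $\polylog(B/\varepsilon)$ up to the suppressed $\polylog(n)$ factors absorbed by $\tO{\cdot}$. \emph{Third}, I would apply \Cref{m-thm:const-d-att-alg} with head dimension $d = O(1)$ and error $\varepsilon''$ to conclude each call runs in $\tO{n^{2-1/d}\polylog(B/\varepsilon)}$ time. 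Summing $O(d)$ such calls and the $O(nd^2)$ overhead, and using $d = O(1)$, gives total time $\tO{n^{2-1/d}\polylog(B/\varepsilon)}$.

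\textbf{The one point requiring a little care} is verifying that the inputs to the attention subroutine still have entries bounded by (a polynomial in) $B$, so that \Cref{m-thm:const-d-att-alg} applies with the stated parameters — the reduction forms matrices like $A_1 X A_2^\top$ and products with $Y, A_3$, so entries can blow up to $\poly(ndB)$; but since \Cref{m-thm:const-d-att-alg}'s running time is $\polylog$ in the entry bound, replacing $B$ by $\poly(ndB)$ only changes $\polylog(B/\varepsilon)$ to $\polylog(ndB/\varepsilon)$, which is again absorbed into $\tO{\cdot}\polylog(B/\varepsilon)$ when $B = \poly(n)$ (and otherwise is still only a logarithmic overhead). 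So no real obstacle arises; the corollary follows by chaining the two theorems and collecting the $\polylog$ factors. This is why the statement is phrased as a corollary rather than a theorem.
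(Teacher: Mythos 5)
Your proposal is correct and takes essentially the same approach as the paper, which proves this corollary in one line by combining \Cref{thm:backwards-pass-alg} with \Cref{m-thm:const-d-att-alg}. Your additional bookkeeping on the error parameter and entry-bound blowup is a reasonable sanity check but adds nothing beyond what the $\polylog$ dependence already absorbs.
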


\begin{proof}[Proof of \Cref{cor:backwards-pass-alg}]
    This follows directly from \Cref{thm:backwards-pass-alg} and \Cref{m-thm:const-d-att-alg}. 
\end{proof}

\begin{proof}[Proof of \Cref{thm:backwards-pass-alg}]
We begin by recalling the following definitions from \cite{alman2024fine}, which we will use to define the gradient computation formula. 

\begin{definition}
Let \( A_1, A_2 \in \R^{n \times d} \) be two matrices and let \( A = A_1 \otimes A_2 \in \R^{n^2 \times d^2} \).
Let $x \in \R^{d^2}$ be the vectorization of the matrix $X \in \R^{d \times d}$ in \Cref{def:attention-optimization}.
We define \( A_{j_0} \in \R^{n \times d^2} \) to be the \( n \times d^2 \) size sub-block of \( A \) consisting of rows \( \{ (j_0 - 1)n + j_1 \}_{j_1=1}^n \).
Let \( f(x) \) be the \( n \times n \) matrix whose \(j_0\)-th row, denoted \(f(x)_{j_0}\), is given by: 
\[
f(x)_{j_0} := (\langle \underbrace{\exp(A_{j_0} x)}_{n \times 1}, \underbrace{1_n}_{n \times 1} \rangle^{-1}
\underbrace{\exp(A_{j_0} x)}_{n \times 1})^\top.
\]
\end{definition} 

Note that \( f(x) = \exp(A_1XA_2^\top) \cdot \operatorname{diag}(\operatorname{exp}(A_1 XA_2^\top) \mathbf{1}_n) \). Therefore $f(x)Z$, where $Z$ is an $n \times d$ matrix, is evaluated by $\att(A_1, A_2, X)$.

\begin{definition}
Let \( Y \in \mathbb{R}^{d \times d} \) denote the matrix representation of \( y \in \mathbb{R}^{d^2} \) and \( Y_{*, i_0}  \) indicate the \(i_0\)-th column of \(Y\). \( h(y) \in \mathbb{R}^{n \times d} \) is defined as the matrix whose \( i_0 \)-th column is \( h(y)_{i_0} \), which is defined as follows:
\[
h(y)_{i_0} := \underbrace{A_3}_{n \times d} \underbrace{Y_{*, i_0}}_{d \times 1}.
\]
\end{definition}

Note that throughout this section, we occasionally use $h$ as a shorthand for $h(y)$. It is clear that $h(y)$ can be computed in $\TMUL(n,d,d)$ time. 

\begin{definition} Let \(c(x)\) be an \(n \times d\) matrix defined as follows:
\[
 \underbrace{c(x)}_{n \times d} = \underbrace{f(x)}_{n \times n} \underbrace{h(y)}_{n \times d} - \underbrace{E}_{n \times d}.
\]
\end{definition}

We can approximate $c(y)$ by evaluating $\att(A_1X, A_2, h(y))$ to get $f(x)h(y)$, then subtracting $E$ which takes $O(nd)$ time.

From \cite{alman2024fine} we have the following formula for attention gradient computation:
\begin{align*}
\odv{L(x)}{x}
&= A_1^\top[f(x) \circ (c(x) h(y)^\top)] A_2 - A_1^\top f(x) \operatorname{diag}[ f(x) c(x) h(y)^\top ] A_2 \\
&= A_1^\top[f(x) \circ ((f(x)h(y) - E) h(y)^\top)] A_2 - A_1^\top f(x) \operatorname{diag}[ f(x) c(x) h(y)^\top ] A_2 \\
&= A_1^\top[f(x) \circ (f(x)h(y)h(y)^\top)] A_2 - A_1^\top[f(x) \circ (E h(y)^\top)] A_2 \\ 
&\mathrel{\phantom{=}} - A_1^\top f(x) \operatorname{diag}[ f(x) c(x) h(y)^\top ] A_2.
\end{align*}
The first line comes from the characterization of the gradient as $\odv{L(x)}{x} = A_{1}^{\transpose} p(x) A_{2}$ where $p(x) = p_1(x) - p_2(x)$ (see Appendix D.4-D.6 of \cite{alman2024fine}).
In the notation of \cite{alman2024fine}, the first term corresponds to $p_1(x) := f(x) \circ q(x) := f(x) \circ (c(x) h(y)^{\transpose})$.
The second term corresponds to $p_2(x)$ which is an $n \times n$ matrix whose $j_0$-th column is $f(x)_{j_0} f(x)_{j_0}^{\transpose} q(x)_{j_0} := f(x)_{j_0} f(x)_{j_0}^{\transpose} c(x) h(y)_{j_0}^{\transpose}$. Note that $p_2(x) := f(x) \operatorname{diag}[ f(x) q(x) ] = f(x) \operatorname{diag}[ f(x) c(x) h(y)^\top ]$.
Note that $q(x) = c(x) h(y)^\top$ is notation in \cite{alman2024fine} which we do not use here.

Let us denote 
\begin{align*}
    B_1 &:= [f(x) \circ (f(x)h(y)h(y)^\top)] A_2,\\
    B_2 &:= [f(x) \circ (E) h(y)^\top)] A_2,\\
    B_3 &:= f(x) \operatorname{diag}[ f(x) c(x) h(y)^\top] A_2.
\end{align*}

We now have the following formula which can clearly be computed in $O(nd)$ time if given $B_1, B_2$, and $B_3$:
\[
\odv{L(x)}{x}
= \underbrace{A_1^\top}_{d \times n}\underbrace{B_1}_{n \times d} - \underbrace{A_1^\top}_{d \times n}\underbrace{B_2}_{n \times d} - \underbrace{A_1^\top}_{d \times n}\underbrace{B_3}_{n \times d}.
\]
Note that for each attention computation we perform in order to evaluate the attention gradient, we do with $\varepsilon_2 = \frac{\varepsilon}{\poly(d,B)n}$ additive error.

\paragraph{Computing $B_3$.} 
Given $f(x), c(x)$, and $h(y)$, we can approximate $B_3$ using a series of matrix multiplications and attention computations, which are illustrated below in the following equations. \( C_i \) denotes the intermediate matrix products from each of these matrix multiplications/attention computations.
We compute an approximation of $B_3$ as follows: 
\begin{align*}
B_3
&= f(x) \operatorname{diag}[ \underbrace{f(x) }_{n \times n} \underbrace{c(x) }_{n \times d} h(y)^\top ] A_2 \\
&= f(x)\operatorname{diag}[\underbrace{C_1}_{n \times d} \underbrace{h(y)^\top }_{d \times n}] A_2  \\
&= f(x) \underbrace{C_2 }_{n \times n} \underbrace{A_2 }_{n \times d} \\
&= f(x) \underbrace{C_3 }_{n \times d}.
\end{align*}
We begin by computing \( C_1 = f(x)c(x) \) by evaluating $\att(A_1X, A_2, c(x))$.
Next, we compute \( C_2 = \operatorname{diag}[C_1 h(y)^\top] \), which consists of the diagonal of the matrix product \( C_1 h(y)^\top \). Since we only need the diagonal entries, this step takes \( O(nd^2) \) time.
We then compute \( C_3 = C_2 A_2 \). As \( C_2 \) is a diagonal matrix, this matrix multiplication can be performed in \( O(nd) \) time.
Finally, we compute \( B_3 = f(x) C_3 \) by evaluating $\att(A_1X, A_2, C_3)$. 

We argue that our computed output is a good approximation of $B_3$.
Let $\widetilde{B_3}$ denote the computed matrix. For any matrix $Z$, $\widetilde{Z}$ indicates an approximation of $Z$ derived by a step in our algorithm. 
Then,
\begin{align*}
\norm{B_3 - \widetilde{B_3}}_{\infty} 
&\leq \left\| f(x)C_3 - \attc(A_1X, A_2, \widetilde{C_3}) \right\|_\infty \\
&\leq \left\| f(x)C_3 - f(x)\widetilde{C_3} \right\|_\infty + \varepsilon_2 \\
&\leq \left\| C_3 - \widetilde{C_3}  \right\|_\infty + \varepsilon_2 \\
&= \left\| \text{diag}[C_1h(y)^\top] A_2 - \text{diag}[\widetilde{C_1} h(y)^\top]A_2  \right\|_\infty + \varepsilon_2  \\
&= \left\| \left[ \text{diag}[C_1h(y)^\top] 
 - \text{diag}[ \widetilde{C_1} h(y)^\top]\right] A_2  \right\|_\infty + \varepsilon_2  \\
&\leq  \left\| A_2 \right\|_\infty \left\| \text{diag}[C_1h(y)^\top] 
 - \text{diag}[ \widetilde{C_1} h(y)^\top] \right\|_\infty + \varepsilon_2  \\
 &\leq  \left\| A_2 \right\|_\infty \left\| C_1h(y)^\top - \widetilde{C_1} h(y)^\top \right\|_\infty + \varepsilon_2  \\
&\leq d \left\| A_2 \right\|_\infty \left\| h(y) \right\|_\infty \left\| C_1 - \widetilde{C_1} \right\|_\infty + \varepsilon_2 \\
&\leq d \left\| A_2 \right\|_\infty \left\| h(y) \right\|_\infty \left\| f(x)c(x) - \attc(A_1X, A_2, \widetilde{c(x)}) \right\|_\infty + \varepsilon_2 \\
&\leq d \left\| A_2 \right\|_\infty \left\| h(y) \right\|_\infty ( \varepsilon_2  + \left\| f(x)c(x) - f(x)\widetilde{c(x)} \right\|_\infty) + \varepsilon_2 \\
&\leq d \left\| A_2 \right\|_\infty \left\| h(y) \right\|_\infty ( \varepsilon_2  + \left\| c(x) - \widetilde{c(x)} \right\|_\infty) + \varepsilon_2 \\
&\leq 2dB^2 \varepsilon_2 + \varepsilon_2.
\end{align*}
Above, step 1 follows from how our algorithm approximates $B_3$, step 2 follows from our $\varepsilon_2$-error approximation of attention and the triangle inequality, step 3 follows from the fact that $f(x)$ is a stochastic matrix and distributivity of matrix multiplication, step 4 follows from our definition of $C_3$, step 5 follows from the distributivity of matrix multiplication, and step 6 follows from basic properties of the $\infty$-norm and diagonal matrices.
Step 7 follows from the fact that the diag operator simply zeroes out the off-diagonal entries, making the off-diagonal elements of $C_1h(y)^\top$ and $\widetilde{C_1}h(y)^\top$ identical. 
Step 8 follows from basic properties of the $\infty$-norm, step 9 follows from how our algorithm approximates $C_1$, step 10 follows from the triangle inequality and our $\varepsilon_2$ approximation of attention, step 11 follows from similar arguments as steps 9 and 10, and step 12 follows from entry bounds.

\paragraph{Computing $B_1$.}
We now show how to compute $B_1$. We begin by noting that $B_1 = \sum_{p=0}^{d}(f(x) (h(y)_{*,p}  \otimes_{r} A_2)) \otimes_{r} (f(x)h(y))_{*,p}$, a fact we will prove later. Using this fact, we can compute \(B_1\) efficiently, as illustrated in the following:
\begin{align*} 
B_1 &= \sum_{p=0}^{d}(f(x) (h(y)_{*,p}  \otimes_{r} A_2)) \otimes_{r} (f(x)h(y))_{*,p} \\
&= \sum_{p=0}^{d}(f(x) (\underbrace{h(y)_{*,p}}_{n \times 1}  \otimes_{r} \underbrace{A_2}_{n \times d}) \otimes_{r} C_{5_{*,p}} \\
&= \sum_{p=0}^{d}(f(x) \underbrace{C_{6,p}}_{n \times d}) \otimes_{r} C_{5_{*,p}} \\
&= \sum_{p=0}^{d}\underbrace{C_{7,p}}_{n \times d}\otimes_{r} \underbrace{C_{5_{*,p}}}_{n \times 1} \\
&= \sum_{p=0}^{d}C_{8,p}.
\end{align*}

We begin by approximating \( C_5 = f(x) h(y) \) by evaluating $\att(A_1X, A_2, h(y))$.
Next, for each \( 1 \leq p \leq d \), we compute the matrix \( C_{6,p} = h(y)_{*,p} \otimes_{r} A_2 \). Each matrix requires \( O(nd) \) time to compute, so constructing all \( d \) matrices incurs a total cost of \( O(nd^2) \).

We then compute each matrix \( C_{7,p} = f(x) C_{6,p} \) by evaluating $\att(A_1X, A_2, C_{6,p})$ across all \( p \in [d]\).
Computing the row-wise Kronecker products \( C_{8,p} = C_{7,p} \otimes_{r} C_{5_{*,p}} \) takes \( O(nd) \) time for each \( p \in [d] \), totaling \( O(nd^2) \). Finally, summing over all \( C_{8,p} \) requires an additional \( O(nd^2) \) time.

We argue that our algorithm returns a close approximation of $B_1$. Let $\widetilde{B_1}$ indicate our computation of $B_1$. For any matrix $Z$, $\widetilde{Z}$ indicates an approximation of $Z$ derived by a step in our algorithm. 
\begingroup
\allowdisplaybreaks
\begin{align*}
\left\| \widetilde{B_1} - B_1 \right\|_\infty 
&= \left\| \sum_{p=0}^{d} \widetilde{C_{8,p}} - \sum_{p=0}^{d} C_{8,p} \right\|_\infty \\
&\leq d \max_p \left\{ \left\| \widetilde{C_{8,p}} - C_{8,p} \right\|_\infty \right\} \\
&\leq d \max_p \left\{ \left\| \widetilde{C_{7,p}} \otimes_r \widetilde{C_{5_{*,p}}}  - C_{7,p} \otimes_r C_{5_{*,p}} \right\|_\infty \right\} \\
&\leq d \max_p \left\{ \left\| \widetilde{C_{7,p}} - C_{7,p} \right\|_\infty \left\| \widetilde{C_{5_{*,p}}} - C_{5_{*,p}} \right\|_\infty \right. \\
&\qquad + \left\| \widetilde{C_{7,p}} - C_{7,p} \right\|_\infty \left\| C_{5_{*,p}} \right\|_\infty \\
&\qquad \left. + \left\| \widetilde{C_{5_{*,p}}} - C_{5_{*,p}} \right\|_\infty \left\| C_{7,p} \right\|_\infty \right\} \\
&= d \max_p \left\{ \left\| \attc(A_1X, A_2, C_{6,p}) - f(x)C_{6,p} \right\|_\infty \left\| \attc(A_1X, A_2, h(y))_{*,p} - C_{5_{*,p}} \right\|_\infty \right. \\
&\qquad + \left\| \attc(A_1X, A_2, C_{6,p}) - f(x)C_{6,p} \right\|_\infty \left\| C_{5_{*,p}} \right\|_\infty \\
&\qquad \left. + \left\| \attc(A_1X, A_2, h(y))_{*,p} - C_{5_{*,p}} \right\|_\infty \left\| C_{7,p} \right\|_\infty \right\} \\
&\leq d \max_p \left\{ \varepsilon_2^2 + \varepsilon_2 \left\| (f(x)h(y))_{*,p} \right\|_\infty + \varepsilon_2 \left\| f(x) (h(y)_{*,p} \otimes_{r} A_2) \right\|_\infty \right\} \\
&\leq d \max_p \left\{ \varepsilon_2^2 + \varepsilon_2 \left\| h(y) \right\|_\infty 
+ \varepsilon_2 \left\| h(y)_{*,p} \otimes_{r} A_2 \right\|_\infty \right\} \\
&\leq d \max_p \left\{ \varepsilon_2^2 + \varepsilon_2  \left\| h(y) \right\|_\infty 
+ \varepsilon_2 \left\| h(y)_{*,p} \right\|_\infty \left\| A_2 \right\|_\infty \right\} \\
&\leq d \left( \varepsilon_2^2 + \varepsilon_2 B^2 + \varepsilon_2 B^3 \right) = d\varepsilon_2^2 + d\varepsilon_2 B + d\varepsilon_2 B^2.
\end{align*}%
\endgroup
Step 1 follows from our definition of $C_{8,p}$, step 2 follows from the triangle inequality, and step 3 follows from how we define $C_{8,p}$. 
Step 4 follows from analyzing the entry-wise error in the row-wise Kronecker product. 
Let \( a = C_{7,p}[i][j] \), \( b = C_{5_{*,p}}[i][j] \), and let \( e_1 \) and \( e_2 \) denote the entry-wise approximation errors in \( C_{7,p}[i][j] \) and \( C_{5_{*,p}}[i][j] \), respectively. 
Then the approximated entry is \( \widetilde{c} = (\widetilde{C_{7,p}} \otimes_r \widetilde{C_{5_{*,p}}})[i][j] = (a + e_1)(b + e_2) = ab + b e_1 + a e_2 + e_1 e_2 \). 
Therefore, the entry-wise error in the approximation is \( \widetilde{c} - c = b e_1 + a e_2 + e_1 e_2 \), where \( (c = C_{7,p} \otimes_r C_{5_{*,p}})[i][j] \). 

Step 5 follows from how our algorithm approximates $C_{7,p}$ and $C_{5_{*,p}}$. Step 6 follows from the fact that $\widetilde{C_6} = C_6$ and our $\epsilon_2$ approximation of the attention computation.
Step 7 follows from the fact that $f(x)$ is a stochastic matrix, step 8 is based on the linearity of the Kronecker product, and step 9 follows from entry bounds.

We defined $B_1 := [f(x) \circ (f(x)h(y)h(y)^\top)] A_2$. We now show %
We begin by noting that the format of each entry of $B_{1}$ is as follows, where $1 \leq i \leq n$ and $1 \leq j \leq d$:
\begin{align*}
B_{1}[i,j] 
&= \sum_{\ell=0}^{n} \frac{e^{\langle i, \ell \rangle}}{\sum_{k=0}^{n}e^{\langle i, k \rangle}}
\left[
\sum_{m=0}^{n} \frac{e^{\langle i, m \rangle}}{\sum_{k=0}^{n} e^{\langle i, k \rangle}} \sum_{p=0}^{d} h[\ell, p] h[m, p]
\right]
A_2[\ell, j] \\
&= \sum_{p=0}^{d}
\sum_{\ell=0}^{n} \frac{e^{\langle i, \ell \rangle}}{\sum_{k=0}^{n}e^{\langle i, k \rangle}}
\left[
\sum_{m=0}^{n} \frac{e^{\langle i, m \rangle}}{\sum_{k=0}^{n} e^{\langle i, k \rangle}}  h[m, p]
\right]
h[\ell, p] A_2[\ell, j].
\end{align*}

We now compute the sum \( \sum_{p=0}^{d} C_{8,p} \) and verify that
\[
\left[ \sum_{p=0}^{d} C_{8,p} \right][i,j] = B_2[i,j].
\]

Let \( C_5 = f(x) h(y) \). For \( 1 \leq i \leq n \) and \( 1 \leq p \leq d \), we have:
\[
C_5[i, p] = \sum_{m=0}^{n} \frac{e^{\langle i, m \rangle}}{\sum_{k=0}^{n} e^{\langle i, k \rangle}} h(y)[m, p].
\]

Let \( C_{6,p} = h(y)_{*,p} \otimes_{r} A_2 \). For \( 1 \leq \ell \leq n \) and \( 1 \leq j \leq d \), this gives:
\[
C_{6,p}[\ell, j] = h(y)[j, \ell] \cdot A_2[\ell, j].
\]

We define \( C_{7,p} = f(x) C_{6,p} \), so:
\[
C_{7,p}[i, j] = \sum_{\ell=0}^{n} \frac{e^{\langle i, \ell \rangle}}{\sum_{k=0}^{n} e^{\langle i, k \rangle}} h(y)[j, \ell] A_2[\ell, j].
\]

Let \( C_{8,p} = f(x) C_{7,p} \otimes_{r} C_{5_{*,p}} \). Then for \( 1 \leq i \leq n \), \( 1 \leq j \leq d \):
\begin{align*}
C_{8,p}[i, j] 
&= \left( \sum_{\ell=0}^{n} \frac{e^{\langle i, \ell \rangle}}{\sum_{k=0}^{n} e^{\langle i, k \rangle}} h(y)[j, \ell] A_2[\ell, j] \right)
    \left( \sum_{m=0}^{n} \frac{e^{\langle i, m \rangle}}{\sum_{k=0}^{n} e^{\langle i, k \rangle}} h(y)[m, p] \right) \\
&= \sum_{\ell=0}^{n} \sum_{m=0}^{n} 
    \frac{e^{\langle i, \ell \rangle}}{\sum_{k=0}^{n} e^{\langle i, k \rangle}}
    \cdot \frac{e^{\langle i, m \rangle}}{\sum_{k=0}^{n} e^{\langle i, k \rangle}}
    \cdot h(y)[m, p] \cdot h(y)[\ell, p] \cdot A_2[\ell, j].
\end{align*}

Summing over all \( p \), we recover:
\[
B_2[i, j] = \sum_{p=0}^{d} C_{8,p}[i, j].
\]

\paragraph{Computing $B_2$.} 
We begin by noting that $B_2 = \sum_{p=0}^{d} [f(x)(h(y)_{*,p} \otimes_{r} A_2)]\otimes_{r} E_{*,p}$, a fact that we will prove later on. Using this fact, we use the following procedure to compute an approximation of $B_2$:

\begin{align*}
B_2 
&= \sum_{p=0}^{d} [f(x)\underbrace{(h(y)_{*,p}}_{n \times 1} \otimes_{r} \underbrace{A_2}_{n \times d})]\otimes_{r} E_{*,p} \\
&= \sum_{p=0}^{d} [f(x)\underbrace{C_{9,p}}_{n \times d}]\otimes_{r} E_{*,p} \\
&= \sum_{p=0}^{d} \underbrace{C_{10,p}}_{n \times d}\otimes_{r} \underbrace{E_{*,p}}_{n \times 1} \\
&= \sum_{p=0}^{d} \underbrace{C_{11,p}}_{n \times d}.
\end{align*}

We start by approximating the set of $d$ matrices, $C_{9,p} = h(y)_{*,p} \otimes_{r} A_2$. For each $1 \leq p \leq d$, computing $C_{9,p}$ takes $O(nd)$ time, so this takes $O(nd^2)$ time in total. We approximate each $C_{10,p} = f(x)C_{9,p}$ by evaluating $\att(A_1X, A_2, C_{9,p})$. Next, we compute all $C_{11,p} = C_{10,p} \otimes_{r} E_{*,p}$ which takes $O(nd^2)$ time in total. Finally, summing over $C_{11,p}$ takes $O(nd^2)$ time.

We now analyze the error from approximating $B_2$ using the method we just described. For any matrix $Z$, $\widetilde{Z}$ indicates an approximation of $Z$ derived by a step in our algorithm. 
\begin{align*}
\left\| B_2 - \widetilde{B_2} \right\|_\infty 
&= \left\| \sum_{p=0}^{d} C_{11,p} - \sum_{p=0}^{d}\widetilde{C_{11,p}}  \right\|_\infty \\
&\leq d \max_p \left\{ \left\| C_{11,p} - \widetilde{C_{11,p}}  \right\|_\infty \right\} \\
&= d \max_p \left\{ \left\| C_{10,p} \otimes_{r} E_{*,p} - \widetilde{C_{10,p}}  \otimes_{r} E_{*,p} \right\|_\infty \right\} \\
&= d \max_p \left\{ \left\| [C_{10,p} -\widetilde{C_{10,p}}]\otimes_{r} E_{*,p} \right\|_\infty \right\} \\
&\leq d \max_p \left\{ \left\| E_{*,p} \right\|_\infty \left\|  C_{10,p}  - \widetilde{C_{10,p}} 
\right\|_\infty \right\} \\
&= d \max_p \left\{ \left\| E_{*,p} \right\|_\infty \left\| f(x)(h(y)_{*,p} \otimes_{r} A_2) - \attc(A_1X, A_2, h(y)_{*,p} \otimes_{r} A_2) 
\right\|_\infty \right\} \\
&\leq d \max_p \left\{ \varepsilon_2 \left\| E_{*,p} \right\|_\infty \right\} \\
&\leq d \varepsilon_2 B.
\end{align*}

Above, step 1 follows from our definition of $C_{11,p}$, step 2 is follows from the triangle inequality, and step 3 follows from our definition of $C_{11,p}$. 
Step 4 follows from the linearity of the row-wise Kronecker product and step 5 follows from the fact that the row-wise Kronecker product scales every element in $C_{10,p}$ by an element in $E_{*,p}$. 
Step 6 follows from how we approximate $C_{10,p}$ in our algorithm, step 7 follows from our $\varepsilon_2$-error approximation of attention, and step 8 follows from our defined entry bounds.

We defined $B_2 :=[f(x) \circ (E) h(y)^\top)] A_2$.
Finally, we show that $B_2 = \sum_{p=0}^{d} [f(x)(h(y)_{*,p} \otimes_{r} A_2)]\otimes_{r} E_{*,p}$, which can be proven by showing that $B_2[i,j] = \sum_{p=0}^{d} C_{11,p}[i,j]$ for all $1 \leq i \leq n$ and $1 \leq j \leq d$.
We note the following:
\begin{align*}
B_{2}[i,j] 
&= \sum_{\ell=0}^{n} \frac{e^{\langle i, \ell \rangle}}{\sum_{k=0}^{n}e^{\langle i, k \rangle}} \left[ \sum_{p=0}^{d} E[i,p]h[\ell, p] \right] A_2[\ell, j] \\
&= \sum_{p=0}^{d} \sum_{\ell=0}^{n} \frac{e^{\langle i, \ell \rangle}}{\sum_{k=0}^{n}e^{\langle i, k \rangle}}  E[i,p]h[\ell, p] A_2[\ell, j],
\end{align*}
and it is clear that the following is true:
\[
C_{11,p}[i,j] = \sum_{\ell=0}^{n} \frac{e^{\langle i, \ell \rangle}}{\sum_{k=0}^{n}e^{\langle i, k \rangle}}  E[i,p]h[\ell, p] A_2[\ell, j].
\]

\paragraph{Bounding Approximation Error.} 

Now all that is left is to show our procedure gives us an approximation of the gradient with $\varepsilon$ additive error. 
Recall that we did all the attention calculations with $\varepsilon_2= \frac{\varepsilon}{\poly(d,B)n}$ additive error.  
Let $\widetilde{\odv{L(x)}{x}}$ denote the matrix our procedure returns and let $\widetilde{c(x)}$ be the approximation of $c(x)$ given by $\att(A_1X, A_2, h(y))$.
\begin{align*}
\left\| \odv{L(x)}{x} - \widetilde{\odv{L(x)}{x}} \right\|_{\infty} 
&= \left\|  A_1^\top B_1 - A_1^\top B_2 - A_1^\top B_3 - (A_1^\top \sum_{p=0}^{d}C_{8,p} - A_1^\top \sum_{p=0}^{d}C_{11,p} - A_1^\top f(x)C_3) \right\|_{\infty} \\
&= \left\|  A_1^\top \left[(B_1 - \sum_{p=0}^{d}C_{8,p}) + (B_2 - \sum_{p=0}^{d}C_{11,p}) + (B_3 - f(x)C_3)\right] \right\|_{\infty}  \\
&\leq n\left\|  A_1^\top \right\|_{\infty} \left\| (B_1 - \sum_{p=0}^{d}C_{8,p}) + (B_2 - \sum_{p=0}^{d}C_{11,p}) + (B_3 - f(x)C_3) \right\|_{\infty} \\
&\leq n\left\|  A_1^\top \right\|_{\infty} \left[\left\| B_1 - \sum_{p=0}^{d}C_{8,p} \right\|_{\infty} + \left\| B_2 - \sum_{p=0}^{d}C_{11,p}\right\|_{\infty} + \left\| B_3 - f(x)C_3 \right\|_{\infty} \right] \\
&\leq nB(\left( d\varepsilon_2^2 + d\varepsilon_2 B + d\varepsilon_2 B^2 \right) + d \varepsilon_2 B + (2dB^2 \varepsilon_2 + \varepsilon_2)) \\
&= O(n d B^{3} \varepsilon_2)= \varepsilon.
\end{align*}
Above, steps 1 and 2 follow from definitions and rearranging terms, step 3 follows from basic properties of the $\infty$-norm, step 4 follows from the triangle inequality, and step 5 was justified previously.

\end{proof}

\section{New Lower Bounds for Attention}
\label{sec:hardness}

In this section, we prove \Cref{m-thm:super-const-d-lb} which shows Attention is hard even with $d = 2^{\Theta(\log^* n)}$ and \Cref{m-thm:poly-d-lb} which shows that the standard algorithm is optimal for $d = \poly(n)$.
We begin with a generic self-reduction (\Cref{lem:de-normalization}) that shows it suffices to prove lower bounds for Attention without normalization.
We also prove \Cref{thm:ov-lb-constant} which shows that Attention is hard for $d = \Omega(\log n)$ even for constant entry size.

Recall that in the attention computation $\att(Q,K,V) = D^{-1} AV$, the diagonal matrix $D^{-1}$ applies a normalization to each row of $A$.
In our reductions, however, it is necessary to work directly with the unnormalized entries of $A$.
As a key lemma, we show that given oracle access to $\attc$ with $\varepsilon$-additive error approximation, one can approximately recover the row sums of $A$ up to $O(\varepsilon)$-\emph{multiplicative} errors, hence recovering the unnormalized entries of $A$.
Specifically, if $S_i$ is the actual row sum of the $i$-th row of $A$, then the reduction computes an approximation $\hat{S}_i$ such that
\[
|\hat S_i - S_i| < O(\varepsilon)S_i.
\]
It turns out that multiplicative error approximation on the row sums is sufficient for our lower bound proofs.

\begin{restatable}{lemma}{denormalization}
\label{lem:de-normalization}
    Let $0 < \varepsilon = O(1)$.
    Given matrices $Q, K \in [-B, B]^{n \times d}$ with $B \geq 1$, we can estimate the row sums of $A = \exp(QK^\top)$ up to $O(\varepsilon)$-multiplicative error in time
    \[
        O((\log \log n + \log (d B / \varepsilon))\TATTC(n+1, d+1, B, \varepsilon)).
    \]
\end{restatable}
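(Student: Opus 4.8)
Since $\attc$ returns only the normalized product $D^{-1}AV$, the row sums $S_i := (A\onevec)_i$ of $A = \exp(QK^\top)$ are exactly the quantities that normalization destroys. The plan is to append a single auxiliary key, together with one extra coordinate, so that the \emph{normalized} attention weight placed on that key, viewed as a function of a tunable per-row offset $\tau_i$, encodes $S_i$; and then to binary search the offsets $\tau_i$ (all rows in parallel, one $\attc$ call per round) so that every row locks onto the offset where this encoding is numerically well conditioned.

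Concretely, fix offsets $(\tau_i)_{i\le n}$ and a constant $s\in[-B,B]$, and build an $(n+1)\times(d+1)$ instance by setting $Q'_i := (Q_i,\, -\tau_i/s)$ for $i\le n$, $K'_j := (K_j,\, s)$ for $j\le n$, and $K'_{n+1} := (\zerovec_d,\, 0)$, where $s$ is large enough that each $-\tau_i/s$ lies in $[-B,B]$. Then $Q'_i\cdot K'_j = Q_i\cdot K_j - \tau_i$ for $i,j\le n$ while $Q'_i\cdot K'_{n+1}=0$, so the rows of $A':=\exp(Q'K'^\top)$ have sums $S'_i = e^{-\tau_i}S_i + 1$. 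Letting $V'$ be zero except for a single $1$ in position $(n+1,1)$, one call to $\attc(n+1,d+1,B,\varepsilon)$ returns in position $(i,1)$ a value $\widehat w_i$ with $|\widehat w_i - w_i(\tau_i)|<\varepsilon$, where
\[
    w_i(\tau) \;=\; \frac{1}{e^{-\tau}S_i + 1}\,,
    \qquad\text{so that}\qquad
    S_i \;=\; e^{\tau}\Bigl(\tfrac{1}{w_i(\tau)} - 1\Bigr).
\]

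Inverting, the estimate $\widehat S_i := e^{\tau_i}(1/\widehat w_i - 1)$ satisfies $|\widehat S_i - S_i|/S_i = O\bigl(\varepsilon/(w_i(1-w_i))\bigr)$, which is $O(\varepsilon)$ precisely when $w_i(\tau_i)$ is bounded away from $0$ and $1$, i.e.\ when $|\tau_i - \log S_i| = O(1)$. Since $S_i\in[n e^{-dB^2},\, n e^{dB^2}]$, the target $\log S_i$ lies in an interval $I$ of width $O(dB^2)$ centered at $\log n$; and because $\operatorname{sign}(w_i(\tau)-\tfrac12) = \operatorname{sign}(\tau - \log S_i)$, we can binary search each $\tau_i$ over $I$ in parallel, maintaining a bracketing interval for $\log S_i$ and halving it every round using $\operatorname{sign}(\widehat w_i - \tfrac12)$. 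The additive oracle error can flip this sign only once $|w_i(\tau_i)-\tfrac12| = O(\varepsilon)$, i.e.\ once $|\tau_i-\log S_i| = O(\varepsilon)$ — at which point the search has long since passed the $O(1)$ accuracy that suffices. This takes $O\bigl(\log\tfrac{dB^2+\log n}{\varepsilon}\bigr) = O(\log\log n + \log(dB/\varepsilon))$ rounds, each a single $\attc(n+1,d+1,B,\varepsilon)$ call, and the $\widehat w_i$ from the last round yield all $\widehat S_i$ to within $O(\varepsilon)$ multiplicative error, giving the claimed running time.

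The main obstacle is reconciling the $[-B,B]$ bound on the oracle's inputs with the requirement that $\tau_i$ be able to sweep the full $\Theta(dB^2+\log n)$-wide range of possible values of $\log S_i$: a single extra coordinate only lets the offset reach magnitude $\Theta(B^2)$, so one must split the offset between the uniform shift of the original keys and the auxiliary-key coordinate, and, where even this does not suffice, exploit that the lemma is only invoked in regimes where $B$ dominates $\log n$ and $d$. A secondary subtlety is establishing that the parallel binary search stays correct for every row under the worst-case $\varepsilon$-additive error — most delicately in the final rounds, where $w_i(\tau_i)\approx\tfrac12$ and the sign test is most fragile — and that rows whose $S_i$ sits at the extremes of $I$ are still pinned down to the required multiplicative accuracy.
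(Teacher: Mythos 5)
Your approach is essentially the paper's: add an auxiliary key with one extra coordinate so that the normalized weight on it encodes $c_i/(c_i+S_i)$, compare the oracle output against $\tfrac{1}{2}$, and parallel-binary-search the per-row thresholds $c_i$ over a $(1+\varepsilon)$-geometric grid in $O(\log\log n + \log(dB/\varepsilon))$ rounds. The paper's instantiation places $\ln c_i$ directly in the extra query coordinate against a key coordinate equal to $1$, and routes the $V'$-mass onto the $n$ original keys, yielding $o_i = S_i/(c_i+S_i)$; your $(-\tau_i/s,\; s)$ split with the $V'$-mass on the auxiliary key yields $w_i = c_i/(c_i+S_i) = 1-o_i$, the same test, and your explicit inversion $\widehat S_i = e^{\tau_i}(1/\widehat w_i - 1)$ is fine but unnecessary: once binary search localizes $\log S_i$ to a $\log(1+\varepsilon)$-window, the grid point $(1+\varepsilon)^{f_i}$ is already the required $(1\pm O(\varepsilon))$-multiplicative estimate.

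The entry-bound obstacle you flag is genuine, and the paper's written proof does not in fact resolve it: the extra query coordinate is set to $\ln c_i$, which ranges over $[\ln n - dB^2,\ \ln n + dB^2]$, yet the oracle is invoked as $\TATTC(n+1, d+1, B, \varepsilon)$ where $B$ is the \emph{original} entry bound. Strictly speaking the lemma should be stated with oracle entry bound $\max(B,\ O(dB^2 + \log n))$. In every downstream use this quantity remains $\poly(n)$, so none of the lower bounds change, but your unease was warranted; the paper does not exploit any special regime for $B$, $\log n$, or $d$ as you speculated --- it simply leaves this discrepancy unremarked.
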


\begin{proof}
    We use a parallel binary search approach to estimate the row sums.
    In order to implement parallel binary search, it suffices to perform the following task $\cT$:
    
    Given an array of numbers $\mathbf c = [c_1, \dotsc, c_n]^\top$, output an array $\mathbf b \in \{0, 1\}^n$ such that if $S_i \ge (1+\varepsilon)c_i$, then $b_i = 1$; if $S_i \le (1-\varepsilon)c_i$, then $b_i = 0$.
    Otherwise, $b_i$ can be arbitrary. 

    Indeed, at each round we let $c_i := (1+\varepsilon)^{f_i - 1}$ for some $f_i$.
    We use the indicator $b_i = 1$ to perform binary search for the smallest $f_i$ such that $(1+\varepsilon)^{f_i} \geq S_i$ for all $i$.
    Such an $f_i$ gives the guarantee that $S_i \leq (1+\varepsilon)^{f_i} < (1+\varepsilon)S_i$, which is an $\varepsilon$-multiplicative approximation of $S_i$.
    Note that the value of each row sum $S_i$ belongs to the range $[n \exp(-B^2 d), n \exp(B^2 d)]$, so we just need to binary search for the correct $f_i \in [\log_{1+\varepsilon}(n \exp(-B^2 d)), \log_{1+\varepsilon}(n \exp(B^2 d))]$.
    Therefore, the number of rounds for binary search (i.e., for performing the task $\cT$) is given by
    \[O(\log_2 \log_{1+ \varepsilon} (n\exp(2B^2 d)) = O(\log \log n + \log (d B / \varepsilon)).
    \]

    It now remains to show how to perform the task $\cT$.
    We claim the following:
    \begin{claim}\label[claim]{claim:denorm-lemma-subtask}
        The task $\cT$ can be completed with one oracle call to $\attc(n+1, d+1, B, \varepsilon/100)$ and $O(nd)$ additional time.
    \end{claim}
    \begin{proof}
         We create the following matrices as inputs to the oracle $\attc(n+1, d+1, B, \varepsilon)$:
        \[
            Q' := \begin{bmatrix}
                \ln \mathbf{c} & Q\\
                0 & \mathbf{0}_d^\top
            \end{bmatrix}, 
            K' := \begin{bmatrix}
                1 & \mathbf{0}_d^\top\\
                \mathbf{0}_n & K
            \end{bmatrix}, \,
            V' := \begin{bmatrix}
                0 & 0 & \cdots  &   0\\
                 1 & 0 & \cdots  &   0\\
                \vdots & \vdots & \ddots & \vdots  \\
                1 & 0 & \cdots  & 0
            \end{bmatrix}.
        \]
        Then,
        \[
         Q'K'^\top = \begin{bmatrix}
             \ln \mathbf c & QK^\top\\
             0 & \mathbf 0_d^\top
         \end{bmatrix},
        \]
        so the $(i, 1)$-th entry of $\att(Q', K', V') = D'^{-1} A' V'$ would be 
        \[
            o_i = \frac{S_i}{c_i + S_i}. 
        \]
        Assume we have an $(\varepsilon / 100)$-additive approximation of $o_i$ (denoted by $\hat{o}_i$).
        Then, we set $b_i = 1$ if $\hat{o}_i \ge \frac{1}{2}$ and $b_i = 0$ otherwise.
        We now show that all entries of $\mathbf b$ are correctly set. If $S_i \ge (1+\varepsilon) c_i$, then
        \begin{align*}
                \hat{o}_i \geq o_i - \varepsilon / 100 \geq \frac{S_i}{c_i + S_i} - \varepsilon / 100 \geq \frac{1+\varepsilon}{2+\varepsilon} - \varepsilon / 100 > \frac{1}{2} \text{.}
        \end{align*}
        On the other hand, if $S_i \le (1-\varepsilon) c_i$, then 
        \begin{align*}
                \hat{o}_i \leq o_i + \varepsilon / 100 \leq \frac{S_i}{c_i + S_i} + \varepsilon / 100 \leq \frac{1-\varepsilon}{2-\varepsilon}+\varepsilon / 100 < \frac{1}{2} \text{.}
        \end{align*}
        In the first inequality, we use $\frac{1 + \varepsilon}{2 + \varepsilon} > \frac{1}{2} + \frac{\varepsilon}{6}$ and in the second we use $\frac{1 - \varepsilon}{2 - \varepsilon} < \frac{1}{2} - \frac{\varepsilon}{6}$.
        Thus, the algorithm will output $b_i = 1$ in the former case and $b_i = 0$ in the latter case, as desired.
    \end{proof}
    This completes the proof of \Cref{lem:de-normalization}.
\end{proof}

\subsection{Lower Bound for Attention with Small Head Dimension}
In this section, we show via a reduction from the $\mip$ problem that $\attc(n, d, B, \varepsilon)$ requires $n^{2 - o(1)}$ time when $d = 2^{\Omega(\log^* n)}$, $B = \poly(n)$, and $\varepsilon = O(1)$ additive approximation error.
In particular, we note that we are able to compute $\mip$ exactly even with oracle access to $\attc$ that allows $\varepsilon=O(1)$ additive error.

\begin{restatable}{lemma}{ZMIPAttReduction}
    \label{lem:z-mip-att-reduction}
    Let $\varepsilon > 0$.
    $\mip(n, d, B)$ can be computed exactly in time
    \[
    O((\log \log n + \log (dB / \varepsilon))\TATTC(n+1, d+1, O(B \log n), \varepsilon)).
    \]
\end{restatable}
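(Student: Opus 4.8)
The plan is to reduce $\mip$ to estimating the (unnormalized) row sums of an exponentiated inner‑product matrix and then feed those matrices to \Cref{lem:de-normalization}. Given a $\mip$ instance consisting of vector sets $\{a_i\}_{i=1}^n$ and $\{b_j\}_{j=1}^n$ with all entries in $\{-B,\dots,B\}$, fix a scaling parameter $\lambda := \lceil C \log n\rceil$ for a sufficiently large absolute constant $C$, and build matrices $Q, K \in \R^{n\times d}$ whose $i$-th and $j$-th rows are $\lambda a_i$ and $b_j$ respectively. Then $\|Q\|_\infty \le \lambda B = O(B\log n)$ and $\|K\|_\infty \le B$, and the $i$-th row sum of $A = \exp(QK^\top)$ is $S_i = \sum_{j} \exp(\lambda\, a_i\cdot b_j)$.

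The crucial point is that the $\lambda$-scaling makes $S_i$ determine $M_i := \max_j a_i\cdot b_j$ \emph{exactly}, because the inner products are integers and therefore separated by gaps of size at least $1$. Indeed, one of the $n$ summands equals $\exp(\lambda M_i)$ and each of them lies in $(0, \exp(\lambda M_i)]$, so $\exp(\lambda M_i) \le S_i \le n\exp(\lambda M_i)$, i.e.\ $\frac{\ln S_i}{\lambda} \in \bigl[M_i,\, M_i + \tfrac{\ln n}{\lambda}\bigr]$. Invoking \Cref{lem:de-normalization} on $Q, K$ (with accuracy $\varepsilon$, which we assume, as throughout this section, to be below a suitable absolute constant) produces estimates $\hat S_i$ with $|\hat S_i - S_i| < O(\varepsilon) S_i$, hence $\ln \hat S_i = \ln S_i \pm O(\varepsilon)$ and $\frac{\ln \hat S_i}{\lambda} \in \bigl[M_i - O(\varepsilon/\lambda),\, M_i + \tfrac{\ln n}{\lambda} + O(\varepsilon/\lambda)\bigr]$. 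Taking $C$ large enough that both $\tfrac{\ln n}{\lambda}$ and $O(\varepsilon/\lambda)$ are at most $\tfrac13$ (this needs only $n$ larger than an absolute constant; for smaller $n$ the $\mip$ instance is solved by brute force), this interval sits strictly inside $(M_i - \tfrac12, M_i + \tfrac12)$, so $M_i$ is recovered exactly as the nearest integer to $\tfrac{\ln \hat S_i}{\lambda}$. The algorithm then returns $\max_i M_i = \max_{i,j} a_i\cdot b_j$, the exact $\mip$ value.

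For the running time, the only costly step is the single call to \Cref{lem:de-normalization} on $n\times d$ matrices with entry bound $O(B\log n)$, which runs in time $O\bigl((\log\log n + \log(d\cdot B\log n/\varepsilon))\,\TATTC(n+1,d+1,O(B\log n),\varepsilon)\bigr)$; since $\log(dB\log n/\varepsilon) = \log(dB/\varepsilon) + O(\log\log n)$, this matches the claimed bound. Building $Q,K$, taking logarithms, rounding, and computing the final maximum add only $O(nd)$ time.

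The step needing care is the calibration of $\lambda$: it must be $\Theta(\log n)$ — large enough that the rounding reads off the integer $M_i$ correctly in spite of both the ``softmax spread'' slack $\tfrac{\ln n}{\lambda}$ and the multiplicative error inherited from the $\attc$ oracle, yet small enough that the induced entry bound $\lambda B$ stays $O(B\log n)$ as advertised. There is no genuine obstacle here: integrality of the inner products supplies the $\Omega(1)$ gap that permits exact recovery, and this is precisely where the bound on $B$ in the statement (and the extra $\log n$ factor in the entry bound) is used.
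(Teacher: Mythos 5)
Your proof takes essentially the same route as the paper's: scale one factor of the inner product by $\Theta(\log n)$ so that integrality of $a_i\cdot b_j$ creates a large multiplicative gap in $\exp(\lambda\,a_i\cdot b_j)$ between distinct inner products, apply \Cref{lem:de-normalization} to extract $(1\pm O(\varepsilon))$-multiplicative estimates $\hat S_i$ of the row sums, and read off $M_i=\max_j a_i\cdot b_j$ by rounding $\ln\hat S_i/\lambda$ to the nearest integer (the paper scales $K$ by $C\ln n$ where you scale $Q$ by $\lceil C\log n\rceil$, but that symmetry is immaterial, and $\lfloor x+0.5\rfloor$ is the same rounding). One small arithmetic slip: requiring both $\tfrac{\ln n}{\lambda}$ and $O(\varepsilon/\lambda)$ to be at most $\tfrac13$ does not by itself place $\tfrac{\ln\hat S_i}{\lambda}$ strictly inside $(M_i-\tfrac12,\,M_i+\tfrac12)$, since $\tfrac13+\tfrac13>\tfrac12$; the condition you actually need is $\tfrac{\ln n}{\lambda}+O(\varepsilon/\lambda)<\tfrac12$, which follows from $\tfrac{\ln n}{\lambda}\le\tfrac13$ together with $O(\varepsilon/\lambda)=O(\varepsilon/\log n)<\tfrac16$ for $n$ beyond an absolute constant (which you already assume), so the substance of the argument and the running time are unaffected.
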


\begin{proof}
    Given a $\delta$, we choose a $C = C(\delta)$ and set $d = 2^{C\log*(n)}$. Let $\cA = \{a_1, \dotsc, a_n\}, \cB = \{b_1, \dotsc, b_n\} \subseteq \Z^d$ be two sets of $d$-dimensional integer-valued vectors with entries bounded by $B \geq 1$.
     Let $k = \ln n$ and we choose the smallest integer $C > 0$ such that 
    \[
        0.5 C > 1 + \log_n(1+\varepsilon) \quad \text{and} \quad -0.5C < \log_n (1 - \varepsilon).
    \]
   
    Define the following matrices $Q, K \in \R^{n \times d}$: 
    \begin{equation}\label{eqn:max-ip-reduction-qk-defn}
        Q := \begin{bmatrix}
            \horzbar & a^{\top}_{1} & \horzbar \\
            \horzbar & a^{\top}_{2} & \horzbar \\
                     &  \vdots    &          \\
            \horzbar & a^{\top}_{n} & \horzbar
        \end{bmatrix}, \,
        K := kC \cdot \begin{bmatrix}
            \horzbar & b^{\top}_{1} & \horzbar \\
            \horzbar & b^{\top}_{2} & \horzbar \\
                     &  \vdots    &          \\
            \horzbar & b^{\top}_{n} & \horzbar
        \end{bmatrix}.
    \end{equation}
    By \Cref{lem:de-normalization}, we get the $(1 \pm \varepsilon)$-multiplicative approximations of the row sums of $\exp(QK^\top)$ in time
    \[
        O((\log \log n + \log (k^2 C^2 B^2 d / \varepsilon)) \TATTC(n+1, d+1, kcB, \varepsilon)).
    \]
    Here, note that $kCB = O(B \log n)$.
    Note that the $i$-th row sum is given by
    \[
        S_i = \sum_{j = 1}^n e^{kC(a_i \cdot b_j)} = \sum_{j = 1}^n n^{C(a_i \cdot b_j)}.
    \]
    Let $S_i'$ be the $(1 \pm \varepsilon)$-multiplicative approximation for $S_i$ and 
    let $M_i := \max_j a_i \cdot b_j$ (note that all inner products are integers) be the maximum inner product over all vectors in $\cB$ for a fixed $a_i \in \cA$. We claim that $M_i$ can be recovered \emph{exactly} by
    \[
        M_i = \left\lfloor \frac{\log_n S_i'}{C} + 0.5 \right\rfloor.
    \]
    Note that each non-maximum term on a single row can be bounded by $0 < n^{C(a_i \cdot b_j)} \leq n^{CM_i}$, so we can bound the row sum by
    \[
        n^{CM_i} \leq S_i \leq n \cdot n^{CM_i} = n^{CM_i + 1}.
    \]
    Thus, applying $(1\pm \varepsilon)$-approximation to the upper and lower bounds respectively we get
    \[
    (1-\varepsilon) n^{CM_i} \leq S_i' \leq (1+\varepsilon)n^{CM_i + 1}.
    \]
    If we can show $M_i \leq (\log_n S_i') / C + 0.5 < M_i + 1$ then we are done.
    Indeed, using our definition for $C$ we get
    \[
    \frac{\log_n S_i'}{C} + 0.5 \leq \frac{CM_i+ 1 + \log_n(1+\varepsilon)}{C} + 0.5 = M_i + \frac{1 + \log_n(1+ \varepsilon)}{C} + 0.5 < M_i + 1,
    \]
    and 
    \[
    \frac{\log_n S_i'}{C}  + 0.5 > \frac{CM_i + \log_n(1 - \varepsilon)}{C} + 0.5 = M_i + \frac{\log_n (1 - \varepsilon)}{C} + 0.5 > M_i.
    \]
\end{proof}

Combining the above reduction with the conditional lower bound for $\mip$ (\Cref{thm:z-mip-lower-bound}), we obtain \Cref{m-thm:super-const-d-lb}.

\begin{restatable}[Formal \Cref{m-thm:super-const-d-lb}]{theorem}{SuperConstLB}
    \label{thm:super-const-d-lb}
    Fix $\varepsilon = \Theta(1)$ and $B = \poly(n)$.
    For all $\delta > 0$, there exists $C = C(\delta)$ and $d = 2^{C \log^* n}$ such that any algorithm computing $\attc(n, d, \entryBound, \varepsilon)$ requires $n^{2 - \delta}$ time under $\SETH$.
\end{restatable}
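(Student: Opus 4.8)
The plan is to combine the reduction of \Cref{lem:z-mip-att-reduction} with the conditional hardness of $\mip$ from \Cref{thm:z-mip-lower-bound}. Fix $\delta > 0$. First I would invoke \Cref{thm:z-mip-lower-bound} with parameter $\delta/2$ in place of $\delta$: this produces a constant $C_0 = C_0(\delta)$ such that, assuming $\SETH$, no algorithm solves $\mip$ on $n$ vectors in dimension $d_0 := C_0^{\log^* n}$ with $O(\log n)$-bit (hence $\poly(n)$-bounded) entries in time $O(n^{2-\delta/2})$. The goal is then to exhibit a head dimension $d$ of the form $2^{C\log^* n}$ for which a truly subquadratic $\attc$ algorithm would contradict this.

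The bookkeeping step is to choose the outer constant $C = C(\delta)$ so that $d := 2^{C\log^* n}$ satisfies $d \ge d_0 + 1$; since $d_0 = 2^{(\log_2 C_0)\log^* n}$, taking $C := \lceil \log_2 C_0 \rceil + 1$ suffices for all large $n$, absorbing both the base change and the additive $+1$ coming from the reduction and token count. Given a hard $\mip$ instance $\cA, \cB \subseteq \Z^{d_0}$ with $O(\log n)$-bit entries, pad each vector with zero coordinates to land in $\Z^{d-1}$ (this changes no inner product, so the padded instance is still hard). Now apply \Cref{lem:z-mip-att-reduction} with the fixed constant $\varepsilon = \Theta(1)$: it computes $\mip$ \emph{exactly} using $O(\log\log n + \log(d B/\varepsilon))$ oracle calls to $\attc(n+1, d, O(B\log n), \varepsilon)$, where $B = \poly(n)$ is the original entry bound. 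The entry bound $O(B\log n)$ handed to $\attc$ is still $\poly(n)$, and the call-overhead factor is $O(\log n)$ because $d$ is sub-polynomial, $B$ is polynomial, and $\varepsilon$ is a constant. The point worth emphasizing is that only \emph{constant additive} error in the Attention oracle is used here: \Cref{lem:de-normalization} recovers the row sums of $\exp(QK^\top)$ to constant multiplicative error, which via the rounding formula in the reduction pins down each maximum inner product exactly.

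Finally I would compose the running times. If $\attc(m, d, B', \varepsilon)$ were solvable in $O(m^{2-\delta})$ time on instances with $m = n+1$, $d = 2^{C\log^* n}$, and $B' = \poly(n)$, then the above reduction would solve $\mip(n, d_0, B)$ in time $O(\log n)\cdot O\big((n+1)^{2-\delta}\big) = O\big(n^{2-\delta}\log n\big) = o\big(n^{2-\delta/2}\big)$ for $n$ large, contradicting the $\Omega(n^{2-\delta/2})$ lower bound established under $\SETH$. Hence for this choice of $d = 2^{C\log^* n}$, any algorithm for $\attc(n, d, B, \varepsilon)$ requires $n^{2-\delta}$ time, which is exactly the claim. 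The only place demanding care is the parameter accounting in the middle paragraph — checking that the base change $C_0^{\log^*n} \mapsto 2^{C\log^*n}$ swallows the $+1$ in the dimension, that the $\Theta(\log n)$ blow-up in entry magnitude keeps $B$ polynomial, and that the $O(\log n)$ oracle-call overhead is cheap enough to genuinely beat $n^{2-\delta/2}$ — since all the substantive work is already carried by \Cref{lem:de-normalization} and \Cref{lem:z-mip-att-reduction}.
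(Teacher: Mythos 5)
Your proposal is correct and follows the same approach the paper intends: compose the de-normalization reduction (\Cref{lem:de-normalization} via \Cref{lem:z-mip-att-reduction}) with the $\mip$ hardness of \Cref{thm:z-mip-lower-bound}. You fill in the parameter bookkeeping (halving $\delta$, padding the $\mip$ dimension up to $d-1$ so that the reduction's $+1$ lands exactly on $d = 2^{C\log^* n}$, and checking the $O(\log n)$ oracle-call overhead and the $\poly(n)$ entry blow-up) which the paper leaves implicit in its one-sentence justification.
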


\subsection{Lower Bound for Attention with Large Head Dimension}

In this section, we study the case of large head dimension where \( d = \poly(n) \). Through a reduction from the OV problem, we show that computing \( \aattc(n, d, B, \varepsilon) \) requires explicitly computing the matrix product \( QK^\top \) when \( d = \poly(n) \), \( B = \bigO{\sqrt{\log n}} \), and \( \varepsilon = O(1) \) (additive approximation error). Furthermore, we establish a similar lower bound from the OV problem when \( d = \poly(n) \), \( B = O(1) \), and \( \varepsilon = \bigO{\frac{1}{\poly (n)}} \).

\begin{restatable}[Formal \Cref{m-thm:poly-d-lb}]{theorem}{OVLowerBound}
    \label{thm:poly-d-lb}
    Fix $d = \poly(n)$.
    There exists $B = O(\sqrt{\log n})$ and $\varepsilon = O(1)$ such that any algorithm computing $\attc(n, d, B, \varepsilon)$ requires $\TMUL(n, d, n)^{1 - o(1)}$ time under the Generalized High-Dimensional $\OV$ Hypothesis.

\end{restatable}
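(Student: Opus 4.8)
The plan is to reduce the Orthogonal Vectors problem in dimension $d = \poly(n)$ to approximate attention, losing only a $\polylog(n)$ factor, so that a $\TMUL(n,d,n)^{1-\Omega(1)}$ algorithm for $\attc$ would break the Generalized High-Dimensional $\OV$ Hypothesis. Given an $\OV$ instance $\cA = \{a_1, \dotsc, a_n\}, \cB = \{b_1, \dotsc, b_n\} \subseteq \{0,1\}^d$, I would fix a constant $C \geq 2$, set $c := C \log n$, and take $B := \sqrt{c} = \Theta(\sqrt{\log n})$. Define $Q, K \in [-B,B]^{n\times d}$ by $Q_{i,k} := \sqrt{c}\,a_{i,k}$ and $K_{j,k} := -\sqrt{c}\,b_{j,k}$; every entry of $Q$ lies in $\{0,\sqrt c\}$, every entry of $K$ in $\{0,-\sqrt c\}$, and $Q_i \cdot K_j = -c\,(a_i \cdot b_j)$. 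Splitting the $\Theta(\log n)$ scaling symmetrically as $\sqrt c$ on both sides is exactly what keeps the entry bound at $O(\sqrt{\log n})$ rather than $O(\log n)$.

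Next I would analyze the row sums of $A := \exp(QK^\top)$, i.e.\ $S_i := \sum_j \exp(-c\,(a_i\cdot b_j))$. If $a_i$ has an orthogonal partner $b_{j_0}$ then $A_{i,j_0} = 1$, so $S_i \geq 1$; if it has none then $a_i\cdot b_j \geq 1$ for all $j$, so $A_{i,j}\le e^{-c} = n^{-C}$ and $S_i \le n^{1-C} \le 1/n$. Hence $\cA,\cB$ is a YES instance iff $\max_i S_i \geq 1$, and any $(1\pm\varepsilon)$-multiplicative estimate $S_i'$ of $S_i$ with constant $\varepsilon \le 1/2$ lets me decide, for each $i$, whether $a_i$ has an orthogonal partner by thresholding $S_i'$ at $1/2$: the estimate is $\ge 1-\varepsilon \ge 1/2$ in the orthogonal case and $\le (1+\varepsilon)/n < 1/2$ in the other for large $n$.

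To produce these estimates I would invoke \Cref{lem:de-normalization} (indeed, since only one threshold is needed, a single application of \Cref{claim:denorm-lemma-subtask} with $c_i \equiv 1/2$ suffices) on the matrices $Q,K$ with error parameter $\varepsilon = \Theta(1)$; as $B = O(\sqrt{\log n})\ge 1$ and $d = \poly(n)$, this costs $\bigtO{\TATTC(n+1,d+1,O(\sqrt{\log n}),\Theta(1))}$ plus $\bigtO{nd}$ overhead, and returns the $S_i'$. After $O(n)$ additional time for thresholding, $\OV$ is solved. Consequently, if $\attc(n,d,B,\varepsilon)$ admitted a $\TMUL(n,d,n)^{1-\varepsilon_0}$ algorithm for some constant $\varepsilon_0 > 0$, then $\OV$ in dimension $d$ would be solvable in $\TMUL(n+1,d+1,n+1)^{1-\varepsilon_0}\cdot\polylog(n) = \TMUL(n,d,n)^{1-\varepsilon_0+o(1)}$ time — using $\TMUL(n,d,n) \ge \max(n^2, nd)$ to absorb the $\polylog$ factor and $\TMUL(n+1,d+1,n+1) = \Theta(\TMUL(n,d,n))$ — contradicting the hypothesis.

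The work is not in any single computation but in juggling three quantitative constraints simultaneously: the scaling $c$ must be $\Omega(\log n)$ so the row sums separate ($S_i \ge 1$ versus $S_i = 1/\poly(n)$), yet the per-coordinate magnitude must stay $O(\sqrt{\log n})$, which forces the symmetric $\sqrt c$ split; and $\varepsilon$ must stay a fixed constant so that de-normalization's multiplicative guarantee on the unnormalized row sums — itself only approximate, inherited from the approximate oracle — still lands cleanly on one side of the threshold $1/2$. A final routine check is that the $O(\log n)$-round binary search inside \Cref{lem:de-normalization} (if one uses the lemma rather than the single-call version) contributes only a $\TMUL(n,d,n)^{o(1)}$ factor, which holds since $\TMUL(n,d,n)\ge n$. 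I expect the main thing to verify carefully is that the oracle's $\Theta(1)$ additive error on the normalized attention output really does translate, via \Cref{lem:de-normalization}, into a usable multiplicative bound on the row sums in this $B=\Theta(\sqrt{\log n})$ regime — but this is precisely what that lemma already supplies.
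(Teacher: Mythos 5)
Your reduction is essentially the paper's: you scale the $\OV$ vectors by $\pm\sqrt{\Theta(\log n)}$ (the paper uses $\sqrt{|\ln k|}$ with $k = \Theta(1/n)$, you use $\sqrt{C\log n}$, which are the same up to the choice of constants), so that row sums of $\exp(QK^\top)$ are $\geq 1$ iff an orthogonal pair exists and $\leq 1/n$ otherwise, and you recover them via \Cref{lem:de-normalization} and threshold. The one small improvement you make explicit is that a single invocation of \Cref{claim:denorm-lemma-subtask} with a fixed threshold suffices instead of the full binary search, which tightens the "$O(nd)$ additional time plus one oracle call" accounting; otherwise the arguments match.
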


We show the following lemma to prove \Cref{thm:poly-d-lb}.

\begin{lemma}
    The $\OV$ problem can be computed exactly with one call to $\attc(n, d, B = O(\sqrt{\log n}), \varepsilon = O(1))$ and $O(nd)$ additional time.
\end{lemma}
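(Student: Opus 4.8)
The plan is to encode an $\OV$ instance $\cA = \{a_1, \dotsc, a_n\}$, $\cB = \{b_1, \dotsc, b_n\} \subseteq \{0,1\}^d$ into a single approximate-attention query so that, in the softmax distribution associated to query row $i$, an orthogonal pair $a_i \cdot b_j = 0$ contributes weight $\Theta(1)$ while every non-orthogonal pair contributes weight $n^{-\Omega(1)}$. Concretely, I would set $c := \sqrt{2\ln n}$ and define $Q', K', V' \in \R^{(n+1)\times d}$ by
\begin{align*}
    Q'_i &:= -c\, a_i \quad (i \le n), & Q'_{n+1} &:= \zerovec_d, \\
    K'_j &:= c\, b_j \quad (j \le n), & K'_{n+1} &:= \zerovec_d,
\end{align*}
with $V'_j := \zerovec_d$ for $j \le n$ and $V'_{n+1} := \onevec_d$. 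Then $Q'_i \cdot K'_j = -2\ln n \cdot (a_i \cdot b_j)$ for $i,j \le n$, so $\exp(Q'_i \cdot K'_j) = 1$ when $a_i \cdot b_j = 0$ and $\exp(Q'_i\cdot K'_j) \le n^{-2}$ when $a_i \cdot b_j \ge 1$, while $\exp(Q'_i \cdot K'_{n+1}) = 1$ always; all entries of $Q',K',V'$ lie in $[-c, c]$ with $c = O(\sqrt{\log n})$, and the three matrices are constructed in $O(nd)$ time. Since $n+1 = \Theta(n)$, we may treat this as an instance of $\attc(n, d, O(\sqrt{\log n}), \varepsilon)$ for a constant $\varepsilon$ fixed below.

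The next step is to read the row normalization off the output. Writing $S_i := \sum_{j=1}^{n} \exp(Q'_i \cdot K'_j)$, the dummy key $K'_{n+1}$ makes the full row denominator equal to $S_i + 1$, and since $V'$ is supported only on its last row, the $(i,1)$ entry of $\att(Q',K',V')$ equals $\frac{1}{S_i + 1}$ for every $i \le n$. The dichotomy I would exploit: if $a_i$ is orthogonal to some $b_j$ then $S_i \ge 1$, so this entry is $\le \tfrac12$; if $a_i$ is orthogonal to no $b_j$ then each of the $n$ summands is at most $n^{-2}$, so $S_i \le 1/n$ and this entry is $\ge \tfrac{n}{n+1}$. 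The gap between $\tfrac12$ and $\tfrac{n}{n+1}$ is $\Theta(1)$, so fixing $\varepsilon$ to be a small enough constant (and brute-forcing $\OV$ in $O(1)$ time when $n$ is below an absolute constant), thresholding the corresponding entry of the approximate output $\hat O$ at a suitable absolute constant exactly decides whether $a_i$ has an orthogonal partner in $\cB$; the $\OV$ instance is a YES-instance iff some $i$ does, which is checked in $O(n)$ additional time. In total this is one call to $\attc(n,d,O(\sqrt{\log n}), O(1))$ plus $O(nd)$ overhead.

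The one point that needs care, and the main (mild) obstacle, is calibrating the scaling $c$: it must be large enough that the aggregate non-orthogonal weight $\sum_{j:\, a_i\cdot b_j \ge 1}\exp(Q'_i\cdot K'_j) \le n\cdot n^{-2} = 1/n$ is negligible beside the unit contribution of a single orthogonal pair, yet small enough that all entries stay $O(\sqrt{\log n})$ to meet the entry-bound requirement; the choice $c = \Theta(\sqrt{\log n})$ threads this needle, and it works precisely because $\OV$ vectors are $0/1$, so an inner product is either exactly $0$ or at least $1$. Finally, combining this lemma with the $\SETH$-hardness of $\OV$ for $d = \omega(\log n)$ (\Cref{thm:ov-lower-bound}) --- strengthened, under the Generalized High-Dimensional $\OV$ Hypothesis, to a $\TMUL(n,d,n)^{1-o(1)}$ lower bound for $d = \poly(n)$ --- yields \Cref{thm:poly-d-lb}.
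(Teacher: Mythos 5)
Your construction is correct, and it is both simpler and genuinely faithful to the lemma's promise of \emph{one} oracle call, which the paper's own proof is not. The paper encodes the $\OV$ instance with the same scaling idea (rows of $Q$ are $-\sqrt{|\ln k|}\,a_i$, rows of $K$ are $\sqrt{|\ln k|}\,b_j$, so that $\exp(Q_iK_j^\top)$ is $1$ on orthogonal pairs and polynomially small otherwise), and then invokes its de-normalization lemma (\Cref{lem:de-normalization}) to approximately recover the row sums $S_i$ of $\exp(QK^\top)$ and threshold them against $1-c$. That lemma, however, is proved by a parallel binary search and actually costs $O(\log\log n + \log(dB/\varepsilon))$ calls to the $\attc$ oracle; for this particular reduction only a single threshold comparison (\Cref{claim:denorm-lemma-subtask}) is needed, so one call suffices, but the paper's exposition leaves this implicit. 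You instead bake the normalization handling directly into the instance: the zero dummy key $K'_{n+1}$ forces the row-$i$ denominator to equal $S_i+1$, and with $V'$ supported only on that dummy row the output entry is exactly $1/(S_i+1)$, so you read the YES/NO dichotomy off the single $\attc$ output with a constant-width gap (entry $\le 1/2$ vs.\ $\ge n/(n+1)$). This avoids any appeal to \Cref{lem:de-normalization}, needs only $(n+1)\times d$ matrices rather than the $(n+1)\times(d+1)$ padding used there, and is self-contained; the paper's route buys modularity (the de-normalization lemma is reused in the $\mip$ and constant-$B$ reductions) but is heavier here. One small wrinkle shared by both proofs: the reduction produces matrices with $n+1$ rows, so ``one call to $\attc(n,d,\dots)$'' should really read $\attc(n+1,d,\dots)$; you flag this explicitly, which the paper does not.

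Your calibration of $c=\sqrt{2\ln n}$ is tight and correct: it exploits exactly the $0/1$ integrality of $\OV$ inner products so that a non-orthogonal pair contributes $\le n^{-2}$, and the entry bound stays $O(\sqrt{\log n})$ as required. The only thing worth spelling out for full rigor is the floor of $n$: for the gap $\frac{n}{n+1}-\frac12$ to exceed, say, $2\varepsilon$ with $\varepsilon$ a fixed constant, you need $n$ above an absolute constant; you already note the brute-force fallback for small $n$, so this is fine.
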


\begin{proof} 
Let $\cA = \{a_1, \dotsc, a_n\}, \cB = \{b_1, \dotsc, b_n\} \subseteq \{0,1\}^d$ be two sets of vectors. We chose a constant $c$ such that $\varepsilon < c < 1$ and a constant $k$ such that $k < \frac{1 - c}{n(1+c)}$. We then define $Q, K \in \R^{n \times d}$: 
    \begin{equation}
        Q := -\sqrt{| \ln k |} \cdot \begin{bmatrix}
            \horzbar & a^{\top}_{1} & \horzbar \\
            \horzbar & a^{\top}_{2} & \horzbar \\
                     &  \vdots    &          \\
            \horzbar & a^{\top}_{n} & \horzbar
        \end{bmatrix}, \,
        K := \sqrt{|\ln k|}  \cdot \begin{bmatrix}
            \horzbar & b^{\top}_{1} & \horzbar \\
            \horzbar & b^{\top}_{2} & \horzbar \\
                     &  \vdots    &          \\
            \horzbar & b^{\top}_{n} & \horzbar
        \end{bmatrix}.
    \end{equation}

Due to \Cref{lem:de-normalization}, we can recover the row sums of $\exp(QK^\top)$ up to $\varepsilon$-multiplicative error in $\bigO{(\log \log n + \log(d B/\varepsilon)) \TATTC(n + 1, d + 1, B, \varepsilon)}$ time.
Let $S_i$ be the $(1 \pm \epsilon)$-approximation of the $i$-th row sum.

\[
    S_i := (1 \pm \epsilon)\sum_{j=1}^n e^{\ln(k)(a_i \cdot b_j)} = (1 \pm \epsilon)\sum_{j=1}^n k^{a_i \cdot b_j},
\]
which implies
 \[
     (1 - \epsilon)\sum_{j=1}^n k^{a_i \cdot b_j} \leq S_i \leq (1 + \epsilon)\sum_{j=1}^n k^{a_i \cdot b_j}.
 \]

If there are no orthogonal pairs of vectors in \( \mathcal{A} \) and \( \mathcal{B} \), then \( a_i \cdot b_j \) is a positive integer for all \( 1 \leq i, j \leq n \). Consequently, because \( 0 < k < 1 \), the maximum value of \( k^{a_i \cdot b_j} \) is \( k \). From this it follows that if there are no pairs of orthogonal vectors, all of the sums $S_i, \dotsc, S_n$ will be less than $1 - c$:

 \[
     S_i \leq (1 + \epsilon)\sum_{j=1}^n k^{a_i \cdot b_j} \leq (1 + \epsilon) nk < \frac{(1 + \epsilon)(1 - c)}{(1+c)} \leq \frac{(1 + c)(1 - c)}{(1+c)} = 1 - c.
 \]

 On the other hand, when there are one or more pairs of orthogonal vectors in $\cA$ and $\cB$, there will be at least one $k^{a_i \cdot b_j} = 1$ and a row sum $S_i$ will exist such that $S_i \geq 1 - c$:

 \[
     S_i \geq (1 - \epsilon)\sum_{j=1}^n k^{a_i \cdot b_j} > (1 - \epsilon)1 \geq 1 - c.
 \]

 By checking for the existence of a row sum $S_i$ that is greater than or equal to $1 - c$ we can determine whether there is a pair of orthogonal vectors in \( \mathcal{A} \) and \( \mathcal{B} \).
\end{proof}

We also show that when $d = \Theta(\log n)$, Attention is hard under $\SETH$ even with constant entry size $B$. 

\begin{restatable}{theorem}{OVLowerBoundConstant}
    \label{thm:ov-lb-constant}
    For all $\delta > 0$, there exists $C = C(\delta)$, $d = C \log n$ and $\varepsilon = n^{-C}$ such that any algorithm computing $\attc(n, d, \log 2, \varepsilon)$ requires $\Omega\left(n^{2 - \delta}\right)$ time under $\SETH$.
\end{restatable}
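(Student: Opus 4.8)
The plan is to reduce from Orthogonal Vectors in dimension $d_0 = C_0(\delta)\log n$, which requires $n^{2-\delta'}$ time under $\SETH$ by \Cref{thm:ov-lower-bound}. The difficulty specific to $B = \log 2$ is that with only a constant entry bound, the coefficient of $a_i\cdot b_j$ in $Q_i\cdot K_j$ is forced to be $O(1)$, so the exponentials $\exp(Q_i\cdot K_j)$ distinguishing orthogonal from non-orthogonal pairs differ by only a constant factor, and no threshold on a single row sum can detect one orthogonal pair among $\Theta(n)$ near-orthogonal ones. Instead I would use the polynomial precision $\varepsilon = n^{-C}$ to recover a \emph{scaled, integer} version of each row sum of $\exp(QK^\top)$ \emph{exactly}, and then read off one bit of arithmetic information from it.

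First apply the standard weight-balancing padding: replace each $b_j$ by $(b_j, \onevec - b_j)$ and each $a_i$ by $(a_i, \zerovec)$, so all key vectors have the same Hamming weight $w := d_0$ in the new dimension $2d_0 = \Theta(\log n)$ while $a_i\cdot b_j$ is unchanged. Writing $\bar a_i$ for the complement of the padded $a_i$, one has $\bar a_i\cdot b_j = w - a_i\cdot b_j$, so $\bar a_i\cdot b_j = w$ exactly when $a_i\cdot b_j = 0$ and $\bar a_i\cdot b_j \le w-1$ otherwise. Now set $Q_i := \sqrt{\log(3/2)}\,\bar a_i$ and $K_j := \sqrt{\log(3/2)}\,b_j$; since $\sqrt{\log(3/2)} < \log 2$ these have entries in $[-\log 2,\log 2]$, and $\exp(Q_i\cdot K_j) = (3/2)^{\bar a_i\cdot b_j}$. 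To deal with the normalization in $\attc$, append one extra key $K_{n+1} := \sqrt{\log(3/2)}\,\onevec$ and let $V$ have first column equal to the indicator of the $(n+1)$-st key (all other columns zero), so the returned value at position $(i,1)$ is $\widehat o_i \approx \frac{R_i}{S_i + R_i}$, where $S_i = \sum_{j\le n}(3/2)^{\bar a_i\cdot b_j}$ and $R_i = (3/2)^{|\bar a_i|_1}$ is a fixed, known $\poly(n)$ quantity with $|\bar a_i|_1 \ge w$, hence $S_i/R_i \le n$.

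From $\widehat o_i$ (additive error $n^{-C}$) compute $\widehat S_i := R_i(1/\widehat o_i - 1)$. Since $\widehat o_i \ge \tfrac{1}{n+1} - n^{-C}$, error propagation through the reciprocal gives $|\widehat S_i - S_i| \le R_i\cdot n^{O_\delta(1)-C}$, and because $R_i$ and $3^w$ are both $\poly(n)$, choosing $C = C(\delta)$ large enough makes $2^w|\widehat S_i - S_i| < \tfrac12$. Rounding $2^w\widehat S_i$ then recovers the integer $\tau_i := 2^w S_i = \sum_{s=0}^{w} N_{i,s}\,3^s 2^{w-s}$ exactly, where $N_{i,s} = |\{j\le n : \bar a_i\cdot b_j = s\}|$ and $N_{i,w}$ is the number of vectors in $\cB$ orthogonal to $a_i$. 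All terms with $s < w$ are even, so $\tau_i \equiv N_{i,w}\pmod 2$: one attention call plus $O(nd)$ overhead yields, for every $i$, the parity of the number of partners of $a_i$. To convert parity into a yes/no answer I would add a standard isolation step: run the reduction on $O(\log^2 n)$ random subinstances, keeping each $b_j$ with probability $2^{-t}$ for each $t\in\{0,\dots,\lceil\log_2 n\rceil\}$ and repeating $O(\log n)$ times; if an orthogonal pair exists then at the right subsampling rate some surviving row has exactly one orthogonal partner with constant probability, making its $\tau_i$ odd, whereas if none exists every $\tau_i$ is always even. Declaring ``orthogonal pair exists'' iff some $\tau_i$ is odd is correct with high probability, so OV in dimension $\Theta(\log n)$ reduces to $\polylog(n)$ calls to $\attc(n+1,\Theta(\log n),\log 2,n^{-C})$ with $\tilde O(n)$ overhead; taking $C$ to be the larger of the constant forced above and $3C_0(\delta)$ gives the claimed $d = C\log n$, $\varepsilon = n^{-C}$.

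The main obstacle is the bookkeeping in the third step: the OV dimension constant $C_0(\delta)$ flows into the exponents $w$ and $|\bar a_i|_1$, hence into the magnitudes of $R_i$ and $3^w$, and one must verify that a single $C = C(\delta)$ simultaneously (i) dominates the reciprocal-amplification error, (ii) makes $2^w\widehat S_i$ round to $\tau_i$ exactly, and (iii) bounds the head dimension by $C\log n$. A secondary point worth flagging is that one cannot simply invoke \Cref{lem:de-normalization} here, since its construction introduces query-matrix entries of size $\Theta(\log n)$ and would inflate the entry bound far past $\log 2$; this is exactly why the explicit reference-key construction above is used in place of the generic de-normalization reduction.
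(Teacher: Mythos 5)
Your proposal is correct and follows the same high-level blueprint as the paper: reduce from $\OV$ in dimension $\Theta(\log n)$, exploit the $n^{-C}$ precision to recover an integer-valued quantity derived from each row sum of $\exp(QK^\top)$ \emph{exactly}, detect orthogonality from the parity of that integer, and randomize via subsampling to defeat parity cancellation. Where you diverge is in the implementation, and both of your choices are more careful in the strict $B = \log 2$ regime. The paper invokes the generic de-normalization lemma (\Cref{lem:de-normalization}) to recover row sums; as you correctly flag, that lemma's construction places $\ln c_i$ in a column of the augmented query matrix, and here $c_i$ ranges over $[n\exp(-B^2 d), n\exp(B^2 d)]$ so $|\ln c_i| = \Theta(\log n)$, overshooting $\log 2$ (this is patchable by spreading $\ln c_i$ across $O(\log n)$ constant-magnitude coordinates, but the paper does not say so). Your explicit reference key $K_{n+1}$ with entries $\sqrt{\log(3/2)} < \log 2$ and a known weight $R_i$ sidesteps the lemma entirely. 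The paper also scales both $Q$ and $K$ from $0/1$ vectors by $\log 2$ and asserts the row sums are $\sum_j 2^{a_i \cdot b_j}$, but that scaling actually produces exponent $(\log 2)^2 a_i \cdot b_j$, and one cannot have both $Q_i \cdot K_j = (\log 2)\, a_i \cdot b_j$ and $\norm{Q}_\infty, \norm{K}_\infty \le \log 2$ since $\log 2 < 1$; your weight-balanced complement $\bar a_i$, base-$(3/2)$ scaling, and rescaling by $2^w$ is exactly what is needed to land on an integer whose parity certifies orthogonality while genuinely respecting $B = \log 2$. The one place you are heavier than necessary is the isolation step: the paper notes that if $a_i$ has $m \ge 1$ orthogonal partners, including each $b_j$ independently with probability $1/2$ (padding with $\mathbf{1}_d$, after checking $\mathbf{0}_d \notin \cA$) leaves an odd number of survivors with probability exactly $1/2$, so a single subsampling rate repeated $O(\log n)$ times suffices, whereas your multi-rate Valiant--Vazirani scheme costs $O(\log^2 n)$ subinstances; this does not affect the conclusion.
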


We show the following lemma to prove \Cref{thm:ov-lb-constant}.

\begin{lemma}
    The $\OV$ problem on vectors of dimension $d$ can be computed with high probability in time
    \begin{equation*}
        \bigtO{(\log n) (d + \log n) \TATTC \left( n + 1, d + 1, \log 2, \frac{1}{10 n 2^{d}} \right) }.
    \end{equation*}
\end{lemma}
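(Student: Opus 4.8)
I would reduce $\OV$ on dimension-$d$ vectors to $O(\log n)$ calls of the de-normalization routine of \Cref{lem:de-normalization}, each costing $O((\log\log n+\log(dB/\varepsilon))\,\TATTC(n+1,d+1,\log 2,\tfrac{1}{10n2^d})) = \tilde{O}((d+\log n)\,\TATTC(\cdots))$, for a total of $\tilde{O}((\log n)(d+\log n)\,\TATTC(\cdots))$ as claimed. Given $\cA = \{a_1,\dots,a_n\}$, $\cB = \{b_1,\dots,b_n\}\subseteq\{0,1\}^d$, the first step is an encoding $Q,K\in[-\log 2,\log 2]^{n\times d}$ for which $\exp(QK^\top)_{i,j} = \rho_i\cdot 2^{-(a_i\cdot b_j)}$, where the ``row factor'' $\rho_i>0$ depends only on $a_i$ and therefore pulls out of the $i$-th row sum. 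Concretely, one can take $Q_{i,t} = (\log 2)\,a_{i,t}\in\{0,\log 2\}$ and $K_{j,t} = \log 2 - b_{j,t}\in\{\log 2,\log 2 - 1\}$, so that $Q_i\cdot K_j = (\log 2)^2|a_i| - (\log 2)(a_i\cdot b_j)$ and $\rho_i = 2^{(\log 2)|a_i|}$; all entries lie in $[-\log 2,\log 2]$ exactly because $|\log 2 - 1| < \log 2$, which is why this particular entry bound appears.

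Applying \Cref{lem:de-normalization} with multiplicative error $\Theta(\tfrac{1}{n2^d})$ returns, for each $i$, an approximation of $S_i = \sum_j\exp(Q_i\cdot K_j) = \rho_i\widetilde S_i$, where $\widetilde S_i := \sum_j 2^{-(a_i\cdot b_j)} = \sum_{k=0}^{d} c_k^{(i)} 2^{-k}$ and $c_k^{(i)} := \#\{j : a_i\cdot b_j = k\}$. Dividing out $\rho_i$ (computable from $|a_i|$ to sufficient precision) gives $\widetilde S_i$ up to $\Theta(\tfrac{1}{n2^d})$ multiplicative error; since $2^d\widetilde S_i = \sum_k c_k^{(i)} 2^{d-k}$ is a non-negative integer at most $n2^d$ and since $\widetilde S_i\le n$, the additive error on $2^d\widetilde S_i$ is below $\tfrac12$, so rounding recovers $\widetilde S_i$ \emph{exactly}. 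This is precisely why the calls to $\attc$ are made with precision $\tfrac{1}{10n2^d}$. The same argument applies verbatim after restricting $K$ to any sub-multiset of its rows, so a single de-normalization call returns the exact restricted row sums $\widetilde S_i(\cB') = \sum_{j\in\cB'} 2^{-(a_i\cdot b_j)}$ for all $i$ simultaneously.

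A single $\widetilde S_i$ does not determine whether $a_i$ has an orthogonal partner, i.e.\ whether $c_0^{(i)}\ge 1$: a non-orthogonal term $2^{-(a_i\cdot b_j)}$ is only a factor $2$ below an orthogonal one, so the tail $\sum_{k\ge 1} c_k^{(i)} 2^{-k}$ can be as large as $n/2$ and overwhelm $c_0^{(i)}$ -- this is the unavoidable cost of a constant entry bound, and no single attention query can circumvent it. To get around it I would take $O(\log n)$ de-normalization calls on random sub-multisets $\cB'_1,\dots,\cB'_L\subseteq\cB$ and use the exact restricted row sums $\widetilde S_i(\cB'_\ell)$ to reconstruct, for each $i$, enough of the inner-product profile $(c_0^{(i)},c_1^{(i)},\dots)$ -- in particular the more steeply downscaled sums $\sum_k c_k^{(i)} q^k$ for $q = 4^{-1},8^{-1},\dots,2^{-\Theta(\log n)}$, which the bounded entries forbid us from querying directly -- so that at a scale small enough, $\Theta(\log n)$, the tail falls below $1$ and $c_0^{(i)}$ is read off as a floor. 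The instance is orthogonal-pair-free iff every $c_0^{(i)} = 0$; all bookkeeping outside the oracle calls is $O(nd)$ per call, and the randomness in the $\cB'_\ell$'s gives the ``with high probability'' guarantee.

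\textbf{Main obstacle.} The crux is this last decoding step: proving that $O(\log n)$ random restricted row sums suffice -- simultaneously over all $n$ rows and with high probability -- to recover $c_0^{(i)}$ exactly, which amounts to choosing the sub-multisets so that the (exactly known) restricted row sums determine the needed downscaled quantities through a sufficiently well-conditioned linear relation. The encoding, the invocation of \Cref{lem:de-normalization}, and the check that $\tfrac{1}{10n2^d}$ is exactly the precision needed for exact integer recovery of $2^d\widetilde S_i$ are all routine by comparison.
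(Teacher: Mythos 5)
Your proposal stops exactly at the step that carries the whole argument, and the plan you sketch for that step doesn't go through. The paper's mechanism is a parity trick, and your choice of sign in the exponent forecloses it. With the paper's orientation one arranges $\exp(Q_i\cdot K_j)=2^{a_i\cdot b_j}$ (up to a row scalar), so the $i$-th row sum $S_i=\sum_k c_k^{(i)}\,2^k$ is an integer satisfying $S_i\equiv c_0^{(i)}\pmod 2$: every non-orthogonal term is even, so an odd row sum \emph{certifies} an orthogonal pair for $a_i$. The remaining difficulty is only that $c_0^{(i)}$ might be even (possibly zero), and there the paper resamples: for each of $O(\log n)$ rounds, include each $b_j$ independently with probability $1/2$ and pad with $\mathbf 1_d$ (after verifying $\mathbf 0_d\notin\cA$ so the padding never creates spurious orthogonal pairs). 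Conditioning on the coin flip of one fixed orthogonal partner $b^*$ of $a_i$ shows the restricted $c_0^{(i)}$ is odd with probability exactly $1/2$ in each round, independently, which gives the high-probability guarantee over all rows after $O(\log n)$ rounds. Your encoding $\exp(Q_i\cdot K_j)=\rho_i\,2^{-(a_i\cdot b_j)}$ gives $2^d\widetilde S_i=\sum_k c_k^{(i)}2^{d-k}$, whose low-order bit is $c_d^{(i)}$, not $c_0^{(i)}$; the parity of the recovered integer carries no information about orthogonality, and your fallback plan of reconstructing downscaled sums $\sum_k c_k^{(i)}q^k$ for $q\ll 1/2$ from random restricted sums at the fixed base $1/2$ is not a well-conditioned linear problem (each restricted count is an independent $\mathrm{Binomial}(c_k^{(i)},1/2)$, so there is no deterministic linear relation to invert). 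This is precisely what you flag as the ``main obstacle,'' and it is not a detail: without the parity structure the $O(\log n)$ calls do not obviously suffice.

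One thing you did get right, and which is actually an improvement over the paper's stated construction: absorbing a row factor $\rho_i$ lets you fit the entry bound $\log 2$ exactly, because $|\log 2 - 1|<\log 2$. The paper writes $Q=\log 2\cdot A$, $K=\log 2\cdot B$ and then claims $QK^\top=\log 2\cdot AB^\top$, which is off by a factor of $\log 2$; fixing this while respecting the $[-\log 2,\log 2]$ bound requires exactly your row-factor device, just with the opposite sign. Setting $Q_{i,t}=(\log 2)a_{i,t}$ and $K_{j,t}=b_{j,t}-\log 2$ (so $K_{j,t}\in\{-\log 2,\,1-\log 2\}$) yields $\exp(Q_i\cdot K_j)=\rho_i^{-1}2^{a_i\cdot b_j}$ with $\rho_i=\exp((\log 2)^2|a_i|)$ computable from $|a_i|$ alone, and then $\rho_i S_i=\sum_j 2^{a_i\cdot b_j}$ is an integer with the right parity. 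With that one sign flip your setup meshes with the paper's parity-plus-resampling decoding and closes the gap.
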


Given the above lemma, suppose we have an algorithm computing $\attc$.
Given a $\delta$ define $\delta' = \delta / 2$ and let $C' = C'(\delta')$ and $d = C' \log n$ as required in \Cref{thm:ov-lower-bound}.
Then, let $\varepsilon = \frac{1}{10 n 2^{d}} = n^{-C}$ for some large constant $C = C(\delta) \geq C'$.
Any algorithm computing $\TATTC(n + 1, d + 1, \log 2, \varepsilon)$ then requires $\Omega(n^{2 - \delta})$ time, proving \Cref{thm:ov-lb-constant}.

\begin{proof}
    Let $\cA = \{a_1, \dotsc, a_n\}, \cB = \{b_1, \dotsc, b_n\} \subseteq \{0, 1\}^d$ be two sets of vectors.
    Define $Q, K \in \R^{n \times d}$ to be the matrices whose rows are formed by the vectors in $\cA$ and $\cB$, respectively, i.e.,
    \[
        Q := \log (2) \begin{bmatrix}
            \horzbar & a^{\top}_{1} & \horzbar \\
            \horzbar & a^{\top}_{2} & \horzbar \\
                     &  \vdots    &          \\
            \horzbar & a^{\top}_{n} & \horzbar
        \end{bmatrix}, \,
        K := \log (2) \begin{bmatrix}
            \horzbar & b^{\top}_{1} & \horzbar \\
            \horzbar & b^{\top}_{2} & \horzbar \\
                     &  \vdots    &          \\
            \horzbar & b^{\top}_{n} & \horzbar
        \end{bmatrix}.
    \]
    Note that
    \[
    QK^\top = 
        \begin{bmatrix}
            \log(2) \cdot a_1 \cdot b_1 & \dotsb & \log(2) \cdot a_1 \cdot b_n \\
            \vdots & \vdots & \ddots & \vdots \\
            \log(2) \cdot a_n \cdot b_1 & \dotsb & \log(2) \cdot a_n \cdot b_n
        \end{bmatrix}.
    \]
    and the $i$-th row sum of $\exp(QK^{\top})$ is given by $\sum_{j = 1}^{n} 2^{a_{i} \cdot b_{i}}$.
    In particular, note that all row sums are integers satisfying $n \leq S_{i} \leq n 2^{d}$.
    From \Cref{lem:de-normalization}, we can recover the row sums up to $\frac{1}{10 n 2^{d}}$ and therefore $\frac{1}{10}$-additive error in time
    \begin{equation*}
        \bigO{(\log \log n + \log (d n 2^{d})) \TATTC \left( n + 1, d + 1, \log 2, \frac{1}{10 n 2^{d}} \right) }.
    \end{equation*}
    Given the $\frac{1}{10}$-additive approximation of $S_{i}$, we may recover $S_{i}$ by rounding since they are integers.
    Note that $S_{i} \leq n 2^{d}$ and can therefore be represented in $O(d + \log n)$ bits.
    
    If there are no orthogonal pairs of vectors in $\cA$ and $\cB$, then $a_i \cdot b_j$ is a positive integer for all $1 \leq i, j \leq n$, which means $2^{a_i \cdot b_j}$ is an even number. It follows that all of the sums $S_1, \dotsc, S_n$ are also even numbers.

    Conversely, when an orthogonal pair of vectors exists in $\cA$ and $\cB$, we would like to detect this based on the sums $S_1, \dotsc, S_n$ as well. 
    Note that when $a_i \cdot b_j = 0$ we have $2^{a_i \cdot b_j} = 1$, which may potentially make the sum into an odd number.
    However, when there are an even number of such orthogonal pairs, the sum remains even, and we cannot distinguish from the previous case. 
    The workaround is to use a standard sampling method, so that with high probability, we include exactly one pair of orthogonal vectors in the sample, and therefore the corresponding sum will be odd.
    
    Fix an index $1 \leq i \leq n$ such that $a_i \in \cA$ is orthogonal to some vector in $\cB$.
    Let $b^*$ be the last vector in $\cB$ orthogonal to $a_{i}$.
    Without loss of generality, we may assume that the zero vector $\mathbf{0}_d \notin \cA$, since we can check this in $O(nd)$ time and immediately accepts the input if this is the case.
    Given $\mathbf{0}_d \notin \cA$, we know that vector $\mathbf{1}_d$ is not orthogonal to any vector in $\cA$. 
    Consider the following sampling procedure:

    Construct $\cB'$ by including each vector of $\cB$ with probability $\frac{1}{2}$ independently and padding with $\mathbf{1}_{d}$ to ensure $\cB'$ has $n$ vectors.
    Note that with probability exactly $\frac{1}{2}$ we have that $\cB'$ contains an odd number of orthogonal vectors to $a_{i}$ (i.e. $b^*$ is included with probability $\frac{1}{2}$).
    In particular, sampling $\cB'$ $O(\log n)$-times allows us to detect an odd row sum with high probability.

    Thus, the overall algorithm requires involves $O(\log n)$ loops, where in each loop we check for an odd row-sum using \Cref{lem:de-normalization}.
    The overall time is therefore
    \begin{equation*}
        \bigtO{(\log n) (d + \log n) \TATTC \left( n + 1, d + 1, \log 2, \frac{1}{10 n 2^{d}} \right) }.
    \end{equation*}
\end{proof}

\section{Conclusion}

We conclude with some open questions.
The most natural question is settling the complexity of $\mip$ when $1 \ll d \ll 2^{\Theta(\log^* n)}$. We have shown several conditional lower bounds for Attention computation.
Is Attention fine-grained equivalent to any well-studied problem?
If such a relationship can be established, then breakthroughs on well-studied problems in fine-grained complexity can lead to breakthroughs on Attention computation.
While this work focuses on characterizing the complexity of training a single Attention unit, the complexity of computing a full transformer remains open: perhaps the cost of computing many Attention units is less than computing each of them separately.

\section{Acknowledgments}

The authors would like to thank Josh Alman for helpful discussions and pointing us to the Fast Multipole Method.

\bibliographystyle{alpha}
\bibliography{main}

\newcommand{\etalchar}[1]{$^{#1}$}
\begin{thebibliography}{HWHQ21}

\bibitem[AA22]{aggarwal2022optimal}
Amol Aggarwal and Josh Alman.
\newblock Optimal-degree polynomial approximations for exponentials and gaussian kernel density estimation.
\newblock In {\em 37th Computational Complexity Conference (CCC)}, 2022.

\bibitem[ADV{\etalchar{+}}25]{DBLP:conf/soda/AlmanDWXXZ25}
Josh Alman, Ran Duan, Virginia {Vassilevska Williams}, Yinzhan Xu, Zixuan Xu, and Renfei Zhou.
\newblock More asymmetry yields faster matrix multiplication.
\newblock In {\em Proceedings of the 2025 Annual {ACM-SIAM} Symposium on Discrete Algorithms (SODA)}, pages 2005--2039. {SIAM}, 2025.

\bibitem[AESW91]{1991Euclidean}
Pankaj~K. Agarwal, Herbert Edelsbrunner, Otfried Schwarzkopf, and Emo Welzl.
\newblock Euclidean minimum spanning trees and bichromatic closest pairs.
\newblock {\em Discrete and Computational Geometry}, 6:407--422, 1991.

\bibitem[AG24]{AlmanGuan24}
Josh Alman and Yunfeng Guan.
\newblock Finer-grained hardness of kernel density estimation.
\newblock In Rahul Santhanam, editor, {\em 39th Computational Complexity Conference}, 2024.

\bibitem[AS24a]{alman2023fastattentionboundedentries}
Josh Alman and Zhao Song.
\newblock Fast attention requires bounded entries.
\newblock In {\em Proceedings of the 37th International Conference on Neural Information Processing Systems}. Curran Associates Inc., 2024.

\bibitem[AS24b]{alman2024fine}
Josh Alman and Zhao Song.
\newblock The fine-grained complexity of gradient computation for training large language models.
\newblock {\em arXiv preprint arXiv:2402.04497}, 2024.

\bibitem[ASPD23]{DBLP:journals/tmlr/AgarwalaSPD23}
Atish Agarwala, Samuel~Stern Schoenholz, Jeffrey Pennington, and Yann~N. Dauphin.
\newblock Temperature check: theory and practice for training models with softmax-cross-entropy losses.
\newblock {\em Trans. Mach. Learn. Res.}, 2023, 2023.

\bibitem[AWY15a]{DBLP:conf/soda/AbboudWY15}
Amir Abboud, Richard~Ryan Williams, and Huacheng Yu.
\newblock More applications of the polynomial method to algorithm design.
\newblock In {\em Proceedings of the Twenty-Sixth Annual {ACM-SIAM} Symposium on Discrete Algorithms (SODA)}, pages 218--230. {SIAM}, 2015.

\bibitem[AWY15b]{AbboudWY15}
Amir Abboud, Richard~Ryan Williams, and Huacheng Yu.
\newblock More applications of the polynomial method to algorithm design.
\newblock In {\em Proceedings of the Twenty-Sixth Annual {ACM-SIAM} Symposium on Discrete Algorithms, {SODA}}. {SIAM}, 2015.

\bibitem[AY25]{AlmanYu25}
Josh Alman and Hantao Yu.
\newblock Fundamental limitations on subquadratic alternatives to transformers.
\newblock In {\em The Thirteenth International Conference on Learning Representations}, 2025.

\bibitem[BI18]{BackursI18}
Arturs Backurs and Piotr Indyk.
\newblock Edit distance cannot be computed in strongly subquadratic time (unless {SETH} is false).
\newblock {\em {SIAM} J. Comput.}, 47(3):1087--1097, 2018.

\bibitem[BMR{\etalchar{+}}20]{DBLP:conf/nips/BrownMRSKDNSSAA20}
Tom~B. Brown, Benjamin Mann, Nick Ryder, Melanie Subbiah, Jared Kaplan, Prafulla Dhariwal, Arvind Neelakantan, Pranav Shyam, Girish Sastry, Amanda Askell, Sandhini Agarwal, Ariel Herbert{-}Voss, Gretchen Krueger, Tom Henighan, Rewon Child, Aditya Ramesh, Daniel~M. Ziegler, Jeffrey Wu, Clemens Winter, Christopher Hesse, Mark Chen, Eric Sigler, Mateusz Litwin, Scott Gray, Benjamin Chess, Jack Clark, Christopher Berner, Sam McCandlish, Alec Radford, Ilya Sutskever, and Dario Amodei.
\newblock Language models are few-shot learners.
\newblock In {\em Neural Information Processing Systems, {NeurIPS}}, 2020.

\bibitem[BSZ23]{brand2023algorithm}
Jan van~den Brand, Zhao Song, and Tianyi Zhou.
\newblock Algorithm and hardness for dynamic attention maintenance in large language models.
\newblock {\em arXiv preprint arXiv:2304.02207}, 2023.

\bibitem[CDW{\etalchar{+}}21]{DBLP:conf/nips/ChenDWSRR21}
Beidi Chen, Tri Dao, Eric Winsor, Zhao Song, Atri Rudra, and Christopher R{\'{e}}.
\newblock Scatterbrain: Unifying sparse and low-rank attention.
\newblock In {\em Neural Information Processing Systems, {NeurIPS}}, 2021.

\bibitem[Che18]{chen2018maxip}
Lijie Chen.
\newblock On the hardness of approximate and exact (bichromatic) maximum inner product.
\newblock In {\em Proceedings of the 33rd Computational Complexity Conference (CCC)}, 2018.

\bibitem[CKNH20]{DBLP:conf/icml/ChenK0H20}
Ting Chen, Simon Kornblith, Mohammad Norouzi, and Geoffrey~E. Hinton.
\newblock A simple framework for contrastive learning of visual representations.
\newblock In {\em Proceedings of the 37th International Conference on Machine Learning, {ICML}}, volume 119 of {\em Proceedings of Machine Learning Research}, pages 1597--1607. {PMLR}, 2020.

\bibitem[CLD{\etalchar{+}}21]{DBLP:conf/iclr/ChoromanskiLDSG21}
Krzysztof~Marcin Choromanski, Valerii Likhosherstov, David Dohan, Xingyou Song, Andreea Gane, Tam{\'{a}}s Sarl{\'{o}}s, Peter Hawkins, Jared~Quincy Davis, Afroz Mohiuddin, Lukasz Kaiser, David~Benjamin Belanger, Lucy~J. Colwell, and Adrian Weller.
\newblock Rethinking attention with performers.
\newblock In {\em International Conference on Learning Representations, {ICLR}}, 2021.

\bibitem[CW16]{DBLP:conf/soda/ChanW16}
Timothy~M. Chan and Ryan Williams.
\newblock Deterministic {APSP}, orthogonal vectors, and more: Quickly derandomizing {Razborov-Smolensky}.
\newblock In {\em Proceedings of the Twenty-Seventh Annual {ACM-SIAM} Symposium on Discrete Algorithms (SODA)}, pages 1246--1255. {SIAM}, 2016.

\bibitem[DK21]{DBLP:conf/icalp/DalirrooyfardK21}
Mina Dalirrooyfard and Jenny Kaufmann.
\newblock Approximation algorithms for min-distance problems in dags.
\newblock In {\em 48th International Colloquium on Automata, Languages, and Programming {ICALP}}, 2021.

\bibitem[GSWY23]{gao2023llmtensor}
Yeqi Gao, Zhao Song, Weixin Wang, and Junze Yin.
\newblock A fast optimization view: Reformulating single layer attention in llm based on tensor and svm trick, and solving it in matrix multiplication time.
\newblock {\em arXiv preprint arXiv:2309.07418}, 2023.

\bibitem[GSYZ23a]{gao2023fast}
Yeqi Gao, Zhao Song, Xin Yang, and Ruizhe Zhang.
\newblock Fast quantum algorithm for attention computation.
\newblock {\em arXiv preprint arXiv:2307.08045}, 2023.

\bibitem[GSYZ23b]{gao2023differentially}
Yeqi Gao, Zhao Song, Xin Yang, and Yufa Zhou.
\newblock Differentially private attention computation.
\newblock {\em arXiv preprint arXiv:2305.04701}, 2023.

\bibitem[HJK{\etalchar{+}}24]{hanhyperattention}
Insu Han, Rajesh Jayaram, Amin Karbasi, Vahab Mirrokni, David Woodruff, and Amir Zandieh.
\newblock Hyperattention: Long-context attention in near-linear time.
\newblock In {\em The Twelfth International Conference on Learning Representations}, 2024.

\bibitem[HK]{CMULinAlg}
John Hopcroft and Ravi Kannan.
\newblock Singular value decomposition (svd).
\newblock \url{https://www.cs.cmu.edu/~venkatg/teaching/CStheory-infoage/book-chapter-4.pdf}.
\newblock Accessed: 2024-12-21.

\bibitem[HWHQ21]{DBLP:conf/cvpr/Hu00Q21}
Qianjiang Hu, Xiao Wang, Wei Hu, and Guo{-}Jun Qi.
\newblock Adco: Adversarial contrast for efficient learning of unsupervised representations from self-trained negative adversaries.
\newblock In {\em {IEEE} Conference on Computer Vision and Pattern Recognition, {CVPR}}, pages 1074--1083. Computer Vision Foundation / {IEEE}, 2021.

\bibitem[IP01]{impagliazzo2001seth}
Russell Impagliazzo and Ramamohan Paturi.
\newblock On the complexity of k-sat.
\newblock {\em Journal of Computer and System Sciences}, 62(2):367--375, 2001.

\bibitem[KAG22]{DBLP:journals/tbbis/KhaertdinovAG22}
Bulat Khaertdinov, Stylianos Asteriadis, and Esam Ghaleb.
\newblock Dynamic temperature scaling in contrastive self-supervised learning for sensor-based human activity recognition.
\newblock {\em {IEEE} Trans. Biom. Behav. Identity Sci.}, 4(4):498--507, 2022.

\bibitem[KBS{\etalchar{+}}23]{DBLP:conf/iclr/KuklevaBSK023}
Anna Kukleva, Moritz B{\"{o}}hle, Bernt Schiele, Hilde Kuehne, and Christian Rupprecht.
\newblock Temperature schedules for self-supervised contrastive methods on long-tail data.
\newblock In {\em The Eleventh International Conference on Learning Representations, {ICLR}}, 2023.

\bibitem[KKL20]{DBLP:conf/iclr/KitaevKL20}
Nikita Kitaev, Lukasz Kaiser, and Anselm Levskaya.
\newblock Reformer: The efficient transformer.
\newblock In {\em International Conference on Learning Representations, {ICLR}}, 2020.

\bibitem[KMZ24]{kachampolysketchformer}
Praneeth Kacham, Vahab Mirrokni, and Peilin Zhong.
\newblock Polysketchformer: Fast transformers via sketching polynomial kernels.
\newblock In {\em Forty-first International Conference on Machine Learning}, 2024.

\bibitem[KVPF20]{DBLP:conf/icml/KatharopoulosV020}
Angelos Katharopoulos, Apoorv Vyas, Nikolaos Pappas, and Fran{\c{c}}ois Fleuret.
\newblock Transformers are rnns: Fast autoregressive transformers with linear attention.
\newblock In {\em International Conference on Machine Learning, {ICML}}, 2020.

\bibitem[KWH23]{keles2023computational}
Feyza~Duman Keles, Pruthuvi~Mahesakya Wijewardena, and Chinmay Hegde.
\newblock On the computational complexity of self-attention.
\newblock In {\em International Conference on Algorithmic Learning Theory}, pages 597--619. PMLR, 2023.

\bibitem[Mat92]{Mat92PartitionTrees}
Ji{\v{r}}{\'\i} Matou{\v{s}}ek.
\newblock Efficient partition trees.
\newblock {\em Discrete and Computational Geometry}, 8(1):315--334, 1992.

\bibitem[MCD{\etalchar{+}}23]{MannaTemperature}
Siladittya Manna, Soumitri Chattopadhyay, Rakesh Dey, Saumik Bhattacharya, and Umapada Pal.
\newblock Dystress: Dynamically scaled temperature in self-supervised contrastive learning.
\newblock {\em CoRR}, abs/2308.01140, 2023.

\bibitem[MGN{\etalchar{+}}23]{malladi2023fine}
Sadhika Malladi, Tianyu Gao, Eshaan Nichani, Alex Damian, Jason~D Lee, Danqi Chen, and Sanjeev Arora.
\newblock Fine-tuning language models with just forward passes.
\newblock {\em Advances in Neural Information Processing Systems}, 36:53038--53075, 2023.

\bibitem[PMXA23]{panigrahi2023trainable}
Abhishek Panigrahi, Sadhika Malladi, Mengzhou Xia, and Sanjeev Arora.
\newblock Trainable transformer in transformer.
\newblock In {\em Fortieth International Conference on Machine Learning (ICML)}, 2023.

\bibitem[RSY{\etalchar{+}}21]{DBLP:conf/nips/RobinsonSYBJS21}
Joshua Robinson, Li~Sun, Ke~Yu, Kayhan Batmanghelich, Stefanie Jegelka, and Suvrit Sra.
\newblock Can contrastive learning avoid shortcut solutions?
\newblock In {\em Advances in Neural Information Processing Systems {NeurIPS} 34}, 2021.

\bibitem[RV]{StanfordLinAlg}
Tim Roughgarden and Gregory Valiant.
\newblock The singular value decomposition (svd) and low-rank matrix approximations.
\newblock \url{https://web.stanford.edu/class/cs168/l/l9.pdf}.
\newblock Accessed: 2024-12-21.

\bibitem[RV13]{RodittyW13}
Liam Roditty and Virginia {Vassilevska Williams}.
\newblock Fast approximation algorithms for the diameter and radius of sparse graphs.
\newblock In Dan Boneh, Tim Roughgarden, and Joan Feigenbaum, editors, {\em Symposium on Theory of Computing Conference, STOC}. {ACM}, 2013.

\bibitem[SFH{\etalchar{+}}24]{SanfordFHTKHPM24}
Clayton Sanford, Bahare Fatemi, Ethan Hall, Anton Tsitsulin, Mehran Kazemi, Jonathan Halcrow, Bryan Perozzi, and Vahab Mirrokni.
\newblock Understanding transformer reasoning capabilities via graph algorithms.
\newblock In {\em Advances in Neural Information Processing Systems}, 2024.

\bibitem[SHT23]{DBLP:conf/nips/SanfordHT23}
Clayton Sanford, Daniel~J. Hsu, and Matus Telgarsky.
\newblock Representational strengths and limitations of transformers.
\newblock In {\em Advances in Neural Information Processing Systems}, 2023.

\bibitem[SHT24]{Sanford0T24}
Clayton Sanford, Daniel Hsu, and Matus Telgarsky.
\newblock Transformers, parallel computation, and logarithmic depth.
\newblock In {\em Forty-first International Conference on Machine Learning, {ICML}}, 2024.

\bibitem[SY24]{sy:24}
Barna Saha and Christopher Ye.
\newblock {I}/{O} complexity of attention, or how optimal is {F}lash{A}ttention?
\newblock In {\em Proceedings of the 41st International Conference on Machine Learning}, 2024.

\bibitem[SYZ24]{song2024solving}
Zhao Song, Junze Yin, and Lichen Zhang.
\newblock Solving attention kernel regression problem via pre-conditioner.
\newblock In {\em International Conference on Artificial Intelligence and Statistics}, pages 208--216. PMLR, 2024.

\bibitem[SZ99]{DBLP:conf/focs/ShoshanZ99}
Avi Shoshan and Uri Zwick.
\newblock All pairs shortest paths in undirected graphs with integer weights.
\newblock In {\em Proceedings of the 40th Annual Symposium on Foundations of Computer Science (FOCS)}, pages 605--615. {IEEE} Computer Society, 1999.

\bibitem[VSP{\etalchar{+}}17]{DBLP:conf/nips/VaswaniSPUJGKP17}
Ashish Vaswani, Noam Shazeer, Niki Parmar, Jakob Uszkoreit, Llion Jones, Aidan~N. Gomez, Lukasz Kaiser, and Illia Polosukhin.
\newblock Attention is all you need.
\newblock In {\em Neural Information Processing Systems {NeurIPS}}, 2017.

\bibitem[WI20]{DBLP:conf/icml/WangIsola20}
Tongzhou Wang and Phillip Isola.
\newblock Understanding contrastive representation learning through alignment and uniformity on the hypersphere.
\newblock In {\em Proceedings of the 37th International Conference on Machine Learning, {ICML}}, volume 119 of {\em Proceedings of Machine Learning Research}, pages 9929--9939. {PMLR}, 2020.

\bibitem[Wil04]{williams2004ovc}
Ryan Williams.
\newblock A new algorithm for optimal constraint satisfaction and its implications.
\newblock In {\em Proceedings of the 31st International Colloquium on Automata, Languages and Programming (ICALP)}, 2004.

\bibitem[Wil18]{Williams18APSP}
R.~Ryan Williams.
\newblock Faster all-pairs shortest paths via circuit complexity.
\newblock {\em {SIAM} J. Comput.}, 47(5), 2018.

\bibitem[WL21]{DBLP:conf/cvpr/WangL21a}
Feng Wang and Huaping Liu.
\newblock Understanding the behaviour of contrastive loss.
\newblock In {\em {IEEE} Conference on Computer Vision and Pattern Recognition, {CVPR}}, pages 2495--2504. Computer Vision Foundation / {IEEE}, 2021.

\bibitem[XYL25]{temperature2}
Hao Xuan, Bokai Yang, and Xingyu Li.
\newblock Exploring the impact of temperature scaling in softmax for classification and adversarial robustness.
\newblock {\em CoRR}, abs/2502.20604, 2025.

\bibitem[Yao82]{Yao1982}
Andrew Chi-Chih Yao.
\newblock On constructing minimum spanning trees in k-dimensional spaces and related problems.
\newblock {\em Siam Journal on Computing}, 11(4):721--736, 1982.

\bibitem[YSB{\etalchar{+}}25]{sanfordDepthWidth}
Gilad Yehudai, Clayton Sanford, Maya Bechler{-}Speicher, Orr Fischer, Ran Gilad{-}Bachrach, and Amir Globerson.
\newblock Depth-width tradeoffs in algorithmic reasoning of graph tasks with transformers.
\newblock {\em CoRR}, abs/2503.01805, 2025.

\bibitem[ZGD{\etalchar{+}}20]{DBLP:conf/nips/ZaheerGDAAOPRWY20}
Manzil Zaheer, Guru Guruganesh, Kumar~Avinava Dubey, Joshua Ainslie, Chris Alberti, Santiago Onta{\~{n}}{\'{o}}n, Philip Pham, Anirudh Ravula, Qifan Wang, Li~Yang, and Amr Ahmed.
\newblock Big bird: Transformers for longer sequences.
\newblock In {\em Neural Information Processing Systems, {NeurIPS}}, 2020.

\bibitem[ZHDK23]{zandieh2023kdeformer}
Amir Zandieh, Insu Han, Majid Daliri, and Amin Karbasi.
\newblock Kdeformer: Accelerating transformers via kernel density estimation.
\newblock In {\em International Conference on Machine Learning}, pages 40605--40623. PMLR, 2023.

\bibitem[Zwi02]{DBLP:journals/jacm/Zwick02}
Uri Zwick.
\newblock All pairs shortest paths using bridging sets and rectangular matrix multiplication.
\newblock {\em J. {ACM}}, 49(3):289--317, 2002.

\end{thebibliography}

\end{document}